\documentclass[manuscript, screen]{jair}

\setcopyright{cc}
\copyrightyear{2025}
\acmDOI{10.1613/jair.1.xxxxx}

\JAIRAE{Insert JAIR AE Name}
\JAIRTrack{} 
\acmVolume{4}
\acmArticle{6}
\acmMonth{8}
\acmYear{2025}

\usepackage{amsmath}
\usepackage{comment}
\usepackage{enumitem}

\usepackage[noend,ruled,noline]{algorithm2e}
\usepackage{latexsym,nicefrac,eufrak,xspace}
\usepackage{subcaption}

\usepackage{adjustbox,soul}
\usepackage{multirow,siunitx}
\usepackage[graphicx]{realboxes} 
\usepackage{booktabs,tabularx}

\usepackage{pifont}

\usepackage[capitalise]{cleveref}

\newcommand{\pjs}[1]{\textcolor{blue}{\textsc{Pjs:} #1}}

\usepackage{tikz}
\usetikzlibrary{arrows,calc,decorations,fit,positioning,shapes}

\usepackage{stmaryrd}
\definecolor{tyellow1}{HTML}{FCE94F}
\definecolor{tyellow2}{HTML}{EDD400}
\definecolor{tyellow3}{HTML}{C4A000}
\definecolor{torange1}{HTML}{FCAF3E}
\definecolor{torange2}{HTML}{F57900}
\definecolor{torange3}{HTML}{C35C00}
\definecolor{tbrown1}{HTML}{E9B96E}
\definecolor{tbrown2}{HTML}{C17D11}
\definecolor{tbrown3}{HTML}{8F5902}
\definecolor{tgreen1}{HTML}{8AE234}
\definecolor{tgreen2}{HTML}{73D216}
\definecolor{tgreen3}{HTML}{4E9A06}
\definecolor{tblue1}{HTML}{729FCF}
\definecolor{tblue2}{HTML}{3465A4}
\definecolor{tblue3}{HTML}{204A87}
\definecolor{tpurple1}{HTML}{AD7FA8}
\definecolor{tpurple2}{HTML}{75507B}
\definecolor{tpurple3}{HTML}{5C3566}
\definecolor{tred1}{HTML}{EF2929}
\definecolor{tred2}{HTML}{CC0000}
\definecolor{tred3}{HTML}{A40000}
\definecolor{tlgray1}{HTML}{EEEEEC}
\definecolor{tlgray2}{HTML}{D3D7CF}
\definecolor{tlgray3}{HTML}{BABDB6}
\definecolor{tdgray1}{HTML}{888A85}
\definecolor{tdgray2}{HTML}{555753}
\definecolor{tdgray3}{HTML}{2E3436}

\definecolor{gray}{rgb}{.4,.4,.4}
\definecolor{midgrey}{rgb}{0.5,0.5,0.5}
\definecolor{middarkgrey}{rgb}{0.35,0.35,0.35}
\definecolor{darkgrey}{rgb}{0.3,0.3,0.3}
\definecolor{darkred}{rgb}{0.7,0.1,0.1}
\definecolor{midblue}{rgb}{0.2,0.2,0.7}
\definecolor{darkblue}{rgb}{0.1,0.1,0.5}
\definecolor{defseagreen}{cmyk}{0.69,0,0.50,0}

\definecolor{tyellow1}{HTML}{FCE94F}
\definecolor{tyellow2}{HTML}{EDD400}
\definecolor{tyellow3}{HTML}{C4A000}
\definecolor{torange1}{HTML}{FCAF3E}
\definecolor{torange2}{HTML}{F57900}
\definecolor{torange3}{HTML}{C35C00}
\definecolor{tbrown1}{HTML}{E9B96E}
\definecolor{tbrown2}{HTML}{C17D11}
\definecolor{tbrown3}{HTML}{8F5902}
\definecolor{tgreen1}{HTML}{8AE234}
\definecolor{tgreen2}{HTML}{73D216}
\definecolor{tgreen3}{HTML}{4E9A06}
\definecolor{tblue1}{HTML}{729FCF}
\definecolor{tblue2}{HTML}{3465A4}
\definecolor{tblue3}{HTML}{204A87}
\definecolor{tpurple1}{HTML}{AD7FA8}
\definecolor{tpurple2}{HTML}{75507B}
\definecolor{tpurple3}{HTML}{5C3566}
\definecolor{tred1}{HTML}{EF2929}
\definecolor{tred2}{HTML}{CC0000}
\definecolor{tred3}{HTML}{A40000}
\definecolor{tlgray1}{HTML}{EEEEEC}
\definecolor{tlgray2}{HTML}{D3D7CF}
\definecolor{tlgray3}{HTML}{BABDB6}
\definecolor{tdgray1}{HTML}{888A85}
\definecolor{tdgray2}{HTML}{555753}
\definecolor{tdgray3}{HTML}{2E3436}

\newtheorem{remark}{Remark}

\newcommand{\fml}[1]{{\mathcal{#1}}}
\newcommand{\mfrk}[1]{{\mathfrak{#1}}}
\newcommand{\set}[1]{\{ #1 \}}

\newcommand{\tn}[1]{\textnormal{#1}}

\newcommand{\mbf}[1]{\ensuremath\mathbf{#1}}
\newcommand{\mbb}[1]{\ensuremath\mathbb{#1}}
\newcommand{\msf}[1]{\ensuremath\mathsf{#1}}

\newcommand{\ddp}{\tn{D}^{\tn{P}}}

\newcommand{\bigland}{\ensuremath\bigwedge}

\newcommand{\cond}{d} 
\newcommand{\sep}{s}  

\newcommand{\bb}[1]{\ensuremath{[\![#1]\!]}}

\def\squareforqed{\hbox{\rlap{$\sqcap$}$\sqcup$}}
\def\qed{\ifmmode\squareforqed\else{\unskip\nobreak\hfil
\penalty50\hskip1em\null\nobreak\hfil\squareforqed
\parfillskip=0pt\finalhyphendemerits=0\endgraf}\fi}

\newcommand{\nocxp}[1]{}
\newcommand{\ignore}[1]{}

\DeclareMathOperator*{\nentails}{\nvDash}
\DeclareMathOperator*{\entails}{\vDash}
\DeclareMathOperator*{\lequiv}{\leftrightarrow}
\DeclareMathOperator*{\limply}{\rightarrow}

\AtBeginDocument{

}

\newcommand{\jnoteF}[1]{}

\newcommand{\znote}[1]{\medskip\noindent$\llbracket$\textcolor{darkred}{Yacine}: \emph{\textcolor{middarkgrey}{#1}}$\rrbracket$\medskip}

\SetAlgoNoEnd
\SetAlgoNoLine
\SetFillComment
\SetKwBlock{Let}{let}{}
\SetKwBlock{FBlock}{}{}
\SetKw{KwNot}{not\xspace}
\SetKw{KwAnd}{and\xspace}
\SetKw{KwOr}{or\xspace}
\SetKw{Break}{break\xspace}
\SetKwData{false}{{\small false}}
\SetKwData{true}{{\small true}}
\SetKwData{st}{\small{\sl st}}
\SetKwFunction{SAT}{SAT}
\SetKwFunction{Entails}{Entails}
\SetKwFunction{FindMHS}{FindMHS}
\SetKwFunction{Clauses}{Clausify}
\SetKwFunction{ExtractAXp}{ExtractAXp}
\SetKwFunction{ReportAXp}{ReportAXp}
\SetKwFunction{ExtractCXp}{ExtractCXp}
\SetKwFunction{ReportCXp}{ReportCXp}
\SetKwFunction{NegLits}{NegateLiteralsOf}
\SetKwFunction{PosLits}{UseLiteralsOf}
\SetKwBlock{Let}{let}{end}
\SetKwBlock{FBlock}{}{end}
\SetKwInput{KwVars}{Variables}
\SetKwInOut{Global}{Global}
\SetKwHangingKw{Algorithm}{Algorithm}
\SetKwHangingKw{Function}{function}
\SetKw{Func}{Function}
\SetKwBlock{Begin}{}{}

\graphicspath{{plots/}}
\DeclareGraphicsExtensions{.pdf,.png}

\DeclareMathAlphabet{\mathcal}{OMS}{cmsy}{m}{n}

\begin{document}

\title[Explanations for Tree Ensembles]{Rigorous Explanations for Tree Ensembles}

\author{Yacine Izza}
\authornote{Corresponding Author.}
\email{izza@comp.nus.edu.sg}
\orcid{TbD}
\affiliation{
  \institution{National University of Singapore}
  \country{Singapore}
}

\author{Alexey Ignatiev}
\email{alexey.ignatiev@monash.edu}
\orcid{TbD}
\affiliation{
  \institution{Monash University}
  \city{Melbourne}
  \country{Australia}
}

\author{Xuanxiang Huang}
\email{xuanxiang.huang@ntu.edu.sg}
\orcid{TbD}
\affiliation{
  \institution{Nanyang Technological University}
  \country{Singapore}
}

\author{Peter J.\ Stuckey}
\email{peter.stuckey@monash.edu}
\orcid{TbD}
\affiliation{
  \institution{Monash University}
  \city{Melbourne}
  \country{Australia}
}

\author{Joao Marques-Silva}
\email{joao.marques-silva@icrea.cat}
\orcid{TbD}
\affiliation{
  \institution{ICREA \& Univ. Lleida}
  \city{Lleida}
  \country{Spain}
}

\renewcommand{\shortauthors}{Izza et al.}


\begin{abstract}
  Tree ensembles (TEs) find a multitude of practical applications.
  They represent 
  one of the most general and accurate classes of machine learning
  methods. While they are typically quite concise in representation,
  their operation remains inscrutable to human decision makers.
  One solution to build trust in the operation of TEs is to
  automatically identify explanations for the predictions made.
  Evidently, we can only achieve trust using explanations, if those
  explanations are rigorous, that is truly reflect properties of the
  underlying predictor they explain
  %
  This paper investigates the computation of rigorously-defined,
  logically-sound explanations for the concrete case of two well-known
  examples of tree ensembles, namely random forests and boosted
  trees.
  (...) 
  TbD
\end{abstract}


\maketitle

\clearpage
\tableofcontents
\clearpage

\section{Introduction} \label{sec:intro}


%
%
%
%


The recent successes of Machine Learning (ML), coupled with the continuing expansion of ML-enabled applications, have further heightened the need for understanding and validating the decisions produced by modern AI systems. Despite unprecedented progress in predictive accuracy across countless domains, many of the most effective ML models operate as black boxes, offering no direct insight into the reasons underlying their predictions. This opacity constitutes a major obstacle to the widespread and responsible deployment of AI/ML technologies. Moreover, the increasing use of ML in high-risk and safety-critical settings---including healthcare, finance, autonomous systems, and public-sector decision-making---has motivated calls for rigorous, verifiable forms of explainability. Such concerns underpin recent expert recommendations for trustworthy AI, as well as emerging regulatory frameworks and proposals aimed at ensuring transparency and accountability in algorithmic decision-making (e.g., EU AI Act; OECD principles; national AI safety initiatives).

The rapid growth of research in Explainable AI (XAI) reflects this urgency. However, the dominant fraction of existing XAI work relies on heuristic or non-formal approaches, such as perturbation-based model-agnostic methods or saliency-based techniques~\cite{guestrin-kdd16b,lundberg-nips17,guestrin-aaai18,Zisserman-iclr24,Grad-CAM-ijcv20}. Although popular in practice, these approaches offer no formal guarantees with respect to the ML model being explained. Their explanations are often unstable, sensitive to noise, or inconsistent across equivalent models, and they may fail to capture the true semantics of the classifier. Consequently, their use in high-stakes contexts remains problematic, and recent studies have highlighted fundamental shortcomings of informal XAI methods~\cite{inms-corr19,ignatiev-ijcai20,msh-cacm24}.

In contrast, the emerging field of formal or logic-based XAI seeks to provide model-precise and semantically faithful explanations, grounded in rigorous logic-based definitions and computed through practically exact automated reasoners \cite{darwiche-ijcai18,inms-aaai19,ms-isola24}. Formal approaches enable guarantees of sufficiency, minimality, and correctness, thus ensuring that an explanation is consistent with the exact behaviour of the underlying ML model. Nevertheless, these benefits introduce nontrivial computational challenges: formal reasoning over complex classifiers may be computationally demanding, and careful encoding choices are essential to obtain scalable solutions.

Among the ML models where explainability is especially relevant, \emph{tree ensembles} (TEs)---including Random Forests (RFs), Gradient Boosted Trees (BTs), and widely used frameworks such as XGBoost---occupy a central role. TEs combine competitive accuracy with robustness, often outperforming deep neural networks in tabular data tasks. Yet despite the transparency of individual decision trees, the aggregation of many trees through majority or weighted voting yields predictive functions whose structure can be intricate and difficult to analyze. Recent work in formal XAI has begun addressing the explainability of tree ensembles, including abductive and constrastive explanations, feature necessity/relevancy in explanations, and inflated abductive/constrastive explanations~\cite{ignatiev-ijcai20,ims-ijcai21,iism-aaai22,hims-aaai23,hcmpms-tacas23,iisms-aaai24,iirmss-ijcai25}. 

\medskip
\noindent\textbf{Contributions.} 
This paper advances the state of the art in formal explainability of tree ensembles by introducing a unified logical framework that supports the explanation and verification of a broad class of ensemble architectures. Concretely, the main contributions are as follows:
\begin{enumerate}
    \item We propose a \emph{unified propositional encoding} capable of representing RFs with majority vote, RFs with weighted vote, and BT models such as XGBoost. This provides, to our knowledge, the first uniform logic-based framework that captures these different aggregation schemes within a common formalism.

    %
  \item Building on earlier MaxSAT-based work on boosted trees, we extend the MaxSAT approach~\cite{iism-aaai22} to Random Forests with weighted vote.\footnote{%
 The MaxSAT-based encoding for RFs with weighted voting is also documented in our recent CoRR report~\cite{iirmss-corr25}, which provides 
 an explicit timestamp for this contribution.} 
  Our encodings handle the linear aggregation constraints induced by non-uniform tree weights. Moreover, our approach supports the computation of smallest explanations w.r.t. cardinality size. 

    \item Building on the propositional encoding~\cite{ims-ijcai21}, we introduce formal encodings for checking \emph{fairness} and \emph{robustness} properties of RFs. These enable verification of sensitive-attribute fairness and adversarial vulnerability.

    \item We conduct an extensive experimental evaluation on a range of
      realistic tree-ensemble models. The results demonstrate that the
      proposed propositional reasoning based  approaches are scalable and efficient, frequently outperforming existing SMT-based methods, 
    thereby pushing the scalability frontier of logic-based XAI for tree ensembles. 
     
\end{enumerate}

Overall, this paper advances the state of formal XAI for tree ensembles along three axes: (i) it provides a single, coherent encoding framework covering several widely used ensemble semantics; (ii) it generalizes existing MaxSAT-based explanation techniques to more expressive aggregation schemes; and (iii) it shows that (minimum-size) explanation and verification tasks, including fairness and robustness, can be addressed within the same logic-based pipeline.

\section{Preliminaries} \label{sec:prelim}

\subsection{Logic Foundations}

\paragraph{SAT and MaxSAT}

We assume standard definitions for propositional satisfiability (SAT)
and maximum satisfiability (MaxSAT) solving~\cite{sat-handbook}.
A propositional formula is said to be in \emph{conjunctive}
\emph{normal form} (CNF) if it is a conjunction of clauses.
%
A \emph{clause} is a disjunction of literals.
%
A \emph{literal} is either a Boolean variable or its negation.
Whenever convenient, clauses are treated as sets of
literals.
%
%
A truth assignment maps each variable to $\{0,1\}$.
Given a truth assignment, a clause is satisfied if at least one of its
literals is assigned value 1; otherwise, it is falsified.
A formula is satisfied if all of its clauses are satisfied; otherwise,
it is falsified.
If there exists no assignment that satisfies a CNF formula, then the
formula is \emph{unsatisfiable}.

\paragraph{First Order Logic (FOL) and SMT}
When necessary, the paper will consider the restriction of FOL to
Satisfiability Modulo Theories (SMT). These represent restricted (and
often decidable) fragments of FOL~\cite{BarrettSST09}. All the
definitions above apply to SMT.
A SMT-solver reports whether a formula is satisfiable, and if so, may provide 
a model of this satisfaction. 
(Other possible features include dynamic addition and retraction of
constraints, production of proofs, and optimization.)

\paragraph{MaxSAT}
In the context of optimization over Boolean formulas,, Maximum Satisfiability 
(MaxSAT) is commonly used to model
problems where a set of constraints must be satisfied while 
optimizing a linear objective function.
A number of variants of MaxSAT
exist~\cite[Chapters~23~and~24]{sat-handbook}.
In this work, we adopt the perspective of \emph{Partial Weighted MaxSAT} 
formulated as a constrained optimization problem.
Formally, a formula $\phi$ consists of a set of \emph{hard}
constraints (clauses, in clausal form) $\fml{H}$, and 
a set of weighted literals $\fml{S}$ 
(also referred to as \emph{soft} clauses). 
%
%
The Partial Weighted MaxSAT problem consists in finding an assignment
that satisfies all the hard clauses $\fml{H}$ and maximizes the linear objective 
function ($\sum w_i \, l_i$) represented by $\fml{S}$. i.e.\ maximizes the total weight 
of satisfied literals $l_i$ of $\fml{S}$.
Note that in case $\phi$ is expressed in FOL, then optimizing $\fml{S}$ subject 
to constraints $\fml{H}$ is formulated as a \emph{MaxSMT} problem.

\paragraph{MUS, MCS \& MHS}
In the context of unsatisfiability formulas, an \emph{MUS} is the minimal 
subset of clauses of an unsatisfiable formula that causes inconsistency.\footnote{%
Informally, an MUS provides the minimal information that needs to be added 
to the background knowledge $\fml{H}$ to obtain an inconsistency}  
Let $\fml{H} \cup \fml{R} \entails \bot$ an inconsistent set of clauses.
A set $\fml{X} \subseteq \fml{R}$ is an MUS iff
$\fml{H} \cup \fml{X} \entails \bot$ and for every $\fml{X}' \subset \fml{X}$, we have
$\fml{H} \cup \fml{X}' \nentails \bot$.
Conversely, an \emph{MCS} is a minimal set of clauses to remove from an unsatisfiable 
formula to restore consistency. Formally, 
a set $\fml{Y} \subseteq \fml{R}$ is an MCS iff
$\fml{H} \cup (\fml{R} \setminus \fml{Y}) \nentails \bot$
and for every $\fml{Y}' \subset \fml{Y}$, we have
$\fml{H} \cup (\fml{R} \setminus \fml{Y}') \entails \bot$.
It is well-known that MCSes are minimal hitting sets (MHSes)
of MUSes and vice-versa~\cite{reiter-aij87,lozinskii-jetai03}.  
A set $\fml{X}$ is a MHS of a set 
of sets $\mbb{Y}$ if $\fml{X} \cap \fml{Y}_i \neq \emptyset, \forall \fml{Y}_i \in \mbb{Y}$ 
and no subset of $\fml{X}$ also satisfies this condition.

\subsection{Machine Learning Models}

\paragraph{Classification models.}
We consider a classification problem, characterized by a set of
features $\fml{F}=\{1,\ldots,m\}$, and by a set of classes
$\fml{K}=\{c_1,\ldots,c_K\}$.
Each feature $j\in\fml{F}$ is characterized by a domain $D_j$.  As a
result, feature space is defined as
$\mbb{F}={D_1}\times{D_2}\times\ldots\times{D_m}$.
A specific point in feature space represented by
$\mbf{v}=(v_1,\ldots,v_m)$
%
denotes an \emph{instance} (or an
\emph{example}).
Also, we use $\mbf{x}=(x_1,\ldots,x_m)$ to denote an arbitrary
point in feature space.
In general, when referring to the value of a feature $j\in\fml{F}$, we
will use a variable $x_j$, with $x_j$ taking values from $D_j$.
A classifier implements a \emph{total classification function}
$\kappa:\mbb{F}\to\fml{K}$.
For technical reasons, we also require $\kappa$ not to be a constant
function, i.e.\ there exists at least two points in feature space with
differing predictions.
An \emph{instance} denotes a pair $(\mbf{v}, c)$, where
$\mbf{v}\in\mbb{F}$ and $c\in\fml{K}$.
We represent a classifier by a tuple
$\fml{M}=(\fml{F},\mbb{F},\fml{K},\kappa)$.
Given the above, an \emph{explanation problem} is a tuple
$\fml{E} = (\fml{M},(\mbf{v},c))$, where $\kappa(\mbf{v}) = c$.

\paragraph{Regression models.}
For regression models, a model similar to the classification case is assumed. Instead 
of classes we now assume a set of values $\fml{V}$. Moreover, the regression 
function $\rho$ maps feature space to the set of values, $\rho:\mbb{F}\to\fml{V}$. 
Although the paper focuses solely on classification models, we could easily 
adopt a unified ML model with a prediction function, 
without explicitly distinguishing classification from regression~\cite{ms-isola24}.

\subsection{Formal Explainability} \label{ssec:fxai}

\paragraph{Interpretability \& explanations.}
Interpretability is generally accepted to be a subjective concept,
without a formal definition~\cite{lipton-cacm18}. In this paper we
measure interpretability in terms of the overall succinctness of the
information provided by an ML model to justify a given prediction.
%
%
%
Moreover, and building on earlier work, we equate explanations with
the so-called \emph{abductive explanations}
(AXps)~\cite{darwiche-ijcai18,inms-aaai19,inms-nips19,darwiche-ecai20,iims-corr20,inams-aiia20,barcelo-nips20,msgcin-nips20,msgcin-icml21,hiims-kr21,msi-aaai22,hiicams-aaai22},
i.e.\ subset-minimal sets of feature-value pairs that are sufficient
for the prediction.
More formally, given an instance $\mbf{v}\in\mbb{F}$, with prediction
$c\in\fml{K}$, i.e.\ $\kappa(\mbf{v})=c$, an AXp is a minimal subset
$\fml{X}\subseteq\fml{F}$ such that,
\begin{equation} \label{eq:axp}
\forall(\mbf{x}\in\mbb{F}).\bigwedge\nolimits_{i\in\fml{X}}(x_i=v_i)\limply(\kappa(\mbf{x})=c)
\end{equation}
Another name for an AXp is a \emph{prime implicant} (PI)
explanation~\cite{darwiche-ijcai18,darwiche-ecai20}.
For simplicity,
we will use abductive explanation and the acronym \emph{AXp}
interchangeably.

In a similar vein, we consider \emph{contrastive explanations}
(CXps)~\cite{miller-aij19,inams-aiia20}.
Contrastive explanation can be defined as a (subset-)minimal set of
feature-value pairs ($\fml{Y}\subseteq\fml{F}$) that suffice to
change the prediction if they are allowed to take \emph{some
arbitrary} value from their domain.
Formally and as suggested in~\cite{inams-aiia20}, a CXp is defined as
a minimal subset $\fml{Y}\subseteq\fml{F}$ such that,
\begin{equation} \label{eq:cxp}
\exists(\mbf{x}\in\mbb{F}).\bigwedge\nolimits_{i\not\in\fml{Y}}(x_i=v_i)\land(\kappa(\mbf{x})\not=c)
\end{equation}
(It is possible and simple to adapt the definition to target a
specific class $c'\not=c$).
Moreover, building on the seminal work of R.~Reiter
in~\cite{reiter-aij87}, recent work demonstrated a minimal hitting set
relationship between AXps and CXps~\cite{inams-aiia20}, namely each
AXp is a minimal hitting set (MHS) of the set of CXps and vice-versa.
%
%
%

Illustrative examples of abductive and contrastive explanations 
for tree ensemble classifiers are shown in the next section.

\section{Tree Ensembles}
Decision
trees~\cite{rivest-ipl76,breiman-bk84,quinlan-ml86,quinlan-book93} are
one of the oldest forms of ML model, and are very widely used, but
they are known to suffer from bias if the tree is small, and tend to
overfit the data if the tree is large.
They are commonly deemed to have a significant advantage over most ML
models in that their decisions appear easy to explain to a human,
although~\citet{iims-corr20} show that this does not hold in
general.

In order to overcome the weaknesses of decision trees, \emph{tree
ensembles} were introduced, which generate many decision trees and
rely on aggregating their decisions to come to an overall decision.
Two popular ensemble methods are \emph{random forests}
(RFs)~\cite{Breiman01} and \emph{gradient boosted trees}
(BTs)~\cite{friedman-tas01}.

Therefore, we proposes a unified representation for tree ensembles
(TEs). The unified representation will be instrumental when relating
concrete instantiations of TEs.

\subsection{General TE model}
We propose a general model for selecting a class in a tree ensemble
(TE). As shown below, the proposed model serves to represent boosted
trees~\cite{friedman-tas01}, random forests with majority
voting~\cite{Breiman01}, and random forests with weighted
voting (Scikit-learn~\cite{scikitlearn}).

A decision tree $T$ on features $\fml{F}$ for classes $\fml{K}$
is a binary rooted-tree whose internal nodes are labeled 
by \emph{split conditions}   
which are predicates of the form $x_i < d$ for some feature 
$i \in \fml{F}$ and $d \in \mbb{D}_i$, and leaf nodes labelled by a class
$c \in \fml{K}$. 
A path $R_\mfrk{n}$ to a leaf node $\mfrk{n}$ can be considered
as the set of split conditions 
of nodes 
visited on the path from the root to $\mfrk{n}$.

A tree ensemble $\mfrk{T}$ has $n$ decision trees, $T_1,\ldots,T_n$. 
For each tree $T_t$, let $\mathcal{L}(T_t)$ denote the set of its leaf nodes.
Each leaf $l \in \mathcal{L}(T_t)$ induces a root-to-leaf path $R_l$ 
and is associated with a class-label mapping $cl : \mathcal{L}(T_t) \rightarrow \mathcal{K}$, where 
$cl(l)$ is the class predicted by $l$,
and $w_l \in \mathbb{R}$ denotes its associated weight.
Given a point $\mbf{v}$ in feature space, and for each decision tree
$T_t$, with $1\le{t}\le{n}$, there is a single path $R_{k_{\mbf{v},t}}$
that is consistent with $\mbf{v}$ in tree $T_t$.
The index $k_{\mbf{v},t}$
denotes which path of $T_t$ is consistent with $\mbf{v}$.

For decision tree $T_t$, and given $\mbf{v}$, the picked class is 
denoted $cl(k_{\mbf{v},t})$  and the picked weight $w_{k_{\mbf{v},t}}$.
For each class $c \in\fml{K}$, the class weight will be computed by,
\begin{equation} \label{eq:wght}
W(c,\mbf{v})=\sum\nolimits_{t=1}^{n} \textbf{if}~ c=cl(k_{{\mbf{v}},t})
\textbf{~then~} w_{k_{{\mbf{v}},t}} \textbf{~else~} 0
\end{equation}
Finally, the predicted class by $\mfrk{T}$ is given by,
$$
\kappa(\mbf{v}) = \arg\max\nolimits_{c\in\fml{K}}\{W(c,\mbf{v})\} 
$$

\begin{figure}[ht]
  \centering
    \resizebox{\textwidth}{!}{
\tikzstyle{box} = [draw=black!90, thick, rectangle, rounded corners,
                     inner sep=10pt, inner ysep=20pt, dotted
                  ]
\tikzstyle{title} = [draw=black!90, fill=black!5, semithick, top color=white,
                     bottom color = black!5, text=black!90, rectangle,
                     font=\small, inner sep=2pt, minimum height=1.3em,
                     top color=tyellow2!27, bottom color=tyellow2!27
                    ]
\tikzstyle{feature} = [rectangle,font=\scriptsize,rounded corners=1mm,thick,%
                       draw=black!80, top color=tblue2!20,bottom color=tblue2!25,%
                       draw, minimum height=1.1em, text centered,%
                       inner sep=2pt%
                      ]
\tikzstyle{pscore} = [rectangle,font=\scriptsize,rounded corners=1mm,thick,%
                     draw=black!80, top color=tgreen3!20,bottom color=tgreen3!27,%
                     draw, minimum height=1.1em, text centered,%
                     inner sep=2pt%
                    ]
\tikzstyle{nscore} = [rectangle,font=\scriptsize,rounded corners=1mm,thick,%
                     draw=black!80, top color=tred2!20,bottom color=tred2!25,%
                     draw, minimum height=1.1em, text centered,%
                     inner sep=2pt%
                    ]
\tikzstyle{score} = [rectangle,font=\scriptsize,rounded corners=1mm,thick,%
                     draw=black!80, top color=tgreen3!10,bottom color=tgreen3!15,%
                     draw, minimum height=1.1em, text centered,%
                     inner sep=2pt%
                    ]  
\begin{adjustbox}{center}
\setlength{\tabcolsep}{12pt}
\def\arraystretch{3}
\begin{tabular}{ccc}
    \begin{tikzpicture}[node distance = 4.0em, auto]
        \node [box] (box) {%
        \begin{minipage}[t!]{0.154\textwidth}
            \vspace{0.9cm}\hspace{1.5cm}
        \end{minipage}
        };
        \node[title] at (box.north) {$\text{T}_\text{1}$ (setosa)};

        \node [feature] (feat) at (0, 0.25) {petal.length $<$ 2.45?};

        \node [score, below left = 0.9em and -1.4em of feat] (pos)  { 0.42762};
        \node [score, below right = 0.9em and -1.6em of feat] (neg) {-0.21853};

        \draw [->,thick,black!80] (feat) to[] node[above, pos=1.2, font=\scriptsize] {yes} (pos.north);
        \draw [->,thick,black!80] (feat) to[] node[above, pos=1.1, font=\scriptsize] { no} (neg.north);
    \end{tikzpicture}
    &
    \begin{tikzpicture}[node distance = 4.0em, auto]
        \node [box] (box) {%
        \begin{minipage}[t!]{0.264\textwidth}
            \vspace{0.9cm}\hspace{1.5cm}
        \end{minipage}
        };
        \node[title] at (box.north) {$\text{T}_\text{2}$ (versicolor)};

        \node [feature] (feat1) at (0.04, 0.57) {sepal.width $<$ 2.95?};
        \node [feature, below left  = 0.8em and -2.em of feat1] (feat2) {petal.width $<$ 1.7?};
        \node [feature, below right = 0.8em and -2.em of feat1] (feat3) {petal.length $<$ 3?};

        \node [score, below left  = 0.8em and -2.3em of feat2] (pos1) {0.36131};
        \node [score, below right  = 0.8em and -2.5em of feat2] (neg1) {-0.18947};
        \node [score, below left = 0.8em and -2.4em of feat3] (pos2) {-0.21356};
        \node [score, below right = 0.8em and -2.4em of feat3] (neg2) {-0.03830};

        \draw [->,thick,black!80] (feat1) to[] node[above, pos=1.2, font=\scriptsize] {yes} (feat2.north);
        \draw [->,thick,black!80] (feat1) to[] node[above, pos=1.1, font=\scriptsize] { no} (feat3.north);
        \draw [->,thick,black!80] (feat2) to[] node[above, pos=1.2, font=\scriptsize] {yes} (pos1.north);
        \draw [->,thick,black!80] (feat2) to[] node[above, pos=1.1, font=\scriptsize] { no} (neg1.north);
        \draw [->,thick,black!80] (feat3) to[] node[above, pos=1.2, font=\scriptsize] {yes} (pos2.north);
        \draw [->,thick,black!80] (feat3) to[] node[above, pos=1.1, font=\scriptsize] { no} (neg2.north);
    \end{tikzpicture}
    &
    \begin{tikzpicture}[node distance = 4.0em, auto]
        \node [box] (box) {%
        \begin{minipage}[t!]{0.215\textwidth}
            \vspace{0.9cm}\hspace{1.5cm}
        \end{minipage}
        };
        \node[title] at (box.north) {$\text{T}_\text{3}$ (virginica)};

        \node [feature] (feat1) at (-0.375, 0.57) {petal.length $<$ 4.75?};
        \node [feature, below right = 0.8em and -2.2em of feat1] (feat2) {petal.width $<$ 1.7?};

        \node [score, below left  = 0.8em and -1.2em of feat1] (pos1) {-0.21869};
        \node [score, below left = 0.8em and -2.4em of feat2] (pos2) {0.08182};
        \node [score, below right = 0.8em and -2.4em of feat2] (neg) {0.42282};

        \draw [->,thick,black!80] (feat1) to[] node[above, pos=1.2, font=\scriptsize] {yes} (pos1.north);
        \draw [->,thick,black!80] (feat1) to[] node[above, pos=1.1, font=\scriptsize] { no} (feat2.north);
        \draw [->,thick,black!80] (feat2) to[] node[above, pos=1.2, font=\scriptsize] {yes} (pos2.north);
        \draw [->,thick,black!80] (feat2) to[] node[above, pos=1.1, font=\scriptsize] { no} (neg.north);
    \end{tikzpicture}
    \\
    \begin{tikzpicture}[node distance = 4.0em, auto]
        \node [box] (box) {%
        \begin{minipage}[t!]{0.154\textwidth}
            \vspace{0.9cm}\hspace{1.5cm}
        \end{minipage}
        };
        \node[title] at (box.north) {$\text{T}_\text{4}$ (setosa)};

        \node [feature] (feat) at (0, 0.25) {petal.length $<$ 2.45?};

        \node [score, below left = 0.9em and -1.4em of feat] (pos) {0.29522};
        \node [score, below right = 0.9em and -1.6em of feat] (neg) {-0.19674};

        \draw [->,thick,black!80] (feat) to[] node[above, pos=1.2, font=\scriptsize] {yes} (pos.north);
        \draw [->,thick,black!80] (feat) to[] node[above, pos=1.1, font=\scriptsize] { no} (neg.north);
    \end{tikzpicture}
    &
    \begin{tikzpicture}[node distance = 4.0em, auto]
        \node [box] (box) {%
        \begin{minipage}[t!]{0.275\textwidth}
            \vspace{0.9cm}\hspace{1.5cm}
        \end{minipage}
        };
        \node[title] at (box.north) {$\text{T}_\text{5}$ (versicolor)};

        \node [feature] (feat1) at (0.125, 0.57) {sepal.width $<$ 2.95?};
        \node [feature, below left  = 0.8em and -2.em of feat1] (feat2) {petal.length $<$ 4.85?};
        \node [feature, below right = 0.8em and -2.em of feat1] (feat3) {petal.length $<$ 3?};

        \node [score, below left  = 0.8em and -2.3em of feat2] (pos1) {0.27994};
        \node [score, below right  = 0.8em and -2.5em of feat2] (neg1) {-0.11330};
        \node [score, below left = 0.8em and -2.4em of feat3] (pos2) {-0.18999};
        \node [score, below right = 0.8em and -2.4em of feat3] (neg2) {-0.02829};

        \draw [->,thick,black!80] (feat1) to[] node[above, pos=1.2, font=\scriptsize] {yes} (feat2.north);
        \draw [->,thick,black!80] (feat1) to[] node[above, pos=1.1, font=\scriptsize] { no} (feat3.north);
        \draw [->,thick,black!80] (feat2) to[] node[above, pos=1.2, font=\scriptsize] {yes} (pos1.north);
        \draw [->,thick,black!80] (feat2) to[] node[above, pos=1.1, font=\scriptsize] { no} (neg1.north);
        \draw [->,thick,black!80] (feat3) to[] node[above, pos=1.2, font=\scriptsize] {yes} (pos2.north);
        \draw [->,thick,black!80] (feat3) to[] node[above, pos=1.1, font=\scriptsize] { no} (neg2.north);
    \end{tikzpicture}
    &
    \begin{tikzpicture}[node distance = 4.0em, auto]
        \node [box] (box) {%
        \begin{minipage}[t!]{0.215\textwidth}
            \vspace{0.9cm}\hspace{1.5cm}
        \end{minipage}
        };
        \node[title] at (box.north) {$\text{T}_\text{6}$ (virginica)};

        \node [feature] (feat1) at (-0.375, 0.57) {petal.length $<$ 4.75?};
        \node [feature, below right = 0.8em and -2.2em of feat1] (feat2) {petal.width $<$ 1.7?};

        \node [score, below left  = 0.8em and -1.2em of feat1] (pos1) {-0.19776};
        \node [score, below left = 0.8em and -2.4em of feat2] (pos2) {0.08067};
        \node [score, below right = 0.8em and -2.4em of feat2] (neg) {0.30170};

        \draw [->,thick,black!80] (feat1) to[] node[above, pos=1.2, font=\scriptsize] {yes} (pos1.north);
        \draw [->,thick,black!80] (feat1) to[] node[above, pos=1.1, font=\scriptsize] { no} (feat2.north);
        \draw [->,thick,black!80] (feat2) to[] node[above, pos=1.2, font=\scriptsize] {yes} (pos2.north);
        \draw [->,thick,black!80] (feat2) to[] node[above, pos=1.1, font=\scriptsize] { no} (neg.north);
    \end{tikzpicture}
\end{tabular}
\end{adjustbox}

}
    \caption{\small Example of a simple boosted tree
      model~\protect\cite{guestrin-kdd16a} generated by XGBoost on the
     well-known Iris classification dataset. Here, the tree
      ensemble has 2 trees for each of the 3 classes with the depth of
      each tree being at most 2.}
    \label{fig:BT}
\end{figure}
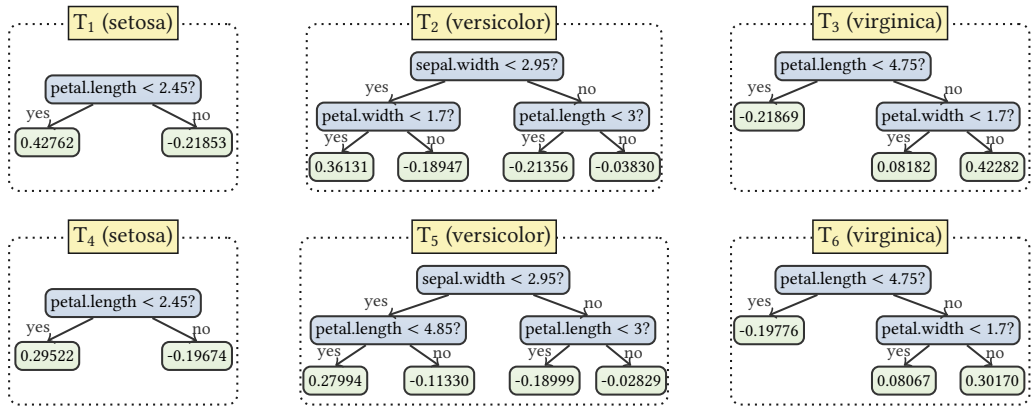

\subsection{Boosted trees}
Here, we focus on BT models trained with the use of the XGBoost
algorithm~\cite{guestrin-kdd16a}.
In general, BTs are a tree ensemble that relies on building a sequence
of small decision trees.
In each stage, new trees are built to compensate for the inaccuracies
of the already constructed tree ensemble.
Therefore, each tree is pre-assigned a class $c_i$ and 
contributes to the total class weight $W(c_i,\mbf{x})$; the weights of each class 
are summed up over all trees, and the class with the largest weight is
picked.

\begin{example} \label{ex:bt}
  An example of a BT trained by XGBoost for the Iris dataset is shown
  in~\cref{fig:BT}.
  Each class $i\in[3]$ is represented in the BT by 2 trees $T_{3j+i}$,
  $j\in\{0, 1\}$.
  This way, given an input instance $(\text{\em
  sepal.length}=\text{\em 5.1}) \land (\text{\em
  sepal.width}=\text{\em 3.5}) \land (\text{\em petal.length}=\text{\em
  1.4}) \land (\text{\em petal.width}=\text{\em 0.2})$, the scores of
  classes 1--3 are $w_1=T_1+T_4=0.72284$,
  $w_2=T_2+T_5=-0.40355$, and
  $w_3=T_3+T_6=-0.41645$.
  Hence, the model predicts class~1 (``Setosa'') as it has the highest
  score.
  The path taken by this instance in tree $T_2$ is defined as $\{
  \text{\em sepal.width} \geq \text{\em 2.95}, \text{\em petal.length} <
  \text{\em 3} \}$.
\end{example}

\begin{example} \label{ex:axp}
  Consider the BT from Example~\ref{ex:bt} and the same data instance.
  By examining the model, one can observe that it suffices to specify
  $\text{\em petal.length}=\text{\em 1.4}$
  to guarantee that the prediction is ``Setosa''.
  The weight for ``Setosa'' will be 0.72284 as before, and the maximal
  weight for ``Versicolor'' will be 0.36131 + 0.27994 = 0.64125, and
  the weight for ``Virginica'' will be  -0.41645 as before.
  Therefore, an abductive explanation for this prediction includes a
  single feature.
  Furthermore, it is the only AXp for the given instance.
  \qed
\end{example}

To embed a boosted tree model into our general tree ensemble 
framework, we proceed as follows.
Any tree in the BT will be mapped to a tree in the TE model.
Each tree path weight in the BT will be mapped to the the same tree
path weight in the TE model. Finally, for each decision tree in the
group of trees assigned to class $c$ in the BT, any path in the
corresponding tree in the TE model will output class $c$.

Formally, let a BT model be given by a set of trees 
\[
\mathfrak{B} \;=\; \bigcup_{i=1}^{K} \{\, T_{Kj+i}  \,\}_{j = 0}^{\,n-1}
\]
where each tree $T_{Kj+i}$ is associated with a class $c_i\in\mathcal{K}$, and 
each leaf $l \in \mathcal{L}(T_{Kj+i})$ carries a real-valued 
score $w_l \in \mathbb{R}$.

We define an equivalent TE $\mathfrak{T}$ as follows. 
For each $T_{Kj+i} \in \mathfrak{B}$, we introduce a corresponding 
tree $T'_{Kj+i} \in \mathfrak{T}$ with identical structure and leaves. For every 
leaf $l\in \mathcal{L}(T_{Kj+i})$ we assign: 
\[
cl(l) \;:=\; c_i
\qquad\text{and}\qquad
w_l^{\mathfrak{T}} \;:=\; w_l^{\mathfrak{B}}
\]

For any input instance $\mathbf{v}$ and class $c_i \in \mathcal{K}$, the aggregated score
induced by $\mathfrak{T}$ is given by
\[
    W(c_i,\mbf{v})=\sum\nolimits_{j=0}^{n-1}  w_{k_{{\mbf{v}}, Kj+i}} 
\]

This construction preserves the prediction semantics of the original BT model,
i.e.,
\[
\arg\max\nolimits_{c_i \in \mathcal{K}}   W_{\mathfrak{T}}(c_i, \mathbf{v})
\;=\;
\arg\max\nolimits_{c_i \in \mathcal{K}}  W_{\mathfrak{B}}(c_i, \mathbf{v})
\]

\subsection{Random forests with majority voting}
%
In a random forest with majority voting (RFmv)~\cite{Breiman01}, 
and given a point in feature space, each decision tree picks a class;
the final prediction corresponds to the class with the maximal vote
 count among the ensemble of trees.
%

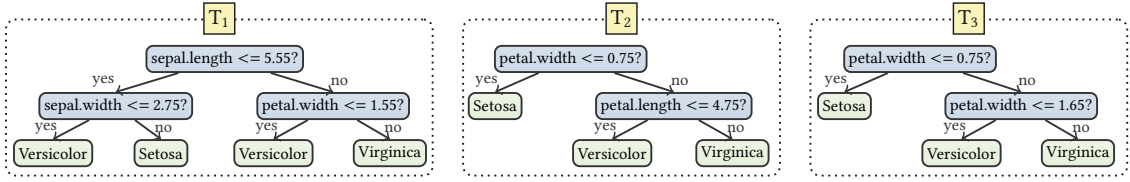
\begin{figure}[ht]
  \centering
    \resizebox{\textwidth}{!}{
\tikzstyle{box} = [draw=black!90, thick, rectangle, rounded corners,
                     inner sep=10pt, inner ysep=20pt, dotted
                  ]
\tikzstyle{title} = [draw=black!90, fill=black!5, semithick, top color=white,
                     bottom color = black!5, text=black!90, rectangle,
                     font=\small, inner sep=2pt, minimum height=1.3em,
                     top color=tyellow2!27, bottom color=tyellow2!27
                    ]
\tikzstyle{feature} = [rectangle,font=\scriptsize,rounded corners=1mm,thick,%
                       draw=black!80, top color=tblue2!20,bottom color=tblue2!25,%
                       draw, minimum height=1.1em, text centered,%
                       inner sep=2pt%
                      ]
\tikzstyle{score} = [rectangle,font=\scriptsize,rounded corners=1mm,thick,%
                     draw=black!80, top color=tgreen3!10,bottom color=tgreen3!15,%
                     draw, minimum height=1.1em, text centered,%
                     inner sep=2pt%
                    ]
\begin{adjustbox}{max width=\textwidth}
\setlength{\tabcolsep}{6pt}
\def\arraystretch{3}
\begin{tabular}{ccc}
    \begin{tikzpicture}[node distance = 4.0em, auto]
        \node [box] (box) {%
        \begin{minipage}[t!]{0.352\textwidth}
            \vspace{0.9cm}\hspace{1.5cm}
        \end{minipage}
        };
        \node[title] at (box.north) {$\text{T}_\text{1}$};

        \node [feature] (feat1) at (0.08, 0.57) {sepal.length $<=$ 5.55?};
        \node [feature, below left  = 0.8em and -2.em of feat1] (feat2) {sepal.width $<=$ 2.75?};
        \node [feature, below right = 0.8em and -2.em of feat1] (feat3) {petal.width $<=$ 1.55?};

        \node [score, below left  = 0.8em and -2.3em of feat2] (pos1) {Versicolor};
        \node [score, below right  = 0.8em and -2.5em of feat2] (neg1) {Setosa};
        \node [score, below left = 0.8em and -2.4em of feat3] (pos2) {Versicolor};
        \node [score, below right = 0.8em and -2.4em of feat3] (neg2) {Virginica};

        \draw [->,thick,black!80] (feat1) to[] node[above, pos=1.2, font=\scriptsize] {yes} (feat2.north);
        \draw [->,thick,black!80] (feat1) to[] node[above, pos=1.1, font=\scriptsize] { no} (feat3.north);
        \draw [->,thick,black!80] (feat2) to[] node[above, pos=1.2, font=\scriptsize] {yes} (pos1.north);
        \draw [->,thick,black!80] (feat2) to[] node[above, pos=1.1, font=\scriptsize] { no} (neg1.north);
        \draw [->,thick,black!80] (feat3) to[] node[above, pos=1.2, font=\scriptsize] {yes} (pos2.north);
        \draw [->,thick,black!80] (feat3) to[] node[above, pos=1.1, font=\scriptsize] { no} (neg2.north);
    \end{tikzpicture}
    &
    \begin{tikzpicture}[node distance = 4.0em, auto]
        \node [box] (box) {%
        \begin{minipage}[t!]{0.25\textwidth}
            \vspace{0.9cm}\hspace{1.5cm}
        \end{minipage}
        };
        \node[title] at (box.north) {$\text{T}_\text{2}$};

        \node [feature] (feat1) at (-0.75, 0.57) {petal.width $<=$ 0.75?};
        \node [feature, below right = 0.8em and -2.2em of feat1] (feat2) {petal.length $<=$ 4.75?};

        \node [score, below left  = 0.8em and -1.2em of feat1] (pos1) {Setosa};
        \node [score, below left = 0.8em and -2.4em of feat2] (pos2) {Versicolor};
        \node [score, below right = 0.8em and -2.4em of feat2] (neg) {Virginica};

        \draw [->,thick,black!80] (feat1) to[] node[above, pos=1.2, font=\scriptsize] {yes} (pos1.north);
        \draw [->,thick,black!80] (feat1) to[] node[above, pos=1.1, font=\scriptsize] { no} (feat2.north);
        \draw [->,thick,black!80] (feat2) to[] node[above, pos=1.2, font=\scriptsize] {yes} (pos2.north);
        \draw [->,thick,black!80] (feat2) to[] node[above, pos=1.1, font=\scriptsize] { no} (neg.north);
    \end{tikzpicture}
    &
    \begin{tikzpicture}[node distance = 4.0em, auto]
        \node [box] (box) {%
        \begin{minipage}[t!]{0.25\textwidth}
            \vspace{0.9cm}\hspace{1.5cm}
        \end{minipage}
        };
        \node[title] at (box.north) {$\text{T}_\text{3}$};

        \node [feature] (feat1) at (-0.72, 0.57) {petal.width $<=$ 0.75?};
        \node [feature, below right = 0.8em and -2.2em of feat1] (feat2) {petal.width $<=$ 1.65?};

        \node [score, below left  = 0.8em and -1.2em of feat1] (pos1) {Setosa};
        \node [score, below left = 0.8em and -2.4em of feat2] (pos2) {Versicolor};
        \node [score, below right = 0.8em and -2.4em of feat2] (neg) {Virginica};

        \draw [->,thick,black!80] (feat1) to[] node[above, pos=1.2, font=\scriptsize] {yes} (pos1.north);
        \draw [->,thick,black!80] (feat1) to[] node[above, pos=1.1, font=\scriptsize] { no} (feat2.north);
        \draw [->,thick,black!80] (feat2) to[] node[above, pos=1.2, font=\scriptsize] {yes} (pos2.north);
        \draw [->,thick,black!80] (feat2) to[] node[above, pos=1.1, font=\scriptsize] { no} (neg.north);
    \end{tikzpicture} %
  \end{tabular}
\end{adjustbox}}
    \caption{
        Example of a simple RFmv trained using Scikit-learn with 3 trees, on {\it Iris} dataset. 
        Here, the tree ensemble has 3 trees  with the depth of each tree being at most 2.  
        In the running examples, classes ``Setosa'', ``Versicolor'' and  ``Virginica'' are 
        denoted, resp.\  class $c_1$, $c_2$ and $c_3$, and features {\it sepal.length},  
        {\it sepal.width}, {\it petal.length} and  {\it petal.width} are ordered, resp., as 
        features 1--4.    
    }
    \label{fig:RF(mv)}
\end{figure}

\begin{example} \label{ex:RF(mv)}
  \cref{fig:RF(mv)} shows an example of a RFmv of 3 trees, trained 
  with Scikit-learn on {\it Iris} dataset. 
  Consider a sample $\mbf{v} = (6.0, 3.5, 1.4, 0.2)$ 
  then the counts of votes for classes 1--3 are, resp.,
  $W(c_1,\mbf{v}) = 2$ (votes of T$_2$ and T$_3$), $W(c_2,\mbf{v}) =1$ 
  (vote of T$_1$),   and  $W(c_3,\mbf{v}) = 0$. 
  Hence, the predicted class is $c_1$ (``Setosa'') as it has the highest
  score.
\end{example}

To represent an RF with majority voting in our general TE framework, we
proceed as follows. The weight of every path in each tree is $1$.
Moreover, the class output by each tree path in the TE is the class
predicted by the corresponding tree path in the RFmv.

Formally, let an RFmv model be given by a set of trees 
$ \mathfrak{F} \;=\;  \{\, T_t  \}_{t=1}^{\,n} $  
where each leaf $l \in \mathcal{L}(T_t)$ of a  
 tree $T_t$ is associated with a class $cl_{\mathfrak{F}}(l) = c_i, \, c_i\in\mathcal{K}$.
 
We define an equivalent TE $\mathfrak{T}$ as follows. 
For each $T_t \in \mathfrak{F}$, we introduce a corresponding 
tree $T'_t \in \mathfrak{T}$ with identical structure and leaf set. 
For every leaf $l \in \mathcal{L}(T_t)$ we assign:
\[
cl(l) \;:=\; cl_{\mathfrak{F}}(l)
\qquad\text{and}\qquad
w_l \;:=\; 1
\]

The aggregated class score is computed in the same way as in~\cref{eq:wght}.

\subsection{Random forests with weighted voting}
%
In an RF with weighted voting (RFwv)~\cite{scikitlearn},\footnote{%
The presented RFwv definition follows the random forest implementation 
provided by Scikit-learn} and given a point in feature space,
each tree picks a weight for a chosen selected class; the final picked
class is the class with the largest weight.

As a result, for a given input sample $\mbf{v}$, each tree $T_t$ produces 
a class probability estimate $P_{T_t}(c \mid \mbf{v})$ for class $c\in\fml{K}$.
The class weight for each $c \in \fml{K}$ is obtained by averaging
the class probability estimates across all trees:

\[
W(c, \mbf{v}) = \frac{1}{n} \sum_{t=1}^{n} P_{T_t}(c \mid \mbf{v}), \quad c \in \fml{K}
\]


\begin{figure}[ht]
   \centering
    \resizebox{\textwidth}{!}{
\tikzstyle{box} = [draw=black!90, thick, rectangle, rounded corners,
                     inner sep=10pt, inner ysep=20pt, dotted
                  ]
\tikzstyle{title} = [draw=black!90, fill=black!5, semithick, top color=white,
                     bottom color = black!5, text=black!90, rectangle,
                     font=\small, inner sep=2pt, minimum height=1.3em,
                     top color=tyellow2!27, bottom color=tyellow2!27
                    ]
\tikzstyle{feature} = [rectangle,font=\scriptsize,rounded corners=1mm,thick,%
                       draw=black!80, top color=tblue2!20,bottom color=tblue2!25,%
                       draw, minimum height=1.1em, text centered,%
                       inner sep=2pt%
                      ]
\tikzstyle{pscore} = [rectangle,font=\scriptsize,rounded corners=1mm,thick,%
                     draw=black!80, top color=tgreen3!20,bottom color=tgreen3!27,%
                     draw, minimum height=1.1em, text centered,%
                     inner sep=2pt%
                    ]
\tikzstyle{scores} = [rectangle,font=\scriptsize,rounded corners=1mm,thick,%
                     draw=black!80, top color=tgreen3!10,bottom color=tgreen3!15,%
                     draw, minimum height=1.1em, text centered,%
                     inner sep=2pt%
                    ]
\tikzstyle{nscore} = [rectangle,font=\scriptsize,rounded corners=1mm,thick,%
                     draw=black!80, top color=tred2!20,bottom color=tred2!25,%
                     draw, minimum height=1.1em, text centered,%
                     inner sep=2pt%
                    ]

\begin{adjustbox}{center}
\setlength{\tabcolsep}{12pt}
\def\arraystretch{3}
\begin{tabular}{cc}
   \multicolumn{2}{c}{
    \begin{tikzpicture}[node distance = 4.0em, auto]
        \node [box] (box) {%
        \begin{minipage}[t!]{0.542\textwidth}
            \vspace{1.4cm}\hspace{1.5cm}
        \end{minipage}
        };
        \node[title] at (box.north) {$\text{T}_\text{1} $};

        \node [feature] (feat1) at (0.04, 0.9) {sepal.length $<=$ 5.55?};
        \node [feature, below left  = 0.8em and -0.2em of feat1] (feat2) {sepal.width $<=$ 2.75?};
        \node [feature, below right = 0.8em and -0.1em of feat1] (feat3) {petal.width $<=$ 1.55?};

        \node [scores, below left  = 0.8em and -2.3em of feat2, align=left] (pos1)  { Setosa: 0.307 \\  {\bf Versicolor}: {\bf 0.615} \\ Virginica: 0.076};
        \node [scores, below right  = 0.8em and -2.5em of feat2, align=left] (neg1) { {\bf Setosa}: {\bf 1.0} \\  Versicolor: 0.0 \\  Virginica: 0.0};
        \node [scores, below left = 0.8em and -2.4em of feat3, align=left] (pos2) { Setosa: 0.093 \\  {\bf Versicolor}: {\bf 0.843} \\  Virginica: 0.062};
        \node [scores, below right = 0.8em and -2.4em of feat3, align=left] (neg2) { Setosa: 0.0 \\  Versicolor: 0.047 \\   {\bf Virginica}: {\bf 0.952}};

        \draw [->,thick,black!80] (feat1) to[] node[above, pos=1.2, font=\scriptsize] {yes} (feat2.north);
        \draw [->,thick,black!80] (feat1) to[] node[above, pos=1.1, font=\scriptsize] { no} (feat3.north);
        \draw [->,thick,black!80] (feat2) to[] node[above, pos=1.2, font=\scriptsize] {yes} (pos1.north);
        \draw [->,thick,black!80] (feat2) to[] node[above, pos=1.1, font=\scriptsize] { no} (neg1.north);
        \draw [->,thick,black!80] (feat3) to[] node[above, pos=1.2, font=\scriptsize] {yes} (pos2.north);
        \draw [->,thick,black!80] (feat3) to[] node[above, pos=1.1, font=\scriptsize] { no} (neg2.north);
    \end{tikzpicture}
    } %
     \\
    \begin{tikzpicture}[node distance = 4.0em, auto]
        \node [box] (box) {%
        \begin{minipage}[t!]{0.353\textwidth}
            \vspace{1.4cm}\hspace{1.5cm}
        \end{minipage}
        };
        \node[title] at (box.north) {$\text{T}_\text{2}$ };

        \node [feature] (feat1) at (-0.73, 0.9) {petal.width $<=$ 0.75?};
        \node [feature, below right = 0.8em and -2.2em of feat1] (feat2) {petal.length $<=$ 4.75?};

        \node [scores, below left  = 0.8em and -1.2em of feat1, align=left] (pos1) { {\bf Setosa}: {\bf 1.0} \\  Versicolor: 0.0  \\   Virginica: 0.0 };
        \node [scores, below left = 0.8em and -2.4em of feat2, align=left] (pos2) { Setosa: 0.0 \\   {\bf Versicolor}: {\bf 1.0}  \\  Virginica: 0.0 };
        \node [scores, below right = 0.8em and -2.4em of feat2, align=left] (neg) { Setosa: 0.0 \\   Versicolor: 0.166  \\  {\bf Virginica}: {\bf 0.833} };

        \draw [->,thick,black!80] (feat1) to[] node[above, pos=1.2, font=\scriptsize] {yes} (pos1.north);
        \draw [->,thick,black!80] (feat1) to[] node[above, pos=1.1, font=\scriptsize] { no} (feat2.north);
        \draw [->,thick,black!80] (feat2) to[] node[above, pos=1.2, font=\scriptsize] {yes} (pos2.north);
        \draw [->,thick,black!80] (feat2) to[] node[above, pos=1.1, font=\scriptsize] { no} (neg.north);
    \end{tikzpicture}
    &
    \begin{tikzpicture}[node distance = 4.0em, auto]
        \node [box] (box) {%
        \begin{minipage}[t!]{0.353\textwidth}
            \vspace{1.4cm}\hspace{1.5cm}
        \end{minipage}
        };
        \node[title] at (box.north) {$\text{T}_\text{3}$};

        \node [feature] (feat1) at (-0.53, 0.9) {petal.width $<=$ 0.75?};
        \node [feature, below right = 0.8em and -2.2em of feat1] (feat2) {petal.width $<=$ 1.65?};

        \node [scores, below left  = 0.8em and -0.5em of feat1, align=left] (pos1) { {\bf Setosa}: {\bf 1.0} \\ Versicolor: 0.0 \\ Virginica: 0.0};
        \node [scores, below left = 0.8em and -2.4em of feat2, align=left] (pos2) {Setosa: 0.0 \\  {\bf Versicolor}: {\bf 0.904} \\  Virginica: 0.095};
        \node [scores, below right = 0.8em and -2.4em of feat2, align=left] (neg) {Setosa: 0.0 \\  Versicolor: 0.05 \\ {\bf Virginica}: {\bf 0.95}};

        \draw [->,thick,black!80] (feat1) to[] node[above, pos=1.2, font=\scriptsize] {yes} (pos1.north);
        \draw [->,thick,black!80] (feat1) to[] node[above, pos=1.1, font=\scriptsize] { no} (feat2.north);
        \draw [->,thick,black!80] (feat2) to[] node[above, pos=1.2, font=\scriptsize] {yes} (pos2.north);
        \draw [->,thick,black!80] (feat2) to[] node[above, pos=1.1, font=\scriptsize] { no} (neg.north);
    \end{tikzpicture}
\end{tabular}
\end{adjustbox}

}
    \caption{
        Example of a simple RFwv trained using Scikit-learn with 3 trees, on {\it Iris} dataset. 
        Here, the tree ensemble has 3 trees  with the depth of each tree being at most 2.  
        In the running examples, classes ``Setosa'', ``Versicolor'' and  ``Virginica'' are 
        denoted, resp.\  class $c_1$, $c_2$ and $c_3$, and features {\it sepal.length},  
        {\it sepal.width}, {\it petal.length} and  {\it petal.width} are ordered, resp., as 
        features 1--4.    
    }
    \label{fig:RF(wv)}
\end{figure}
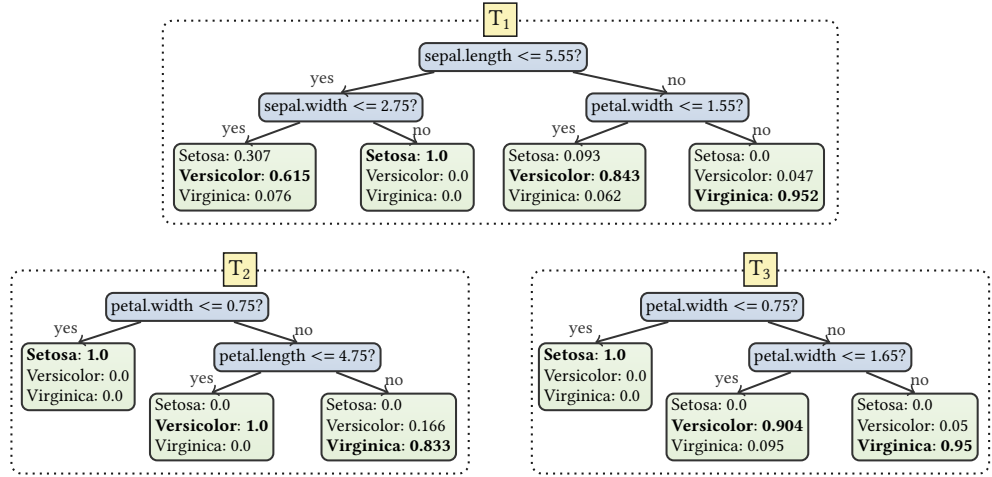

\begin{example} \label{ex:RF(wv)}
  \cref{fig:RF(wv)} shows an example of a RF with weighted voting, which 
  has 3 trees.
  %
  %
  Consider the sample $\mbf{v} = (5.1, 2.75, 1.4, 0.8)$, then we have 
  $W(c_1,\mbf{v}) = 0.307 + 0.0 + 0.0 =  0.307$, 
  $W(c_2,\mbf{v}) = 0.615 + 1.0 + 0.904 = 2.519$ and 
  $W(c_3,\mbf{v}) = 0.076 + 0.0 + 0.095 = 0.171$. 
  As a results, the predicted class is $c_2$ (``Versicolor'').
\end{example}

To represent an RF with weighted voting, in our general TE model, we
proceed as follows. The weight of each tree path in the TE model is
the one in the corresponding tree path in the RF. Moreover, the class
output by the path in the TE is the class predicted by the
corresponding tree path in the RF.

Formally, let an RFwv $\mathfrak{F}$ be given by a set of trees
$\mathfrak{F} = \{\, T_t \}_{t=1}^{\,n}$,
where for each tree $T_t$ and each leaf $l \in \mathcal{L}(T_t)$,
every class $c_i \in \mathcal{K}$ is associated with a weight via a mapping
\[
w_{\mathfrak{F}} : \mathcal{L}(T_t) \times \mathcal{K} \;\rightarrow\; \mathbb{R} 
\]

We define an equivalent TE $\mathfrak{T}$ as follows.
For each $T_t \in \mathfrak{F}$ and each class $c_i \in \mathcal{K}$,
we introduce a corresponding tree $T'_{K(t-1)+i} \in \mathfrak{T}$
with identical paths.
For every leaf $l \in \mathcal{L}(T'_{K(t-1)+i})$ we assign:
\[
cl(l) \;:=\; c_i
\qquad\text{and}\qquad
w_l \;:=\; w_{\mathfrak{F}}(l, c_i)
\]


\begin{remark}
  Given the results in this section, it is simple to reduce RFmv to
  RFs with weighted voting (RFwv), but also to BTs.
  The reduction from RFmv to RFwv is straightforward, in that the
  weights are just set to 1. 
  The reduction from RFmv to BT also seems simple. A tree $T$ in the
  RF is split into $L$ trees, one for each class predicted by $T$.
  For copy $l$, added to the group representing class $c_l$, the paths
  predicting $c_l$ are given weight 1; all other paths are given
  weight 0.
\end{remark}


Given the proposed reductions, we have the following result, in
addition to~\cref{thm:dpc}.


\begin{proposition}
  For each random forest with majority voting there exists an
  equivalent boosted tree with a size that is polynomial on the size
  of the starting random forest.
\end{proposition}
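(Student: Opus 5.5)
The plan is to make the construction in the preceding remark explicit and verify three things: it is well-formed as a BT in the sense of the general TE model, its size is polynomial, and it computes the same class weights (hence the same $\arg\max$). Let $\mathfrak{F}=\{T_t\}_{t=1}^{n}$ be an RFmv with classes $\fml{K}=\{c_1,\ldots,c_K\}$. For each tree $T_t$ and each class $c_i$, I will build a tree $T'_{K(t-1)+i}$ with the same structure (same split conditions and the same leaf set) as $T_t$, pre-assigned to class $c_i$. Every leaf $l$ of this copy gets weight $w_l := 1$ if $cl_{\mathfrak{F}}(l)=c_i$, and $w_l:=0$ otherwise. This mirrors the RFwv embedding, specialised to 0/1 weights. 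To match the indexing $T_{Kj+i}$ of the BT definition, I will use $n$ trees per class, with $j=t-1$.

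For the size bound, the BT consists of $nK$ trees, each a copy of some $T_t$ with leaf labels replaced by constants in $\{0,1\}$. Its total size is therefore at most $K\cdot\sum_t |T_t|$ plus $O(\log)$-bit weights. Since $K$ is part of the input description (every leaf label is a class, and we may restrict to classes that actually occur, or take $K$ as given), this is polynomial, indeed linear up to the factor $K$. If one prefers, only classes actually predicted by some leaf of $T_t$ need copies, exactly as in the remark. Omitted copies would contribute $0$ anyway.

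For equivalence, I will fix an arbitrary $\mbf{v}\in\mbb{F}$. Since $T'_{K(t-1)+i}$ has the same split conditions as $T_t$, the path consistent with $\mbf{v}$ in the copy ends at the same leaf $k_{\mbf{v},t}$. Hence
\[
W_{\mathfrak{B}}(c_i,\mbf{v})=\sum_{t=1}^{n} w_{k_{\mbf{v},t}}^{(i)}=\sum_{t=1}^{n}[\,cl_{\mathfrak{F}}(k_{\mbf{v},t})=c_i\,],
\]
which is exactly the vote count $W_{\mathfrak{F}}(c_i,\mbf{v})$ given by \cref{eq:wght} with unit weights. Equal class scores give equal $\arg\max$, and therefore $\kappa_{\mathfrak{B}}=\kappa_{\mathfrak{F}}$ on all of $\mbb{F}$.

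The main obstacle is tie-breaking. The $\arg\max$ may be non-unique, and the BT implementation (e.g.\ XGBoost) may resolve ties differently from the RF implementation. My plan is to note that the scores coincide pointwise, so any tie-breaking rule fixed by class order yields identical predictions. If the two frameworks use different rules, I would absorb the difference by adding tiny class-dependent constants. For example, add a single-leaf tree for class $c_i$ with weight $\epsilon\,(K-i)$, with $\epsilon<1/K$. Because vote counts are integers, this never changes a strict winner and only enforces the intended tie order. It adds only $K$ constant-size trees, so polynomiality is preserved.
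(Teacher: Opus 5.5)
Your proposal is correct and follows the paper's own argument. The paper gives no formal proof and relies on the preceding remark: each RF tree is copied once per class it predicts, leaves predicting that class get weight $1$ and all others weight $0$, as in the $3\times 3$ reduction example. You make that construction explicit, together with its polynomial size bound and the pointwise equality of class scores. Your $\epsilon(K-i)$ tie-breaking trees go beyond what the paper addresses; they favour lower-index classes, which matches the tie convention implicit in the paper's SAT encoding \eqref{eq:be01}--\eqref{eq:be02}.
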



%
\begin{figure}[ht]
  \centering
  \resizebox{\textwidth}{!}{\input{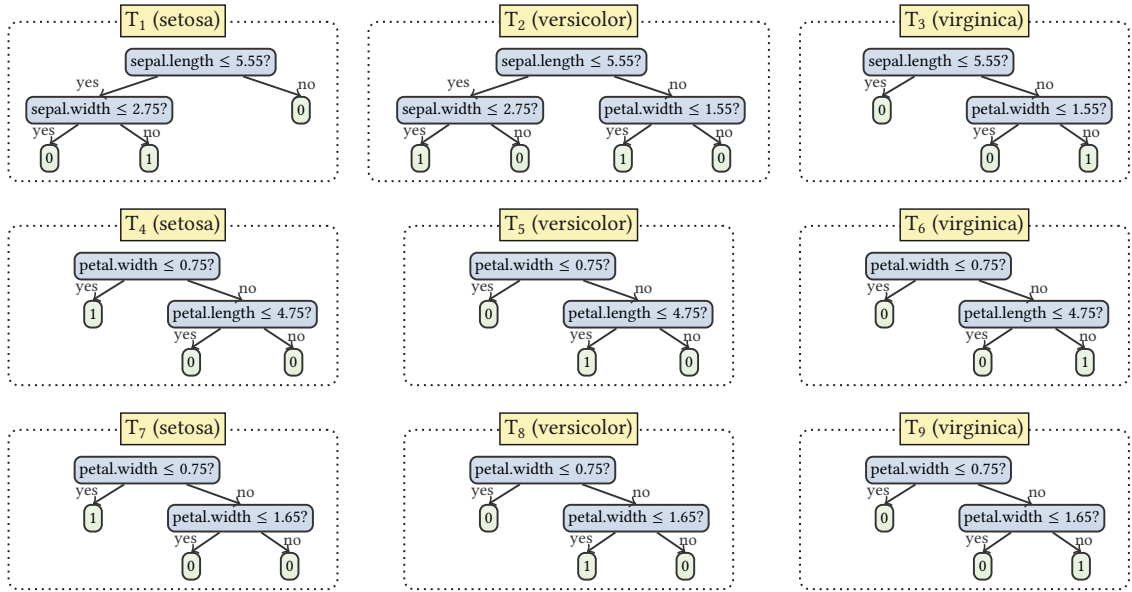}}
  \caption{Reducing example RFmv to BT. Note 
   that splitting nodes yielding to a score $0$, in its leaf nodes, are pruned 
  and replaced with labeled leaf node $0$.} 
  \label{fig:rfmv2bt}
\end{figure}

\begin{example}
  \cref{fig:rfmv2bt} presents the reduction of the RFmv
  from~\cref{fig:RF(mv)} to a BT. As results, we obtain $3\times 3$ trees, 
  such that each tree of the RFmv is duplicated $K=3$ times 
  and each duplicate predicts the class scores $0/1$ of the corresponding class 
  accordingly, i.e. ``Setosa'', ``Versicolor'' and  ``Virginica''.    
  Observe that splitting nodes yielding to a score $0$, in its leaf nodes, are pruned 
  and replaced with labeled leaf node $0$, .e.g. for T$_1$ the internal node 
  $(\text{\em petal.width} \le 1.55)$ has both children nodes predicting score $0$ 
  for class ``Setosa'', hence it is pruned and replaced by a leaf node $0$. 
\end{example}

\begin{proposition}
  For each random forest with majority voting there exists an
  equivalent random forest with weighted voting with a size that is
  linear on the size of the starting random forest.
\end{proposition}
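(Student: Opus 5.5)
We prove the statement by an explicit construction that mirrors the remark preceding it: the reduction from RFmv to RFwv keeps every tree and only sets weights. Let $\mathfrak{F}=\{T_t\}_{t=1}^n$ be an RFmv, with leaf labels $cl_{\mathfrak{F}}(l)$. We build an RFwv $\mathfrak{F}'=\{T'_t\}_{t=1}^n$ in which each $T'_t$ has exactly the same internal nodes, split conditions and leaves as $T_t$. For every leaf $l\in\mathcal{L}(T_t)$ and class $c_i\in\fml{K}$ we set
\[
w_{\mathfrak{F}'}(l,c_i) \;:=\; \textbf{if}~ c_i = cl_{\mathfrak{F}}(l) \textbf{~then~} 1 \textbf{~else~} 0 ,
\]
so each leaf carries a one-hot class-probability vector. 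This is a legitimate per-leaf probability estimate, since it is nonnegative and sums to $1$.

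\textbf{Equivalence.} Fix any $\mbf{v}\in\mbb{F}$. In each tree exactly one path $R_{k_{\mbf{v},t}}$ is consistent with $\mbf{v}$, and because $T'_t$ and $T_t$ share their structure, this is the same leaf in both trees. Hence $P_{T'_t}(c\mid\mbf{v})=1$ if $c=cl_{\mathfrak{F}}(k_{\mbf{v},t})$ and $0$ otherwise. It follows that
\[
W_{\mathfrak{F}'}(c,\mbf{v})=\frac{1}{n}\,W_{\mathfrak{F}}(c,\mbf{v}),
\]
where $W_{\mathfrak{F}}$ is the vote count from \cref{eq:wght} with unit weights. Scaling by the positive constant $1/n$ preserves the $\arg\max$, so $\kappa_{\mathfrak{F}'}(\mbf{v})=\kappa_{\mathfrak{F}}(\mbf{v})$ for all $\mbf{v}$.

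\textbf{Ties.} The only subtle point is tie-breaking in the $\arg\max$. Since the score vectors are exactly proportional, the sets of maximizing classes coincide. Any fixed deterministic tie-breaking rule, for example the class order, therefore selects the same class in both models. We make this explicit so that equivalence holds as total functions and not only on tie-free points.

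\textbf{Size.} The trees of $\mathfrak{F}'$ are copies of those of $\mathfrak{F}$. The only additional data is a weight vector of length $K$ at each leaf, which replaces a single class label. The size of $\mathfrak{F}'$ is therefore $O(K\cdot|\mathfrak{F}|)$. This is linear in $|\mathfrak{F}|$ when $K$ is treated as fixed, or when class labels are counted with their encoding.

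If a bound independent of $K$ is needed, we would instead use the remark's description directly: set $w_l=1$ for the predicted class and treat all absent entries as $0$. This keeps a sparse representation with one weight per leaf, which makes the linear bound immediate. We expect this accounting of size to be the main point requiring care; the equivalence argument itself is routine.
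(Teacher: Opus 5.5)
Your proposal is correct and matches the paper's construction: the paper gives no separate proof, relying instead on the preceding remark (``the weights are just set to 1'') and on \cref{fig:rfmv2rfwv}, which keeps every tree and places one-hot class-weight vectors at the leaves, so the class scores equal the vote counts up to the factor $1/n$. Your explicit treatment of ties, and of the factor $K$ introduced by dense leaf vectors (which you resolve with the sparse one-weight-per-leaf representation), are sound refinements of points the paper leaves implicit.
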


\begin{example}
  \cref{fig:rfmv2rfwv} presents the reduction of the RFmv
  from~\cref{fig:RF(mv)} to a RFwv.
\end{example}

\begin{figure}[ht]
  \resizebox{\textwidth}{!}{
%
%
%

\tikzstyle{box} = [draw=black!90, thick, rectangle, rounded corners,
                     inner sep=10pt, inner ysep=20pt, dotted
                  ]
\tikzstyle{title} = [draw=black!90, fill=black!5, semithick, top color=white,
                     bottom color = black!5, text=black!90, rectangle,
                     font=\small, inner sep=2pt, minimum height=1.3em,
                     top color=tyellow2!27, bottom color=tyellow2!27
                    ]
\tikzstyle{feature} = [rectangle,font=\scriptsize,rounded corners=1mm,thick,%
                       draw=black!80, top color=tblue2!20,bottom color=tblue2!25,%
                       draw, minimum height=1.1em, text centered,%
                       inner sep=2pt%
                      ]
\tikzstyle{pscore} = [rectangle,font=\scriptsize,rounded corners=1mm,thick,%
                     draw=black!80, top color=tgreen3!20,bottom color=tgreen3!27,%
                     draw, minimum height=1.1em, text centered,%
                     inner sep=2pt%
                    ]
\tikzstyle{scores} = [rectangle,font=\scriptsize,rounded corners=1mm,thick,%
                     draw=black!80, top color=tgreen3!10,bottom color=tgreen3!15,%
                     draw, minimum height=1.1em, text centered,%
                     inner sep=2pt%
                    ]                    
\tikzstyle{nscore} = [rectangle,font=\scriptsize,rounded corners=1mm,thick,%
                     draw=black!80, top color=tred2!20,bottom color=tred2!25,%
                     draw, minimum height=1.1em, text centered,%
                     inner sep=2pt%
                    ]

\begin{adjustbox}{center}
\setlength{\tabcolsep}{16pt}
\def\arraystretch{3}
\begin{tabular}{cc}
   \multicolumn{2}{c}{
    \begin{tikzpicture}[node distance = 4.0em, auto]
        \node [box] (box) {%
        \begin{minipage}[t!]{0.485\textwidth}
            \vspace{1.4cm}\hspace{1.5cm}
        \end{minipage}
        };
        \node[title] at (box.north) {$\text{T}_\text{1} $};

        \node [feature] (feat1) at (0.04, 0.9) {sepal.length $<=$ 5.55?};
        \node [feature, below left  = 0.8em and -0.2em of feat1] (feat2) {sepal.width $<=$ 2.75?};
        \node [feature, below right = 0.8em and -0.1em of feat1] (feat3) {petal.width $<=$ 1.55?};

        \node [scores, below left  = 0.8em and -2.3em of feat2, align=left] (pos1)  { Setosa: 0 \\  {\bf Versicolor}: {\bf 1} \\ Virginica: 0};
        \node [scores, below right  = 0.8em and -2.5em of feat2, align=left] (neg1) { {\bf Setosa}: {\bf 1} \\  Versicolor: 0 \\  Virginica: 0};
        \node [scores, below left = 0.8em and -2.4em of feat3, align=left] (pos2) { Setosa: 0 \\  {\bf Versicolor}: {\bf 1} \\  Virginica: 0};
        \node [scores, below right = 0.8em and -2.4em of feat3, align=left] (neg2) { Setosa: 0 \\  Versicolor: 0 \\   {\bf Virginica}: {\bf 1}};

        \draw [->,thick,black!80] (feat1) to[] node[above, pos=1.2, font=\scriptsize] {yes} (feat2.north);
        \draw [->,thick,black!80] (feat1) to[] node[above, pos=1.1, font=\scriptsize] { no} (feat3.north);
        \draw [->,thick,black!80] (feat2) to[] node[above, pos=1.2, font=\scriptsize] {yes} (pos1.north);
        \draw [->,thick,black!80] (feat2) to[] node[above, pos=1.1, font=\scriptsize] { no} (neg1.north);
        \draw [->,thick,black!80] (feat3) to[] node[above, pos=1.2, font=\scriptsize] {yes} (pos2.north);
        \draw [->,thick,black!80] (feat3) to[] node[above, pos=1.1, font=\scriptsize] { no} (neg2.north);
    \end{tikzpicture}
    } %
     \\  
    \begin{tikzpicture}[node distance = 4.0em, auto]
        \node [box] (box) {%
        \begin{minipage}[t!]{0.33\textwidth}
            \vspace{1.4cm}\hspace{1.5cm}
        \end{minipage}
        };
        \node[title] at (box.north) {$\text{T}_\text{2}$ };

        \node [feature] (feat1) at (-0.375, 0.9) {petal.width $<=$ 0.75?};
        \node [feature, below right = 0.8em and -2.2em of feat1] (feat2) {petal.length $<=$ 4.75?};

        \node [scores, below left  = 0.8em and -1.2em of feat1, align=left] (pos1) { {\bf Setosa}: {\bf 1} \\  Versicolor: 0  \\   Virginica: 0 };
        \node [scores, below left = 0.8em and -2.4em of feat2, align=left] (pos2) { Setosa: 0 \\   {\bf Versicolor}: {\bf 1}  \\  Virginica: 0 };
        \node [scores, below right = 0.8em and -2.4em of feat2, align=left] (neg) { Setosa: 0 \\   Versicolor: 0  \\  {\bf Virginica}: {\bf 1} };

        \draw [->,thick,black!80] (feat1) to[] node[above, pos=1.2, font=\scriptsize] {yes} (pos1.north);
        \draw [->,thick,black!80] (feat1) to[] node[above, pos=1.1, font=\scriptsize] { no} (feat2.north);
        \draw [->,thick,black!80] (feat2) to[] node[above, pos=1.2, font=\scriptsize] {yes} (pos2.north);
        \draw [->,thick,black!80] (feat2) to[] node[above, pos=1.1, font=\scriptsize] { no} (neg.north);
    \end{tikzpicture}
    &
    \begin{tikzpicture}[node distance = 4.0em, auto]
        \node [box] (box) {%
        \begin{minipage}[t!]{0.33\textwidth}
            \vspace{1.4cm}\hspace{1.5cm}
        \end{minipage}
        };
        \node[title] at (box.north) {$\text{T}_\text{3}$};

        \node [feature] (feat1) at (-0.375, 0.9) {petal.width $<=$ 0.75?};
        \node [feature, below right = 0.8em and -2.2em of feat1] (feat2) {petal.width $<=$ 1.65?};

        \node [scores, below left  = 0.8em and -0.5em of feat1, align=left] (pos1) { {\bf Setosa}: {\bf 1} \\ Versicolor: 0 \\ Virginica: 0};
        \node [scores, below left = 0.8em and -2.4em of feat2, align=left] (pos2) {Setosa: 0 \\  {\bf Versicolor}: {\bf 1} \\  Virginica: 0};
        \node [scores, below right = 0.8em and -2.4em of feat2, align=left] (neg) {Setosa: 0 \\  Versicolor: 0 \\ {\bf Virginica}: {\bf 1}};

        \draw [->,thick,black!80] (feat1) to[] node[above, pos=1.2, font=\scriptsize] {yes} (pos1.north);
        \draw [->,thick,black!80] (feat1) to[] node[above, pos=1.1, font=\scriptsize] { no} (feat2.north);
        \draw [->,thick,black!80] (feat2) to[] node[above, pos=1.2, font=\scriptsize] {yes} (pos2.north);
        \draw [->,thick,black!80] (feat2) to[] node[above, pos=1.1, font=\scriptsize] { no} (neg.north);
    \end{tikzpicture}    
\end{tabular}
\end{adjustbox}}
  \caption{Reducing example RFmv to RFwv.} 
  \label{fig:rfmv2rfwv}
\end{figure}
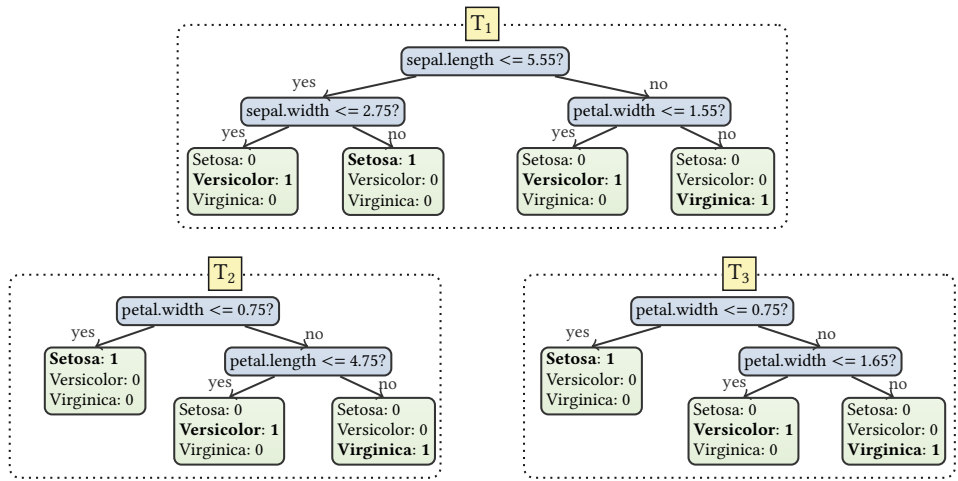


\section{Computing Explanations}

\subsection{Complexity of Explaining TEs}

\subsubsection{Complexity of AXps for RFs} \label{sec:cplxRF}
Recent work identified classes of classifiers for which one
AXp can be computed in polynomial
time~\cite{marquis-kr20,msgcin-nips20}. These classes of classifiers
include those respecting specific criteria of the knowledge
compilation map~\cite{marquis-kr20},\footnote{%
  The knowledge compilation map was first proposed in
  2002~\cite{darwiche-jair02}.}
but also Naive Bayes classifiers (NBCs) and linear classifiers (LCs)~\cite{msgcin-nips20}. 
(In the case of NBCs and LCs, enumeration of AXps was shown
to be solved with polynomial delay.)
One question is thus whether there might exist a polynomial time
algorithm for computing one AXp of an RF.
This section shows that this is unlikely to be the case, by proving
that deciding whether a set of features represents an AXp is
$\ddp$-complete.\footnote{The class $\ddp$ \cite{Papadimitriou94} is
  the set of languages defined by the intersection of two languages,
  one in NP and one in coNP.} 

Let $\mfrk{T}$ be an RFmv, with classification function $\kappa$, and
let $\mbf{v}\in\mbb{F}$, with prediction $\kappa(\mbf{v})=c\in\fml{K}$.
$\kappa$ is parameterized with $c$, to obtain the boolean function
$\kappa_c$, s.t.\ $\kappa_c(\mbf{x})=1$ iff $\kappa(\mbf{x})=c$.
A set of literals $I_{\mbf{v}}$ is associated with each $\mbf{v}$.
Let $\rho$ be a subset of the literals associated with $\mbf{v}$,
i.e.\ $\rho\subseteq{I_{\mbf{v}}}$. Hence,

\begin{theorem} \label{thm:dpc}
  For a random forest with majority vote $\mfrk{T}$, given an instance $\mbf{v}$ with
  prediction $c$, deciding whether a set of literals is an
  AXp is $\ddp$-complete.
\end{theorem}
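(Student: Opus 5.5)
Membership and hardness are handled separately; the hardness argument is the real work.

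\emph{Membership in $\ddp$.} Deciding whether $\rho\subseteq I_{\mbf{v}}$ is an AXp amounts to two checks.
\begin{enumerate}
\item $\rho$ is sufficient, i.e.\ $\forall\mbf{x}.\,\rho(\mbf{x})\limply\kappa_c(\mbf{x})$. Its complement, ``there exists $\mbf{x}$ consistent with $\rho$ with $\kappa(\mbf{x})\neq c$'', is in NP: the witness is $\mbf{x}$, and the votes $W(\cdot,\mbf{x})$ of an RFmv are computed in polynomial time. So sufficiency is in coNP.
\item $\rho$ is subset-minimal. Since sufficiency is monotone (supersets of a sufficient set are sufficient), it is enough to check that for every literal $l\in\rho$ the set $\rho\setminus\{l\}$ is \emph{not} sufficient. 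Each such check has a polynomial witness, and there are $|\rho|$ of them, so minimality is in NP.
\end{enumerate}
The language is the intersection of an NP language and a coNP language, hence it is in $\ddp$.

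\emph{$\ddp$-hardness.} I would reduce from a canonical $\ddp$-complete problem such as SAT-UNSAT: given CNFs $(\varphi_1,\varphi_2)$ over disjoint variables, decide whether $\varphi_1$ is satisfiable and $\varphi_2$ is unsatisfiable. The plan has three steps.
\begin{enumerate}
\item Establish the core gadget: deciding whether a given $\rho$ is sufficient for an RFmv is coNP-hard. Given a CNF $\psi=C_1\land\dots\land C_k$, build one tree per clause $C_j$. The tree tests the variables of $C_j$ (as binary features, split conditions $x_i<1$) and votes $c'$ on the unique path falsifying $C_j$, and $c$ otherwise. Add padding trees that vote constantly, with class thresholds tuned so that the forest predicts $c'$ iff at least one clause is falsified. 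Then $\kappa\equiv c$ on a region iff $\psi$ is satisfiable there. Choosing polarities appropriately (e.g.\ negating clauses, or having the forest predict $c'$ exactly when all clause trees agree) yields ``$\rho$ sufficient $\iff$ $\psi$ unsatisfiable''.
\item Combine $\varphi_1$ and $\varphi_2$ into one forest on features $Z\cup Y_1\cup Y_2\cup\{s\}$, with instance $\mbf{v}$ and candidate $\rho$ chosen so that sufficiency of $\rho$ encodes $\varphi_2$ unsatisfiable, and non-sufficiency of every $\rho\setminus\{l\}$ encodes $\varphi_1$ satisfiable. A natural candidate: $\rho$ is a single selector literal $s=v_s$. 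Fixing $s$ activates the $\varphi_2$ trees in a way that $\kappa=c$ iff $\varphi_2$ is falsified. Freeing $s$ lets the alternative branch reach $c'$ iff $\varphi_1$ is satisfiable. With $|\rho|=1$, minimality reduces to ``$\emptyset$ is not sufficient'', i.e.\ some point is not classified $c$, and that is where $\varphi_1$ enters.
\item Check that the constructed $\mbf{v}$ itself has prediction $c$. This is the subtle point: $\mbf{v}$ must be a concrete point, so $\mbf{v}$ must not depend on an unknown satisfying assignment. I would handle this by adding trees keyed to $s$, so that at $s=v_s$ the $\varphi_1$ trees are neutralised and the forest outputs $c$ unless some $\varphi_2$ clause tree fires.
\end{enumerate}

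\emph{Expected obstacle.} The hardest step is the vote arithmetic: choosing the numbers of padding trees and tie-breaking so that majority voting realises exactly a conjunction or disjunction of the clause gadgets, and so that the $\varphi_1$ and $\varphi_2$ parts do not interfere. I would first make all thresholds strict with a clear margin by using an odd number of padding trees, and verify the four cases ($\varphi_1$ sat/unsat $\times$ $\varphi_2$ sat/unsat) explicitly. If a direct SAT-UNSAT gadget proves awkward, I would fall back on building the coNP-hardness of sufficiency as above and then invoke the standard pattern in which a minimality test over a sufficient set combines an NP-hard and a coNP-hard check on disjoint variable sets.
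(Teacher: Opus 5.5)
Your membership argument is the same as the paper's. For hardness, the paper takes a different and much shorter route. It reduces from testing whether a term is a prime implicant of a DNF $\phi=t_1\lor\cdots\lor t_n$, using one tree per term (voting $1$ iff the term holds) plus $n-1$ trees that always vote $1$. Then $\kappa_1\equiv\phi$, and the AXps of the forest are exactly the prime implicants of $\phi$. That buys a one-line gadget with no selector, at the price of importing the $\ddp$-completeness of prime-implicant testing for DNFs. Your choice of $\mbf{v}$, which the paper leaves implicit, is a point you rightly raise. Your reduction from SAT-UNSAT is self-contained and can be completed. As written, though, its core construction is missing and the gadget fragments you state have the wrong polarity, so the hardness half is not yet a proof.

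\textbf{Step~1 polarity.} Suppose each clause tree votes $c'$ on its falsifying path and $c'$ wins iff some clause is falsified. Then $\kappa_c\equiv\psi$, so $\rho$ is sufficient iff $\rho\models\psi$. For a CNF this is polynomial: every clause must be tautological or contain a literal of $\rho$. Your sentence ``$\kappa\equiv c$ on a region iff $\psi$ is satisfiable there'' should say \emph{valid}.

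\textbf{Step~3.} The phrase ``outputs $c$ unless some $\varphi_2$ clause tree fires'' reuses this disjunctive gate and contradicts your own specification in step~2. What you need is a conjunctive gate. Let each clause tree vote $c'$ iff its clause is \emph{satisfied}, and add exactly $k-1$ constant trees voting $c$. Then $c'$ wins ($k$ against $k-1$, no ties) iff all $k$ clause trees vote $c'$, i.e.\ $\kappa_{c'}\equiv\psi$.

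\textbf{The selector.} This is just a root split. For $j\le k=\max(k_1,k_2)$, tree $j$ tests $s$ first. It continues with the gadget for the $j$-th clause of $\varphi_2$ when $s=1$, and of $\varphi_1$ when $s=0$; a missing clause becomes a constant-$c'$ leaf. This gives $\kappa_{c'}\equiv(s\land\varphi_2)\lor(\neg s\land\varphi_1)$.

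\textbf{The choice of $\mbf{v}$.} The difficulty you anticipate here is misplaced. You need a \emph{falsifying} assignment of $\varphi_2$, not a satisfying one, and it is trivial to produce: falsify every literal of one non-tautological clause. If no such clause exists, $\varphi_2$ is valid and you map to a fixed no-instance.

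\textbf{Checking the reduction.} With $v_s=1$ and such a $Y_2$-part, $\kappa(\mbf{v})=c$. The set $\{s=1\}$ is sufficient iff $\varphi_2$ is unsatisfiable. The empty set is insufficient iff $\varphi_1$ or $\varphi_2$ is satisfiable. Hence $\{s=1\}$ is an AXp iff $\varphi_1$ is satisfiable and $\varphi_2$ is not.

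Your vague fallback to a ``standard pattern'' does not substitute for this construction.
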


\begin{proof}
  Given an instance $\mbf{v}$ and predicted class $c$, deciding
  whether a set of literals $\rho$ is an AXp of an RFmv
  $\mfrk{T}$ is clearly in $\ddp$. We need to prove that
  $\rho\entails\kappa_c$, which is a problem in coNP. We also need to
  prove that a set of literals $\rho'$, obtained by the removal of any
  single literal from $\rho$ (and there can be at most $m$ of these),
  is such that $\rho\nentails\kappa_c$, a problem in NP.
  To prove that the problem is hard for $\ddp$,
  we reduce the problem of computing a prime implicant of a DNF, which
  is known to be complete for $\ddp$
  , to the
  problem of computing an abductive explanation of an RFmv $\mfrk{T}$.\\
  Consider a DNF $\phi=t_1\lor\cdots\lor{t_n}$, where each term $t_i$
  is a conjunction of literals defined on a set $X=\{x_1,\ldots,x_m\}$
  of boolean variables. 
  Given $\phi$, we construct an RFmv $\mfrk{T}$, defined on a set
  $\fml{F}$ of $m$ features, where each feature $i$ is associated with
  an $x_i$ element of $X$, and where $D_i=\{0,1\}$. Moreover,
  $\mfrk{T}$ is such that  $\phi(\mbf{x})=1$ iff
  $\kappa_1(\mbf{x})=1$.
  $\mfrk{T}$ is constructed as follows.
  \begin{enumerate}[nosep,label=\roman*.]
  \item
    Associate a decision tree (DT) $T_i$ with each term
    $t_i$, such that the assignment satisfying $t_i$ yields class 1,
    and the other assignments yield class 0. Clearly, the size of the
    DT $T_i$ is linear on the size of $t_i$, since each literal
    not taking the value specified by the term will be connected to a
    terminal node with prediction 0.
    \item
    Create $(n-1)$ additional trees, each having exactly one
    terminal node and no non-terminal nodes. Moreover, associate class
    $1$ with the terminal node.
    \end{enumerate}
  Next, we prove that $\phi(\mbf{x})=1$ iff $\kappa_1(\mbf{x})=1$.
  \begin{enumerate}[nosep,label=\roman*.]
  \item
    Let $\mbf{x}$ be such that $\phi(\mbf{x})=1$. Then, there is
    at least one term $t_j$, such that $t_j(\mbf{x})=1$. As a
    result, the corresponding tree $T_j$ in the RFmv will
    predict class 1.  Hence, at least $n$ trees predict class 1, and
    at most $n-1$ trees predict class 0. As a result, the predicted
    class is 1, and so $\kappa_1(\mbf{x})=1$.
    \item
    Let $\mbf{x}$ be such that $\kappa_1(\mbf{x})=1$. This means
    that at least one of the trees associated with the terms $t_i$
    must predict value 1. Let such tree be $T_j$, associated
    with term $t_j$. For this tree to predict class 1, then
    $t_j(\mbf{x})=1$, and so $\phi(\mbf{x})=1$.
    \end{enumerate}
  Now, let $\rho$ be a conjunction of literals defined on $X$. Then,
  we must have $\rho\entails\phi$ iff $\rho\entails\kappa_1$. Every model
  of $\rho$ is also a model of $\phi$, and so it must also be a model
  of $\kappa_1$. Conversely, every model of $\rho$ is also a model of
  $\kappa_1$, and so it must also be a model of $\phi$.
\end{proof}




\begin{proposition}
  Given~\cref{thm:dpc}, deciding whether a set of features is an AXp
  is $\ddp$-complete, both for BTs and for RFs with weighted voting.
\end{proposition}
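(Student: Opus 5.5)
Membership and hardness are handled separately, and hardness comes from transferring \cref{thm:dpc} through the polynomial reductions from RFmv given in the two preceding propositions.

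\emph{Membership in $\ddp$.} For a BT or an RFwv, computing $\kappa(\mbf{x})$ on a given point is polynomial: for each tree we follow the single consistent path, sum the leaf weights per class as in~\cref{eq:wght}, and take the $\arg\max$. Hence deciding $\rho\nentails\kappa_c$ is in NP, since we can guess a point $\mbf{x}$ consistent with $\rho$ and verify $\kappa(\mbf{x})\neq c$. Consequently $\rho\entails\kappa_c$ is in coNP. Checking subset-minimality requires at most $m$ non-entailment tests, one per removed literal. These are all in NP and can be combined into a single NP instance by guessing $m$ witnesses simultaneously. The language is therefore the intersection of a coNP language and an NP language, and so lies in $\ddp$, exactly as in the proof of \cref{thm:dpc}.

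\emph{Hardness.} I will reduce AXp-checking for RFmv, which is $\ddp$-hard by \cref{thm:dpc}, to AXp-checking for RFwv and for BTs. For RFwv, the previous proposition gives a linear-size equivalent forest: each leaf gets weight $1$ for its predicted class and $0$ for the others. For BTs, the earlier remark and proposition give a polynomial-size BT: each tree is copied $K$ times, and in the copy for class $c_l$ the leaves predicting $c_l$ get weight $1$ and all other leaves get weight $0$. In both constructions the per-class score $W(c,\mbf{x})$ equals the vote count of the original RFmv at every $\mbf{x}$, and the feature space is unchanged. The classification functions therefore coincide. Since the AXp condition~\eqref{eq:axp} and its minimality depend only on $\kappa$, $\mbf{v}$ and $\fml{F}$, a set $\rho$ is an AXp of the RFmv iff it is an AXp of the constructed model. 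The mapping is computable in polynomial time, so hardness transfers.

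\emph{Main obstacle: tie-breaking.} Equality of scores yields equality of $\kappa$ only if the $\arg\max$ resolves ties identically in all three models. I will fix one deterministic tie-breaking rule, for example the lowest class index, for the general TE model, so that the equivalence is exact. An alternative is to observe that the DNF instances from \cref{thm:dpc} never produce ties. There class $1$ receives at least $n$ votes whenever $\phi(\mbf{x})=1$, and otherwise receives $n-1$ votes against $n$ for class $0$. It thus suffices to apply the reduction to those instances, which directly gives $\ddp$-hardness for both BTs and RFwv.
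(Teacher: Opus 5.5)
Your proof is correct and follows the route the paper intends: membership exactly as in the proof of \cref{thm:dpc}, and hardness transferred from RFmv through the $0/1$-weight reductions to RFwv and to BTs given in the preceding remark and propositions. Your treatment of ties is a useful detail the paper leaves implicit: the instances built in \cref{thm:dpc} have $2n-1$ trees and two classes, so they can never produce a tie.
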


\subsubsection{Complexity of CXps for RFs}
\label{sec:cplxRF:cxp}
Similarly to AXp for RFmv, deciding whether a set of features represents
a CXp is likewise $\ddp$-complete.
%
%
Moreover, deciding whether 
a set of literals $\rho$ is a CXp responds to deciding if $\neg\rho$ 
is a \emph{prime implicate}~\cite{Audemard-ecai23} of a boolean function $\kappa_c$, i.e. 
$\kappa_c\not\entails\rho$.

\begin{proposition} \label{prop:cxp:dpc}
  For a random forest with majority vote $\mfrk{T}$, given an 
  instance $\mbf{v}$ with prediction $c$, deciding whether a set 
  of features is a CXp is $\ddp$-complete.
\end{proposition}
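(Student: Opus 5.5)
Membership in $\ddp$ and hardness are handled separately, following the template of the proof of~\cref{thm:dpc}.

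\emph{Membership.} Given $\mfrk{T}$, $\mbf{v}$ with $\kappa(\mbf{v})=c$, and a candidate $\fml{Y}\subseteq\fml{F}$, we must decide two things.
\begin{enumerate}[nosep,label=\roman*.]
\item Whether~\eqref{eq:cxp} holds for $\fml{Y}$. This is an NP question: guess $\mbf{x}$ agreeing with $\mbf{v}$ outside $\fml{Y}$, then check $\kappa(\mbf{x})\neq c$ in polynomial time by evaluating every tree and counting votes.
\item Whether, for each $j\in\fml{Y}$, the set $\fml{Y}\setminus\{j\}$ violates~\eqref{eq:cxp}. Each such check says that every $\mbf{x}$ fixing the features outside $\fml{Y}\setminus\{j\}$ to $\mbf{v}$ has $\kappa(\mbf{x})=c$, which is a coNP question. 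There are at most $m$ of them, and their conjunction is still in coNP.
\end{enumerate}
Minimality only needs single-feature removals because the CXp condition is monotone: supersets of a weak CXp are weak CXps. Hence the language is the intersection of an NP and a coNP language, so it lies in $\ddp$.

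\emph{Hardness.} I would reduce from deciding whether a clause is a prime implicate of a CNF, which is $\ddp$-complete as the dual of the prime-implicant problem for DNF used in~\cref{thm:dpc}. Let $\psi$ be a CNF and $\gamma$ a clause. Take $\phi=\neg\psi$, written as a DNF, and build the RFmv $\mfrk{T}$ exactly as in~\cref{thm:dpc}, so that $\kappa_1\equiv\neg\psi$ and $\kappa_0\equiv\psi$.

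$\gamma$ corresponds to a partial assignment $\rho=\neg\gamma$. Choose an instance $\mbf{v}$ that extends $\rho$ and has prediction $0$. Then, with $\fml{Y}$ the set of features \emph{not} mentioned by $\gamma$, the following are equivalent:
\begin{enumerate}[nosep,label=\roman*.]
\item $\fml{Y}$ is a weak CXp;
\item some $\mbf{x}$ consistent with $\rho$ (the literals fixed outside $\fml{Y}$) has class $1$;
\item $\rho\nentails\psi$, i.e.\ $\psi\nentails\gamma$.
\end{enumerate}
This does not match the direction we need, so I would instead use the CXp/complement duality. Here $\fml{Y}$ is a CXp iff the fixed set $\fml{F}\setminus\fml{Y}$ is a maximal set that is \emph{not} sufficient for $c$. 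A cleaner route is to pick the instance so that the literals of $\gamma$ themselves form the complement pattern. Let $\mbf{v}$ falsify every literal of $\gamma$ and set $\fml{Y}=\mathrm{vars}(\gamma)$. Fixing all other features to $\mbf{v}$ and freeing $\fml{Y}$ changes the prediction iff some assignment to $\mathrm{vars}(\gamma)$ flips the class. Encoding $\gamma$ as the set of freed features reproduces the ``$\kappa_c\not\entails\rho$'' characterization stated just before the proposition.

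\emph{Main obstacle.} The delicate step is choosing $\mbf{v}$ and the predicted class so that the CXp minimality conditions match the prime-implicate conditions exactly. In particular, the instance must be fixed independently of the free variables, and the prime-implicate question is over all assignments, not just those extending $\mbf{v}$. To handle this I would first try a standard padding trick. Add a fresh selector feature $z$ and one extra tree, so that $z=0$ under $\mbf{v}$ forces class $c$, while with $z$ free the forest's behaviour reduces to $\psi$ versus $\gamma$, with the vote counts adjusted by extra constant trees as in~\cref{thm:dpc}. If this bookkeeping becomes awkward, the fallback is a more modest claim: take hardness directly from the paper's stated equivalence between CXp-checking and prime-implicate checking for $\kappa_c$, combined with the fact that the RFmv of~\cref{thm:dpc} realizes an arbitrary DNF/CNF in polynomial size.
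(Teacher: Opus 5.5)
Your membership argument is correct and matches the paper's: an NP witness for \eqref{eq:cxp}, plus at most $m$ coNP checks of single-feature removals, justified by monotonicity. The hardness half, however, is not a proof. In your first attempt, if $\psi\entails\gamma$ then every $\mbf{v}$ extending $\rho=\neg\gamma$ satisfies $\neg\psi$. So the required instance with prediction $0$ does not exist on yes-instances. In any case, a prime implicate $\gamma$ of $\psi$ is just a prime implicant $\neg\gamma$ of $\neg\psi$, so this only re-encodes AXp checking. The ``cleaner route'' ranges only over the subcube fixed by $\mbf{v}$, and the padding gadget is never specified. The fallback leans on the remark preceding the proposition, which is informal and, read literally, cannot hold: $\mbf{v}$ satisfies $\kappa_c$ but falsifies $\neg\rho$, so $\neg\rho$ is never an implicate of $\kappa_c$. (The paper's proof itself only asserts that hardness follows from \cref{thm:dpc}; your fallback amounts to the same unsupported step.)

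The deeper obstruction is that no version of your plan can work while all features are Boolean, as in the forest of \cref{thm:dpc}. Suppose every $D_i=\{0,1\}$. Minimality says that each $\mbf{x}$ agreeing with $\mbf{v}$ on $\fml{F}\setminus\fml{Y}$ and on at least one $j\in\fml{Y}$ has $\kappa(\mbf{x})=c$. Hence the only possible witness of \eqref{eq:cxp} is the point $\mbf{w}$ obtained from $\mbf{v}$ by flipping every feature of $\fml{Y}$. So $\fml{Y}$ is a CXp iff $\kappa(\mbf{w})\neq c$ (polynomial) and minimality holds (coNP). The Boolean-feature problem therefore lies in coNP, and a $\ddp$-hardness reduction producing Boolean instances would give $\mathrm{NP}=\mathrm{coNP}$. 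This is why the direction keeps coming out wrong, and why padding with Boolean selectors cannot repair it.

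The missing idea is to use features with at least three values, so that an exponential search survives after every freed feature is forced away from $v_j$. For instance, reduce from SAT--UNSAT. Take CNFs $\alpha$ over $X\neq\emptyset$ and $\beta$ over $Z\neq\emptyset$ (disjoint), and give every feature the domain $\{0,1,2\}$. Set $\mbf{v}=\mbf{2}$, $c=0$ and $\fml{Y}=\fml{F}=X\cup Z$. Let class $1$ be $A\lor B$, with $A=(\mbf{x}\in\{0,1\}^{\fml{F}})\land\alpha$ and $B=(\mbf{x}_X=\mbf{2})\land(\mbf{x}_Z\in\{0,1\}^{Z})\land\beta$. Points of $A$ agree with $\mbf{v}$ nowhere, and points of $B$ agree with it on all of $X$. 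Hence \eqref{eq:cxp} holds iff $\alpha$ or $\beta$ is satisfiable, while minimality holds iff $\beta$ is unsatisfiable; together, iff $\alpha$ is satisfiable and $\beta$ is not. An RFmv computes $A\lor B$, viewing $A$ and $B$ as CNFs over threshold literals, with one chain tree per clause. Each clause tree first tests $x_1<2$ for a fixed $1\in X$ and votes $1$ outright on the branch belonging to the other disjunct. Adding one fewer constant-$0$ trees than clause trees makes class $1$ win iff every clause tree votes $1$.
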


\begin{proof}
  Membership in $\ddp$ follows directly from the definition 
  of CXp~\eqref{eq:cxp}: it requires the existence 
  of an assignment witnessing a prediction change, a problem in 
  NP; minimality requires that no proper subset has this property, 
  a problem in coNP. Hardness follows from~\cref{thm:dpc}.
\end{proof}

\subsection{Computing AXps}
A minimal set of features $\fml{X}\in\fml{F}$ is an AXp
if~\eqref{eq:axp} holds. Clearly, this condition holds iff the
following formula is unsatisfiable,
\[
    \left[\bigwedge\nolimits_{i\in{\fml{X}}}\bb{x_i=v_i}\right]
    \land  
    \msf{Enc}(\kappa(\mbf{x})\not=c)
\]
The previous formula has two components
$\langle\fml{H}\cup\fml{S}_{c,c'},\fml{R}\rangle$. 
\begin{enumerate}[nosep,label=\roman*.]
\item
    $\fml{H}$ represents the set of (hard) clauses encoding the
    representation of the ML model and $\fml{S}_{c,c'}$ imposing an adversarial class $c' \not=c$ 
    to be picked, i.e.\ $\msf{Enc}(\kappa(\mbf{x})\not=c)$.
\item
    $\fml{R}$ consists of unit (relaxable) clauses capturing 
    literals $\bb{x_i=v_i}$.  As such, $\fml{R}$ enforces the class assignment
    $\kappa(\mbf{v}) = c$, which makes the formula $\fml{H} \land \fml{S}_{c,c'} \land \fml{R}$
    inconsistent. 
    We are interested in searching for a minimal subset  $\fml{R}\prime$ of
    $\fml{R}$, such that,
    \[
        \left[\bigwedge\nolimits_{\bb{x_i=v_i}\in\fml{R}\prime}\bb{x_i=v_i}\right]
        \land \left[ \fml{H} \land \fml{S}_{c,c'}\right]
    \]
    is unsatisfiable. Intuitively, this corresponds to find a minimal information 
    that needs to be added to the background knowledge $\fml{H} \cup \fml{S}_{c,c'}$ to have   
    inconsistency. 
    As a results, this corresponds to finding a minimal unsatisfiable subset (MUS) of
    $\langle\fml{H} \cup \fml{S}_{c,c'},\fml{R}\rangle$.
\end{enumerate}
Hence,
 any off-the-shelf MUS extraction algorithm can be used
for computing an AXp (as noted in earlier
work~\cite{inms-aaai19}).

\paragraph{Linear Search Approach for AXp.}
The overall explanation procedure is depicted in
Algorithm~\ref{alg:del} and follows the standard deletion-based AXp
extraction~\cite{inms-aaai19}.
It receives a TE model $\mfrk{T}$ computing function $\kappa(\mbf{x})$,
a concrete data instance $\mbf{v}$ and the corresponding prediction
$c=\kappa(\mbf{v})$.
First, Algorithm~\ref{alg:del} encodes the classifier (see
line~\ref{ln:enc}), as described in Section~\ref{sec:mxenc}.
Then it iterates through the features $i\in\fml{X}$ (initially set
to $\fml{F}$) and checks if $\fml{X}\setminus\{i\}$ entails the
prediction $c$ by invoking Algorithm~\ref{alg:call}.
If it does, feature $i$ is irrelevant and is removed.
Otherwise, it must be kept in $\fml{X}$.
Algorithm~\ref{alg:del} proceeds until all features are tested, and
reports the updated~$\fml{X}$.

An immediate observation to make here is that instead of a simple
deletion-based algorithm for AXp extraction, one could apply the ideas
behind the QuickXplain and Progression
algorithms~\cite{junker-aaai04,msjb-cav13}, which in some cases may
help reducing the number of oracle calls.

Also, it should be noted that besides computing a single AXp, the approach 
described above, can be set up to compute a
single contrastive explanation (CXp) but also to enumerate either a
given number of AXps and/or CXps or exhaustively enumerate all of
them, following the minimal hitting set duality between AXps and
CXps~\cite{inams-aiia20}.

\begin{algorithm}[ht]
%
\SetInd{0.4em}{0.4em}
\SetStartEndCondition{ }{}{}%
\SetKwProg{Fn}{def}{\string:}{}
\SetKwFunction{Range}{range}
\SetKw{KwTo}{in}\SetKwFor{For}{for}{\string:}{}%
\SetKwIF{If}{ElseIf}{Else}{if}{:}{elif}{else:}{}%
\SetKwFor{While}{while}{:}{}%
\SetKwFor{ForEach}{foreach}{:}{}%
\AlgoDontDisplayBlockMarkers\SetAlgoNoEnd\SetAlgoNoLine%
\LinesNumbered
\DontPrintSemicolon%
\SetKw{KwNot}{not\xspace}
\SetKw{KwAnd}{and\xspace}
\SetKw{KwOr}{or\xspace}
\SetKw{KwBreak}{break\xspace}
\SetKwData{false}{{\small false}}
\SetKwData{true}{{\small true}}
\SetKwData{st}{{\small \slshape st}}
\SetKwFunction{CNF}{CNF}
\SetKwFunction{SAT}{SAT}
\SetKwInOut{Input}{input}\SetKwInOut{Output}{output}
\newcommand\commfont[1]{\footnotesize\em{#1}}
\SetKwComment{tcpy}{\# }{}
\SetCommentSty{commfont}
\SetKwFunction{tfunc}{{\sc ExtractAXp}} %
\SetKwFunction{efunc}{{\sc EntCheck}} %
\SetKwFunction{cfunc}{{\sc Encode}} %
\Func \tfunc{$\fml{E})$} \;
\Indp
\KwIn{
  $\fml{E} = (\mfrk{T},(\mbf{v},c))$ s.t. 
  $\mfrk{T}$: TE computing $\kappa(\mbf{x})$,
  \hspace{0.4em}$\mbf{v}$: Input instance, 
  \hspace{0.4em}$c$: Prediction, i.e.\ $\kappa(\mbf{v})=c$
}
\KwOut{
  $\fml{X}$: abductive explanation
}
\setcounter{AlgoLine}{0}
\BlankLine
{
  $\langle\fml{H},\fml{S}\rangle\gets\cfunc(\fml{E})$\tcpy*[r]{SAT/MaxSAT/SMT enc. s.t.\ $\fml{S}\triangleq\cup\fml{S}_{c,c'}$}\label{ln:enc}
  $\fml{X} \gets \fml{F}$ \tcpy*[r]{$\fml{X}$ is over-approximation}
  \ForEach(\label{ln:iter}){$i\in\fml{X}$}{
    \uIf(\tcpy*[f]{need $i$?}){$\efunc(\langle\fml{H},\fml{S}\rangle,\fml{E},\fml{X}\setminus\{i\})$}{ 
        $\fml{X} \gets \fml{X} \setminus \{i\}$ \tcpy*[r]{if not, drop it} 
      }
    }
  \Return $\fml{X}$ \tcpy*[r]{$\fml{X}$ is AXp}
}
\Indm
%

  \caption{Deletion-based AXp extraction} \label{alg:del}
\end{algorithm}

\subsection{Computing CXps}
Clearly, adapting the described procedure above of computing one AXp to 
one that computes a CXp is straightforward.  That is, the minimal set $\fml{Y}$
of $\fml{F}$ to search is, such that,
\[
\left[\bigwedge\nolimits_{i \in\fml{F}\setminus\fml{Y}}\bb{x_i=v_i}\right]
\land
\msf{Enc}(\kappa(\mbf{x})\not=c)
\]
is satisfiable. Further, hitting set duality between  AXps and CXps allows  
to exploit any algorithm for computing MUSes/MCSes
  to
enumerate both kinds of explanations (AXps and CXps).  (Recent
work~\cite{inams-aiia20,msgcin-icml21,IgnatievS21} exploits the MUS/MCS
enumeration algorithm MARCO~\cite{lpmms-cj16} for enumerating
AXps/CXps.)

\paragraph{Linear Search Approach for CXp.}
\Cref{alg:cxp-del} depicts the deletion-based (or linear Search) approach 
to find one CXp.
In the case of a CXp, the features in $\fml{Y}$, if fixed, cause
that no counter-example for target class $c$ can be identified.
Features are freed as long as the loop invariant is preserved.
An oracle for checking the entailment  that the prediction 
$\kappa(\mbf{x})\not=c$, is used to decide 
whether or not the loop invariant is preserved when another
feature is freed.
It can be shown that the final set $\fml{Y}$ is a CXp.
The proposed algorithms are among the simplest that can be
devised.%
\footnote{%
The problem of finding one AXp/CXp represents an example of
computing one minimal set subject to a monotone predicate
(MSMP)~\cite{msjm-aij17}, and so existing algorithms for MSMP can be
used.}
Nevertheless, other more sophisticated algorithms 
(see e.g.~\cite{mshjpb-ijcai13,mpms-ijcai15}) for deciding the
existence of adversarial examples can be used, which require in
practice fewer calls to the oracle than the linear search algorithm 
outlined in the paper.

\begin{algorithm}[ht]
%
\SetInd{0.4em}{0.4em}
\SetStartEndCondition{ }{}{}%
\SetKwProg{Fn}{def}{\string:}{}
\SetKwFunction{Range}{range}
\SetKw{KwTo}{in}\SetKwFor{For}{for}{\string:}{}%
\SetKwIF{If}{ElseIf}{Else}{if}{:}{elif}{else:}{}%
\SetKwFor{While}{while}{:}{}%
\SetKwFor{ForEach}{foreach}{:}{}%
\AlgoDontDisplayBlockMarkers\SetAlgoNoEnd\SetAlgoNoLine%
\LinesNumbered
\DontPrintSemicolon%
\SetKw{KwNot}{not\xspace}
\SetKw{KwAnd}{and\xspace}
\SetKw{KwOr}{or\xspace}
\SetKw{KwBreak}{break\xspace}
\SetKwData{false}{{\small false}}
\SetKwData{true}{{\small true}}
\SetKwData{st}{{\small \slshape st}}
\SetKwFunction{CNF}{CNF}
\SetKwFunction{SAT}{SAT}
\SetKwInOut{Input}{input}\SetKwInOut{Output}{output}
\newcommand\commfont[1]{\footnotesize\em{#1}}
\SetKwComment{tcpy}{\# }{}
\SetCommentSty{commfont}
\SetKwFunction{tfunc}{{\sc ExtractAXp}} %
\SetKwFunction{efunc}{{\sc EntCheck}} %
\SetKwFunction{cfunc}{{\sc Encode}} %
\Func \tfunc{$\fml{E}$} \;

\Indp
\KwIn{
$\fml{E} = (\mfrk{T},(\mbf{v},c))$ s.t. 
  $\mfrk{T}$: TE computing $\kappa(\mbf{x})$,
  \hspace{0.4em}$\mbf{v}$: Input instance, 
  \hspace{0.4em}$c$: Prediction, i.e.\ $\kappa(\mbf{v})=c$
}
\KwOut{
  $\fml{Y}$: contrastive explanation
}
\setcounter{AlgoLine}{0}
\BlankLine
{
  $\langle\fml{H},\fml{S}\rangle\gets\cfunc(\fml{E})$\tcpy*[r]{SAT/MaxSAT/SMT enc. s.t.\ $\fml{S}\triangleq\cup\fml{S}_{c,c'}$}\label{ln:enc}
  $\fml{Y} \gets \fml{F}$ \tcpy*[r]{Initially, no feature in reference set is fixed}
  \ForEach(\label{ln:iter}){$i\in\fml{Y}$}{
    \uIf(\tcpy*[f]{need $i$?}){$\KwNot~\efunc(\langle\fml{H},\fml{S}\rangle, \fml{E}, (\fml{F} \setminus\fml{Y}) \cup \{i\}))$}{
        $\fml{Y} \gets \fml{Y} \setminus \{i\}$ \tcpy*[r]{if yes, drop it} 
      }
    }
  \Return $\fml{Y}$ \tcpy*[r]{$\fml{Y}$ is CXp}
}
\Indm
%

  \caption{Deletion-based CXp extraction} \label{alg:cxp-del}
\end{algorithm}

\section{Explanation Encodings for Tree Ensembles}

\subsection{SMT-based Explanations for TEs} \label{sec:smtbt}
This section\footnote {This work has been previously published in~\cite{inms-corr19,ignatiev-ijcai20}.} proposes an SMT encoding of an ensemble of decision trees
produced by XGBoost algorithm. 

\paragraph{SMT Encoding for BTs.}
At a high level, our encoding simulates the scores computation for a possible input.  We introduce three sets of variables. For a binary 
feature $i$ we introduce a Boolean variable $\bb{x_i = 1}$, $i \in [m]$.  
These Boolean variables represent the space of all possible inputs.
In case of categorical data, a common approach is to apply \emph{one hot encoding} 
to convert discrete values to binary values. This transformation is performed 
on the original data. Hence, our encoding can be applied directly 
with a small augmentation. We enforce that exactly one of binary features 
that encode a categorical feature can be true.
The case of continuous features is handled similarly. The main difference is that 
logical predicates in a node are of the form $(x_i < d)$, where 
$d$ is a constant value. Here, for each predicate that occurs 
in $T_t, T_t \in \mfrk{T}$, we introduce 
$\bb{x_i < d}$ Boolean variable  such that
$\neg \bb{x_i < d}$ represents $(x_i \ge d)$.
For the $t$th tree $T_t$ we introduce a real valued 
variable $r_t$, $t \in [m\times K]$. The $r_t$ variable encodes
the score contribution from the $t$th tree. Finally, for the $j$th class we introduce 
one variable $v_j$, $j\in[K]$ that stores the score of the $j$th class.
%
Then, we encode the score computation for a tree $t_l \in T$ by
encoding all paths in a tree $t_t$ as follows. 
We recall that if a path $R_l$ follows the right (left, resp.) 
branch in a node $(x_i < d)$ then the predicate $\bb{x_i < d}$ has to hold 
(to be violated, resp.). Then, we enforce, 

$$
    \left( \bigwedge\nolimits_{(x_i < d) \in R_l} \bb{x_i < d} \wedge
    \bigwedge\nolimits_{\neg (x_i < d) \in R_l} \neg \bb{x_i < d} \right) \rightarrow r_t = w_l
$$

To compute the score of the $j$th class we add $K$ constraints,  $j \in [K]$:
$v_j = \sum_{t=1}^q r_{qj+t}$.

\begin{example}
Consider how the encoding works on the running example shown in~\cref{fig:BT}.
Consider the first tree.
For simplicity, we denote the index attribute of the root node as `petal.length'.
 We have two paths in the tree.
For the first path, we add a constraint
$\bb{\text{\em petal.length} < 2.45} \rightarrow r_1 = 0.42762$
and for the second path we add a constraint
$\neg \bb{\text{\em petal.length} < 2.45} \rightarrow r_1 = -0.21853$.
Again consider the fourth tree. We have two paths, which give 
$\bb{\text{\em petal.length} < 2.45} \rightarrow r_4 = 0.29522$ 
and 
$\bb{\text{\em petal.length} < 2.45} \rightarrow r_4 = -0.19674$
With two trees per class, we get $v_1= r_1 + r_4$. Other trees could be
encoded similarly.
\end{example}

\paragraph{SMT Encoding for RFs.}
%
%
%
Building on the SMT encoding for BTs, the formulation for RFs 
follows naturally. Each individual decision tree (DT) in the ensemble 
is represented as a set of logical implication rules, capturing the tree’s 
structure and decision paths.
Specifically, the antecedent of each rule is a conjunction of predicates 
corresponding to the non-terminal nodes along a path, while 
the consequent is a predicate representing the class label assigned 
at the terminal node.
%
%
%
%
Next, we aggregate the prediction (votes) of the DTs and count the prediction 
score for each class. This can be expressed by an arithmetic function that calculates 
the sum of trees predicting the same class. Lastly, a linear inequality 
checks which class has the largest score. 

The implementation of the encoding above resulted in performance
results comparable to those of BTs \cite{ignatiev-ijcai20}. However, in the
case of RFmv it is possible to devise a purely propositional encoding,
as detailed next.

\subsection{SAT-based Explanations for RFs} \label{sec:satrf}
Our goal is to represent the structure of an RFmv with a propositional
formula.\footnote {This work has been published in~\cite{ims-ijcai21}.} 
This requires abstracting away what will be shown to be
information used by the RFmv that is unnecessary for finding an
explanation.
We start by detailing how to encode the nodes of the decision trees in
an RF. This is done such that the actual values of the features are
abstracted away. Then, we present the encoding of the RFmv classifier
itself, namely how the majority voting is represented.

To encode a terminal node of a DT, we proceed as follows.
Given a set of classes  $\fml{K} = \{c_1, c_2, \ldots, c_K \}$, a
terminal node of $R_l$ labeled with one class of $\fml{K}$. Then, 
we define for each $c_j \in{\fml{K}}$ a variable $l_{tj}$ and represent 
the terminal node of a path $R_l$ with its corresponding 
label class $c_j$, i.e. $cl(l) = c_j$.

Moreover, the encoding of a non-terminal node of a DT is organized as
follows.
Given a feature $i \in{\fml{F}}$ associated with a non-terminal node of $R_l$, 
with a domain $\mbb{D}_i$, each outgoing edge of $l$ is represented by 
a literal  $\bb{x_i \bowtie S_i}$ s.t. $x_j \in{\fml{F}}$ is the variable 
of  feature $i$, $S_i \bowtie \mbb{D}_i$ and $\bowtie\:\in{\set{=, \subseteq, \in}}$.  
Hence we distinguish three cases for encoding $(x_i \bowtie S_i)$.
For the first case, feature $i$ is binary, and so the literal
$\bb{x_i \bowtie S_i}$ is True if  $x_i = 1$ and False if $x_i = 0$.
For the second case, feature $i$ is categorical, and so we introduce a
Boolean variable $z_j$  
for each value $v_j \in \mbb{D}_i$ s.t.\ $z_j = 1$ iff $x_i = v_j$. 
Assume $S_i =\set{v_1, \ldots, v_n}$, then we connect $\bb{x_i \in S_i}$ to variables $z_j, j = 1, \ldots, n$  
as follows: $\bb{x_i \in S_i} \lequiv (z_1 \vee \ldots \vee z_n)$ or  $\neg \bb{x_i \in S_i} \lequiv (z_1 \vee \ldots \vee z_n)$, 
depending on whether the current edge is going to left or right child-node. 
Finally, for the third case, feature $i$ is real-valued. 
Thus, $S_i$ is either an interval or a union of intervals.
Concretely, we  consider all the splitting thresholds 
of the feature $i$ existing in the model and we generate (in order) all the possible intervals. 
Each interval $I_j^i  \subseteq \mbb{D}_i$ is denoted 
by a Boolean variable $z_j$.  Let us assume $S_i = I_1^i \cup \ldots \cup I_n^i$ and 
$z_1, \ldots, z_n$ are variables associated with   $I_1^i \cup \ldots \cup I_n^i$, then 
we have $\bb{x_i \subseteq S_i} \lequiv (z_1 \vee \ldots \vee z_n)$ or 
$\neg \bb{x_i \subseteq S_i} \lequiv (z_1 \vee \ldots \vee z_n)$. 
Moreover, this encoding can be reduced into a more compact set of
constraints  (that also improves propagation in the SAT solver).
Indeed, if the number of intervals is large the encoding will be as
well. Hence, we propose to use auxiliary variables in the encoding.
Assume $\mbb{D}_i = I_1^i \cup I_2^i \cup I_3^i $, nodes $(x_i > s_{i,1})$ and 
$(x_i > s_{i,2})$  
such that  $\neg \bb{x_i > s_{i,1}} \lequiv z_1$, $ \bb{x_i > s_{i,1}} \lequiv ( z_2 \vee z_3)$, 
$\neg \bb{x_i > s_{i,2}} \lequiv (z_1 \vee z_2)$, $ \bb{x_i > s_{i,2}} \lequiv z_3$. 
Hence, 
this can be re-written as:  
 $\neg \bb{x_i > s_{i,1}} \lequiv z_1$,  $\bb{x_i > s_{i,1}} \lequiv (z_2 \vee \bb{x_i > s_{i,2}})$, 
 $\neg \bb{x_i > s_{i,2}} \lequiv (z_2 \vee \neg \bb{x_i > s_{i,1}})$, $\bb{x_i > s_{i,2}} \lequiv z_3$.
%

%
For every $T_t \in \mfrk{T}$,  we encode the paths 
 $R_l$ of $T_t$ as follows:
$$
    \left( \bigwedge\nolimits_{(x_i \bowtie S_i) \in R_l} \bb{x_i \bowtie S_i}  \wedge
        \bigwedge\nolimits_{\neg (x_i \bowtie S_i) \in R_l} \neg \bb{x_i \bowtie S_i} \right)
    \limply l_{tj}
$$
%
where $l_{tj}$ is a literal associated with $T_t$ in which 
the voted class is $c_j \in{\fml{K}}$.  Moreover, we enforce 
the condition that exactly one variable  $l_{tj}$ is {\it True} in 
$T_i$ --- only one class is returned by  $T_i$.  
This can be expressed as:\footnote{%
  We use the well-known cardinality networks~\cite{asin2011} for
  encoding all the cardinality constraints of the proposed encoding.}

$$ 
\left(\bigvee\nolimits_{j=1}^{K} l_{tj}\right)  ~ \wedge ~  \sum\nolimits_{j=1}^{K} l_{tj} \leq 1
$$
%

\begin{remark}
The size of the SAT encoding is linear in the number of clauses 
w.r.t. the size of the DT, yet exponential in the number of literals; an analogous 
observation holds for the MaxSAT encoding. 
While this exponential blowup in literals is an inherent characteristic 
of the encoding scheme, in practice, however, this does not pose a scalability 
concern, given that the TEs under consideration are of depth at most 3--8, keeping 
the encoding tractable.
\end{remark}

Finally, for capturing the majority voting used in RFmv, we need to
express the constraint that the counts (of individual tree selections)
for some class $c_k\not=c_j$ have to be large enough (i.e.\ greater
than, or greater than or equal to) when compared to the counts for
class $c_j$.
We start by showing how cardinality constraints can be used for
expressing such constraints. The proposed solution requires the use of
$K-1$ cardinality constraints, each comparing the counts of $c_j$ with
the counts of some other $c_k$.
Afterwards, we show how to reduce to 2 the number of cardinality
constraints used.

Let $c_j\in\fml{K}$ be the predicted class. The index $j$ is relevant
and ranges from 1 to $K$.
Moreover, let $1\le{k}<{j}\le{K}$.
Class $c_k$ is selected over class $c_j$ if:
\begin{equation} \label{eq:be01}
  \sum\nolimits_{t=1}^{n}l_{tk}>\sum\nolimits_{t=1}^{n}l_{tj} \lequiv
  \sum\nolimits_{t=1}^{n}l_{tk}+\sum\nolimits_{t=1}^{n}\neg{l_{tj}}\ge{n}
\end{equation}
Similarly, let $1\le{j}<{k}\le{K}$.
Class $c_k$ is selected over class $c_j$ if:
\begin{equation} \label{eq:be02}
  \sum\nolimits_{t=1}^{n}l_{tk}>\sum\nolimits_{t=1}^{n}l_{tj} \lequiv
  \sum\nolimits_{t=1}^{n}l_{tk}+\sum\nolimits_{t=1}^{n}\neg{l_{tj}}\ge{n+1} 
\end{equation}
Both~\eqref{eq:be01} and~\eqref{eq:be02} are cardinality constraints
that can be encoded e.g.\ with a cardinality network~\cite{asin2011}.

\paragraph{Special case for $|\fml{K}|=2$.}
If $|\fml{K}|=2$, then we can avoid direct comparison between the
counts of $c_0$ and $c_1$.
Concretely, if the prediction is $c_0$, then we are interesting in the
count of $c_1$'s to exceed the count of $c_0$'s. This can only happen
when:
$$
\sum_{t=1}^{n}l_{t1}\ge\left\lfloor\frac{n}{2}\right\rfloor+1\\
$$
Similarly, if the prediction is $c_1$, then we are interesting in the
count of $c_0$'s to exceed the count of $c_1$'s. This can only happen
when:
\[
\sum_{t=1}^{n}l_{i0}\ge\left\lfloor\frac{n}{2}\right\rfloor\\
\]
Clearly, these conditions simplify significantly the proposed
encodings.

\paragraph{General case for $|\fml{K}|>2$.}
It is possible to reduce the number of cardinality constraints as
follows.
Let us represent \eqref{eq:be01} and \eqref{eq:be02}, respectively, by
$\msf{Cmp}_{\prec}(z^{\prec}_1,\ldots,z^{\prec}_n)$ and
$\msf{Cmp}_{\succ}(z^{\succ}_1,\ldots,z^{\succ}_n)$. (Observe that the
encodings of these constraints differ (due to the different RHSes).)
Moreover, in~\eqref{eq:be01} and~\eqref{eq:be02} we replace 
 $(z^{\prec}_1,\ldots,z^{\prec}_n)$ and
$(z^{\succ}_1,\ldots,z^{\succ}_n)$ resp.\ by $l_{1k},\ldots,l_{nk}$,
for some $k$. The idea is that we will \emph{only} use two
cardinality constraints, one for \eqref{eq:be01} and one for
\eqref{eq:be02}.

Let $p_k=1$ iff $k$ is to be compared with $j$.
In this case,
$\msf{Cmp}_{\bowtie}(z^{\bowtie}_1,\ldots,z^{\bowtie}_n)$ (where
$\bowtie$ is either $\prec$ or $\succ$) is comparing the class counts
of $c_j$ with the class counts of some $c_k$.
Let us use the constraint $p_k\limply(z^{\bowtie}_t\leftrightarrow{l_{tk}})$,
with $k\in\{1,\ldots,{K}\}\setminus\{j\}$, and
$1\le{t}\le{n}$. This serves to allow associating the (free) variables
$(z^{\bowtie}_1,\ldots,z^{\bowtie}_n)$ with some set of literals
$(l_{1k},\ldots,l_{nk})$. 
Moreover, we also let $p_0\limply(z^{\succ}_t \leftrightarrow{1})$ and 
$p_j\limply(z^{\prec}_t \leftrightarrow{1})$, i.e.\ we allow $p_0$
and $p_j$ to pick \emph{guaranteed winners}, and such that
$\neg{p_0}\lor\neg{p_j}$.
Essentially, we allow for a guaranteed winner to be picked below $j$
or above $j$, but not both.
(Observe that $p_0$ is an auxiliary variable used in the encoding but 
does not represent a class of $\fml{K} = [K]$.) 
Clearly, we must pick one $k$, either below or above $j$, to pick a
class $c_k$, when comparing the counts of $c_k$ and $c_j$.
We do this by picking two winners, one below $j$ and one above $j$,
and such that at most one is allowed to be a guaranteed winner (either
0 or $j$):
\[
\left(\sum\nolimits_{r=0}^{j-1}p_r=1\right)\:\land\:\left(\sum\nolimits_{r=j}^{K}p_r=1\right)
\]
Observe that, by starting the sum at 0 and $j$, respectively, we allow
the picking of one guaranteed winner in each summation, if needed be.
To illustrate the  described  encoding above we consider again the RFmv 
running example.
\begin{example}
Consider again the RFmv $\mfrk{T}$ of the running example 
\cref{fig:RF(mv)} and the instance 
$\mbf{v} = (6.0, 3.5, 1.4, 0.2)$ , giving the prediction $c_1$ (i.e.\ ``Setosa''). 
Let us define the Boolean variables $\bb{x_1 < 5.55}$ and 
$\bb{x_1 \in I^1_k}, k\in[2]$ associated with feature  {\it sepal.length}, 
$\bb{x_2 < 2.75}$ and $\bb{x_2 \in I^2_k}, k\in[2]$ associated with feature   
{\it sepal.width},  $\bb{x_3 < 4.75}$  and $\bb{x_3 \in I^3_k}, k\in[2]$ 
associated with feature  {\it petal.length}, and 
$\bb{x_4 < 0.75}, \bb{x_4 < 1.55}, \bb{x_4 < 1.65}$ and  
 $\bb{x_4 \in I^4_k}, k\in[4]$ 
associated with features {\it petal.width}. 
 Also, to represent the classes $c_1, c_2, c_3$,  
 we associate Boolean variables  $l_{tj}, j\in[K]$ denoting 
 the classes for each tree $T_t$.
 Hence, the corresponding set of encoding constraints is:
  $$ 
  \begin{array}{rr}
    \bb{x_1 < 5.55} \land \bb{x_2 < 2.75} \rightarrow l_{12}, \;  \bb{x_1 < 5.55} \land \neg \bb{x_2 < 2.75} \rightarrow l_{11} \\   
    \neg\bb{x_1 < 5.55} \land \bb{x_4 < 1.55} \rightarrow l_{12}, \;  \neg\bb{x_1 < 5.55} \land \neg\bb{x_4 < 1.55} \rightarrow l_{13} \\
     \bb{x_4 < 0.75} \rightarrow l_{21}, \; \neg\bb{x_4 < 0.75} \land \bb{x_3 < 4.75} \rightarrow l_{22}, \; 
     \neg\bb{x_4 < 0.75} \land \neg\bb{x_3 < 4.75} \rightarrow l_{23} \\
     \bb{x_4 < 0.75} \rightarrow l_{31}, \; \neg\bb{x_4 < 0.75} \land \bb{x_4 < 1.65} \rightarrow l_{32}, \; 
     \neg\bb{x_4 < 0.75} \land \neg\bb{x_4 < 1.65} \rightarrow l_{33} \\    
    \\
    (l_{11} + l_{21} + l_{31}) \ge 2,\;  \bb{x_1 \in I^1_2},\; \bb{x_2 \in I^2_2},\;  \bb{x_3 \in I^3_1},\;  \bb{x_4 \in I^4_1}  & \hspace*{0.7cm} \Box
  \end{array}
  $$
 %
 %
 Observe that $\{ \bb{x_1 \in I^1_2}, \bb{x_2 \in I^2_2},  \bb{x_3 \in I^3_1},  \bb{x_4 \in I^4_1} \}$ denotes the set of
 the relaxable constraints whereas the remaining are the hard
 constraints.
 Note that in this example, the constraints are not addressed  in 
 a clausal form in order to simplify the understanding for the reader.   
\end{example}

\paragraph{Encoding Using a Comparator.}
We can represent $\sum\nolimits_{t=1}^{n}l_{tk}$ with some circuit
that sorts the bits $l_{tk}$ in unary, i.e.\ the output unary
representation represents a count of the number of $l_{tk}$ bits
assigned value 1. Let
\[
({s_1},\ldots,s_n)=\msf{Sort}(l_{1k},\ldots,l_{nk})
\]
where the MSB is $s_1$ and the LSB is $s_n$\footnote{%
  MSB denotes \emph{most significant bit}, and LSB denotes \emph{least
    significant bit}.}.
(Well-known examples include cardinality networks, sorting networks,
etc.)

Since we want to compare $(z_1,\ldots,z_M)$ with
$(l_{1j},\ldots,l_{nj})$, we are going to develop a comparator for the
output bits of the two cardinality networks.
Let,
$$
\begin{array}{l}
  ({s_1},\ldots,{s_n})=\msf{Sort}({z_1},\ldots,{z_n})\\[2pt]
  ({t_1},\ldots,{t_n})=\msf{Sort}({l_{1j}},\ldots,{l_{nj}})\\
\end{array}
$$
where $s_1,t_1$ are the MSB's and $s_n,t_n$ are the LSB's.

\begin{itemize}
  \item
    {\bf Case: $({s_1},\ldots,{s_n})<({t_1},\ldots,{t_n})$.}
    Let $eq_0=1$, $lt_0=0$ (respectively for \emph{equal to} and
    \emph{less than}). Then, we can recursively define $lt_i$ and $eq_i$,
    with $i=1,\ldots,n$, as follows:
    $$
    \begin{array}{l}
      lt_i \leftrightarrow
      lt_{i-1} \lor
      eq_{i-1} \land ( {t_i} \land \neg{s_i} ) \\[1pt]
      eq_i \leftrightarrow
      eq_{i-1} \land ( {t_i} \leftrightarrow {s_i} ) \\
    \end{array}
    $$
    Hence, we have $({s_1},\ldots,{s_n})<({t_1},\ldots,{t_n})$ iff
    $lt_{n}=1$. 
    
    \item 
    {\bf Case: $({s_1},\ldots,{s_n})\le({t_1},\ldots,{t_n})$.}
    Given the above, we have $({s_1},\ldots,{s_n})\le({t_1},\ldots,{t_n})$
    iff $lt_{n}=1\lor{eq_n}=1$.
    %
 \end{itemize}

\subsection{MaxSAT-based Explanations for TEs\protect\footnotetext{This work is an extension of our paper~\cite{iism-aaai22}.}} \label{sec:mxsat}
%
%
We start by detailing how to encode the trees of a TE $\mfrk{T}$ 
before describing the classification function $\kappa$. 
This paper exploits the propositional encoding used in recent
work for computing formal explanations for  RFs and  BTs 
(XGBoost~\cite{guestrin-kdd16a})~\cite{ims-ijcai21,iism-aaai22}.
To the best of our knowledge, there is no existing propositional 
encoding to explain RF with weighted voting. 
As a result, we aim to address this gap and present a generic 
MaxSAT encoding for TE capturing RFwv, RFmv and BT.
The encoding comprises: 1) the structure of a TE $\mfrk{T}$, and 2) the
weighted voting prediction $(\kappa(\mbf{x})\not=c)$.
%

\subsubsection{Propositional Encoding of a Tree} \label{sec:mxenc}
Given a tree ensemble $\mfrk{T}$ defining classifier $\kappa$, 
for each feature $i \in \fml{F}$
we can define the set of \emph{split points} $S_i \subset \mbb{D}_i$ that
appear for that feature in all trees, i.e. $S_i = \{ d ~|~ x_i < d
\text{~appears in some tree~} T_1, \ldots T_n \}$.
Suppose $[s_{i,1}, s_{i,2}, \ldots, s_{i,|S_i|}]$ are the split points in
$S_i$ sorted in increasing order.
Using this we can define set of disjoint intervals for feature $i$:
$I^i_1 \equiv [\min(\mbb{D}_i),  s_{i,1})$,
$I^i_2 \equiv  [s_{i,1}, s_{i,2})$, \ldots,
$I^i_{|S_i|+1} \equiv [s_{i,|S_i|}, \max(\mbb{D}_i)]$
where $\mbb{D}_i = I^i_1 \cup I^i_2 \cup \cdots \cup I^i_{|S_i|+1}$.
The key property of a TE is that  
$\forall \mbf{x} \in I^1_{e_1} \times I^2_{e_2} \times \cdots \times I^m_{e_m}$ 
s.t.\  $e_i$ is defined as the interval index for feature $i$ which includes
the value of $\mbf{v}$, i.e. $v_i \in I^i_{e_i}$, then we have   
$\kappa(\mbf{x}) =  \kappa(\mbf{v}) = c$.
That is we only need to reason about intervals rather than particular values
to explain the behaviour of a tree ensemble.
%


%
Next, we can ``name'' these $|S_i|+1$ intervals with the use of
auxiliary Boolean variables $\bb{x_i\in I^i_k}$, $i\in\mathcal{F}$ and
$k\in[|S_i|+1]$ s.t.\ $\bb{x_i\in I^i_k}=1$ iff $x_i\in I^i_k$.
These variables can be used to represent \emph{concrete} feature
values of a data instance $\mbf{v}\in\mbb{F}$, i.e.\ with a slight
abuse of notation we can use $\bb{x_i=v_i}\equiv \bb{x_i\in I^i_k}$ iff
$v_i\in I^i_k$.
Furthermore, let us use additional Boolean variables $\bb{x_i < \sep_{i,k}}$ for
each of the threshold values $\sep_{i,k}$, $k\in[m_i]$, for feature 
$i$, i.e.\ variable $\bb{x_i < \sep_{i,k}}=1$ iff $x_i < \sep_{i,k}$.
Then the domain $D_i$ can be expressed using the following constraints
\begin{align}
  \bb{x_i < \sep_{i,k}} \rightarrow \bb{x_i < \sep_{i,k+1}},\;\forall_{k\in [|S_i|-1]}
  \label{eq:order} \\
  \bb{x_i\in I^i_1} \leftrightarrow \bb{x_i < \sep_{i,1}} \label{eq:firstint} \\
  \bb{x_i\in I^i_{k+1}} \leftrightarrow(\neg \bb{x_i < \sep_{i,k}} \land
  \bb{x_i < \sep_{i,k+1}}),\;\forall_{k\in [|S_i|-1]} \label{eq:int}\\
   \bb{x_i\in I^i_{|S_i|+1}}\leftrightarrow \neg\bb{x_i < \sep_{i,|S_i|}} \label{eq:lastint} \\
  \bigvee\nolimits_{k\in[|S_i|+1]}  \bb{x_i\in I^i_{k}} \label{eq:al1int}
\end{align}
Note that in the case of one threshold value $\cond$ for a feature
$i\in\mathcal{F}$, \emph{two} distinct intervals exist and it
suffices to consider a single Boolean variable \bb{x_i < d} to represent it.
Also note that the encoding above can be related to the order encoding
of integer domains studied in the context of mapping CSP to
SAT~\cite{ansotegui-sat04}.
%
%

\begin{example} \label{ex:enc}
  In the BT model shown in Figure~\ref{fig:BT},
  there are 4 thresholds for feature \emph{petal.length}, namely,
  values 2.45, 3, 4.75, and 4.85.
  Assuming \emph{petal.length} is a 3rd feature, let us introduce
  variables $\bb{x_3 < 2.45}, \dots, \bb{x_3 < 4.85}$, and $\bb{x_3\in I^3_k}$ for $k\in[5]$.
  %
  %
  The following constraints can be used to express the domain of the
  possible values for feature 3:
  $$
  \begin{array}{rr}
    \bb{x_3 < 2.45}\rightarrow \bb{x_3 < 3},\;\bb{x_3 < 3}\rightarrow \bb{x_3 < 4.75},\;\bb{x_3 < 4.75}\rightarrow \bb{x_3 < 4.85} \\
    \bb{x_3\in I^3_2}\leftrightarrow(\neg\bb{x_3 < 2.45}\land \bb{x_3 < 3}),\;\bb{x_3\in I^3_3}\leftrightarrow(\neg\bb{x_3 < 3}\land \bb{x_3 < 4.75}) \\
    \bb{x_3\in I^3_4}\leftrightarrow(\neg\bb{x_3 < 4.75}\land \bb{x_3 < 4.85}),\;\bb{x_3\in I^3_1}\leftrightarrow \bb{x_3 < 2.45},\;\bb{x_3\in I^3_5}\leftrightarrow \neg\bb{x_3 < 4.85} \\
    (\bb{x_3\in I^3_1}\lor \bb{x_3\in I^3_2} \lor \bb{x_3\in I^3_3} \lor \bb{x_3\in I^3_4} \lor \bb{x_3\in I^3_5})& \hspace*{0.7cm} \Box
  \end{array}
  $$
\end{example}


A leaf node $\mfrk{n} \in T_i$ can be encoded by a Boolean
variable $t_r$ s.t.\ $t_r=1$ iff node $\mfrk{t}_r$ is reached.
Each leaf node $\mfrk{n}\in T_t$ corresponds to a path
$R_l$, an encoded by a Boolean variable $\bb{R_l}$ such that 
$\bb{R_l} = 1$ iff node $\mfrk{n}$ is reached.
Thus, we can encode each path $R_l$ as a conjunction of
literals over variables $\bb{x_i < d}, d \in S_i$ representing the conditions 
$x_i < d$ on the features $i$ appearing in $R_l$, and relate it with the
corresponding leaf node $\mfrk{n}$ using the following constraint:
\begin{equation} \label{eq:leaf}
    \left( \bigwedge\nolimits_{(x_i < d) \in R_l} \bb{x_i < d} \wedge
    \bigwedge\nolimits_{\neg (x_i < d) \in R_l} \neg \bb{x_i < d} \right) \leftrightarrow \bb{R_l} 
\end{equation}
Note that if multiple trees in an ensemble $\mfrk{T}$ share a path
$R_l$ (this often happens for tree ensembles trained with XGBoost), a
single variable $\bb{R_l}$ can be used to represent all of its occurrences.

Together with the above constraints, we must ensure that exactly one
path in each tree $T_t$ is executed at a time.
This can be enforced by using constraint
$\sum_{\bb{R_l} \in T_t}\bb{R_l}=1$.

\begin{example} \label{ex:leaf}
  As shown in Example~\ref{ex:enc}, feature 3 \emph{(petal.length)}
  has 5 intervals and so 4 associated variables $\bb{x_3 < \sep_{3,k}}$, $k\in[4]$,
  e.g.\ $\bb{x_3 < \sep_{3,1}}=1$ iff $x_3< 2.45$.
  As tree $T_1$ has 2 paths and so 2 leaf nodes, they can be
  represented using the constraints:
  \begin{align*}
    \bb{x_3 < 2.45} \leftrightarrow \bb{R_1},\;\;
    \neg\bb{x_3 < 2.45}\leftrightarrow \bb{R_2},\;\;
    \bb{R_1} + \bb{R_2} = 1
  \end{align*}
  where the corresponding leaf nodes $\mathfrak{n}_1$ and
  $\mathfrak{n}_2$ have weights $w_1=0.42762$ and $w_2=-0.21853$,
  respectively.
  Also, observe that variables $\bb{R_1}$ and $\bb{R_1}$ can be reused to encode
  the leaves of tree $T_4$.
  \qed
\end{example}

Now, let us denote the CNF formula encoding the paths and the leaf
nodes of a tree $T_t$ in ensemble $\mfrk{T}$, by $\fml{H}_{T_i}$,
following \eqref{eq:leaf}.
Also, let us denote the CNF formula encoding the domain of each
feature $i\in\fml{F}$ including constraints
\eqref{eq:order}--\eqref{eq:al1int} by $\fml{H}_{D_i}$.
Finally, in the following, we will use
\begin{equation} \label{eq:henc}
  \fml{H}=\bigwedge\nolimits_{T_t\in\mfrk{T}}{\fml{H}_{T_t}} \land
  \bigwedge\nolimits_{i\in\fml{F}}{\fml{H}_{D_i}}
\end{equation}
to represent the hard part of the MaxSAT encoding of TE $\mfrk{T}$.


%
\subsubsection{Dealing with Classifier Predictions} \label{sec:mxobj}
In order to formally reason about TEs, one has to additionally encode
the process of determining the prediction of the model obtained for an
input data instance according to~\eqref{eq:wght}.
For this, when using the language of SMT~\cite{inms-corr19}, it
suffices to introduce a real variable for each tree $T_t$ as well
as for the total weight $W(c,\mbf{x})$ of class $c$, and relate them with a
linear constraint \eqref{eq:wght}.
As a result, entailment queries \eqref{eq:axp} boil down to deciding
whether a misclassification is possible for the classifier
$\kappa(\mbf{x})$ given a candidate explanation $\fml{X}$, i.e.\ whether
there is a data instance $\mbf{v}'$ that complies with $\fml{X}$ such
that $W(c',\mbf{v}') \geq W(c,\mbf{v}')$ for some class 
$c'\in\fml{K},~c'\neq c$.

In contrast to SMT, in the case of propositional logic one has to
represent the linear constraints \eqref{eq:wght} integrating
\emph{all} individual leaf-node variables $\bb{R_l}$ multiplied by the
corresponding weight, i.e.\ summands $w_l \cdot \bb{R_l}$, to determine the
total weights of the classes.
The total weights of a class $c$ can be determined by 
$\sum_c=\sum_{T_{t}\in\mfrk{T}}{\sum_{\bb{R_l}\in T_{t}, cl(l) = c}{w_l\cdot \bb{R_l} }}$.
(This is valid as we use $\sum_{\bb{R_l}\in T_i}\bb{R_l}=1$ to enforce
exactly one path to be taken in each tree.)
Recall that the goal is to reason about $\mfrk{T}$ to determine whether 
there exist a counter-example $\mbf{x}$, 
$\kappa(\mbf{x}) \neq c$, that is consistent with $\fml{X}\subset\fml{F}$ (i.e.\ 
$\bigland\nolimits_{i\in\fml{X}} (x_i \in I^i_{e_i})$).
This can be achieved by verifying  $\sum_{c'} > \sum_c$ where $c$ 
is the target class and $c' \in \fml{K}, c' \neq c$ are the opponent 
classes.
This involves using either cardinality (if all the weights are 1) or
pseudo-Boolean (if the weights are arbitrary real numbers) encodings,
which may be quite expensive in practice.
What is worse, checking if a given candidate explanation $\fml{X}$ for
$\kappa(\mbf{v})=c$ entails the prediction requires to efficiently
reason about pairwise inequalities comparing the total weight of
class $c$, with the total weights of all the other classes
$c'\neq c$, each computed using the above cardinality or PB
constraints, i.e.\ $\sum_{c'} > \sum_c$.
\begin{example} \label{ex:ineq}
  For our running example, consider class 1 (``setosa''),
  represented by trees $T_1$ and $T_4$.
  Assume that the leaf nodes of $T_1$ are encoded as $\bb{R_1}$ with weight
  0.42762 and $\bb{R_2}$ with weight -0.21853.
  Observe that variables $\bb{R_1}$ and $\bb{R_2}$ can be reused to encode the
  leaves of $T_4$ with weights 0.29522 and -0.19674, respectively.
  Hence, the total weight of class ``setosa'' can be expressed as
  $\sum_1=0.42762\cdot \bb{R_1} - 0.21853\cdot \bb{R_2} + 0.29522\cdot \bb{R_1} -
  0.19674\cdot \bb{R_2}=0.72284\cdot \bb{R_1} - 0.41527\cdot \bb{R_2}$.
  By applying similar reasoning, we can assume that the total weight
  of class 3 (``virginica'') can be computed as $\sum_3=-0.41645\cdot
  \bb{R_3} + 0.16249\cdot \bb{R_4} + 0.72452\cdot \bb{R_5}$.
  \qed
\end{example}

We observe that in order to avoid the bottleneck of dealing with hard
clauses $\fml{H}$ together with inequalities $\sum_{c'} \geq \sum_c$ 
one can consider optimizing objective function  $\sum\nolimits_{c'} -
\sum\nolimits_{c}$ subject to hard clauses $\fml{H}$ defined
in~\eqref{eq:henc}.
We formulate this in the {\it weighted partial} MaxSAT formula as 
a pseudo-Boolean constraint:   
\begin{equation*}
\bigland\nolimits_{T_{t}\in\mfrk{T}}   \left[
\bigland\nolimits_{\bb{R_l}\in T_{t}, cl(l) = c'} (\bb{R_l}, w_l)  ~~~\wedge~~~ 
 \:\:  \bigland\nolimits_{\bb{R_l}\in T_{t}, cl(l) = c} (\bb{R_l}, -w_l)  \right] 
\end{equation*}
which defines the objective function to maximize:   
$\fml{S}_{c,c'}=\sum\nolimits_{c'} - \sum\nolimits_c$ for each
formula checking $(\kappa(\mbf{x}) = c' ) \neq c$ 
for a class $c'$ against the target $c$.
Note that it is not required to search for the optimal solution 
for $\fml{S}_{c,c'}$ since we are only interested in identifying 
a class $c'$ that would have a greater score that $c$.

\begin{proposition} \label{prop:opt}
  The optimal value of the objective function above is non-negative
  iff there is a data instance (which can be extracted from the
  corresponding optimal solution) that is classified by the model as
  class $c'$ instead of class $c$.  \qed
\end{proposition}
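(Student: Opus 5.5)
The plan is to set up a correspondence between satisfying assignments of the hard clauses $\fml{H}$ from~\eqref{eq:henc} (together with the unit clauses fixing the features in the candidate explanation, when present) and points of feature space. Then I show that, under this correspondence, the value of the objective $\fml{S}_{c,c'}$ equals $W(c',\mbf{x})-W(c,\mbf{x})$ as defined in~\eqref{eq:wght}. With that identity in hand, the proposition reduces to the observation that the maximum of $W(c',\mbf{x})-W(c,\mbf{x})$ over the admissible points is non-negative iff some admissible point gives class $c'$ a weight at least that of $c$, i.e.\ iff $c$ is not (strictly) picked over $c'$.

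\textbf{Step 1: assignments to points.} Take any model $\mu$ of $\fml{H}$. By \eqref{eq:order} the threshold variables $\bb{x_i<\sep_{i,k}}$ are monotone in $k$. Together with \eqref{eq:firstint}--\eqref{eq:al1int}, this forces exactly one interval variable $\bb{x_i\in I^i_k}$ to be true for each feature. Picking any value in that interval gives a point $\mbf{x}$, and by the key interval property every such choice yields the same path in every tree. By \eqref{eq:leaf} and the exactly-one constraint $\sum_{\bb{R_l}\in T_t}\bb{R_l}=1$, the unique true $\bb{R_l}$ in each $T_t$ is precisely the path $R_{k_{\mbf{x},t}}$ consistent with $\mbf{x}$.

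\textbf{Step 2: points to assignments.} Conversely, every $\mbf{x}$ (consistent with the fixed literals) induces such a model by setting each threshold variable to its truth value at $\mbf{x}$. Shared path variables across trees, as in Example~\ref{ex:leaf}, cause no trouble, since \eqref{eq:leaf} defines them as the same function of the threshold literals.

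\textbf{Step 3: objective equals weight difference.} Under $\mu$, the soft literals $(\bb{R_l},w_l)$ with $cl(l)=c'$ contribute $\sum_t w_{k_{\mbf{x},t}}[cl(k_{\mbf{x},t})=c']=W(c',\mbf{x})$. Likewise, the literals $(\bb{R_l},-w_l)$ with $cl(l)=c$ contribute $-W(c,\mbf{x})$. Hence the objective equals $W(c',\mbf{x})-W(c,\mbf{x})$. This is where the exactly-one-path constraint is essential, since it makes each tree's contribution a single leaf weight. Since the objective is a finite function over finitely many interval combinations, an optimum exists. Then optimum $\ge 0$ iff there is an $\mbf{x}$ with $W(c',\mbf{x})\ge W(c,\mbf{x})$, and the witness $\mbf{x}$ is read off from the interval variables of the optimal solution, as in Step 1.

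\textbf{Main obstacle.} I expect the main obstacle to be the tie semantics of $\arg\max$: ``non-negative'' means $W(c',\mbf{x})\ge W(c,\mbf{x})$. This counts as a misclassification only under a tie-breaking convention favouring $c'$, or a strict-inequality reading with a tie treated as a counterexample. I would state the convention explicitly, consistent with the text's ``$W(c',\mbf{v}')\ge W(c,\mbf{v}')$'' check, and note that with a strict convention one instead tests positivity. The per-class reading also needs a short remark: classification ``as $c'$'' means $c'$ beats $c$, and for the entailment query it suffices that some $c'$ does so.
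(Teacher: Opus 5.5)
Your proposal is correct and follows the reasoning the paper relies on. The paper gives no separate proof; it rests on the preceding observation that, since $\sum_{\bb{R_l}\in T_t}\bb{R_l}=1$ forces exactly one active path per tree, $\sum_{c'}-\sum_{c}$ evaluates to $W(c',\mbf{x})-W(c,\mbf{x})$ at the point selected by the interval variables. That is your Step~3, and your Steps~1--2 make explicit the model-to-point correspondence the paper takes for granted (both tacitly assume every interval $I^i_k$ contains some value of $D_i$).

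Your caveats are apt. The paper itself counts ties as counterexamples (its check is $W(c',\mbf{v}')\geq W(c,\mbf{v}')$). For $K>2$, what you obtain is a point where $c'$ scores at least as much as $c$, not necessarily one with $\kappa(\mbf{x})=c'$, since a third class may dominate both.
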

Also, note that if there are $K>2$ classes, one has to deal with $K-1$
optimization problems.
In general, in the context of computing an AXp for TE models, the
following holds:
\begin{corollary} \label{prop:axp}
  Given a data instance $\mbf{v}\in\mbb{F}$ s.t.\ $\kappa(\mbf{v})=c$,
  a subset of features $\fml{X}\subseteq \fml{F}$ is an AXp for
  $\mbf{v}$ iff the optimal values of the objective functions of the
  optimization problems for all classes $c'\in\fml{K}$,
  $c'\neq c$:
  \begin{align}
    \text{maximize} & & \fml{S}_{c,c'}=\sum\nolimits_{c'} -
    \sum\nolimits_{c} \label{eq:obj} \\
    \text{subject to} & & \fml{H} \land
    \bigwedge\nolimits_{i\in\fml{X}}{\bb{x_i=v_i}} \label{eq:hard}
  \end{align}
  are negative.
\qed
\end{corollary}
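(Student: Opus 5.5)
The plan is to reduce the corollary to \cref{prop:opt}, applied once for each opponent class $c'\neq c$, after establishing that the hard part $\fml{H}$ of~\eqref{eq:henc} faithfully captures the tree ensemble. As written, the condition ``all optima negative'' characterises \emph{sufficiency}, i.e.\ condition~\eqref{eq:axp} for $\fml{X}$. I will therefore prove that equivalence first. Subset-minimality then follows by applying the same equivalence to each $\fml{X}\setminus\{i\}$, which is exactly what \cref{alg:del} does.

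\textbf{Step 1: correspondence of models.} I will show that the assignments satisfying $\fml{H}\land\bigwedge_{i\in\fml{X}}\bb{x_i=v_i}$ correspond to the points $\mbf{x}\in\mbb{F}$ with $x_i\in I^i_{e_i}$ for all $i\in\fml{X}$. Up to the interval abstraction, the correspondence is surjective in both directions.
\begin{itemize}
\item Constraints \eqref{eq:order}--\eqref{eq:al1int} force the threshold variables $\bb{x_i<\sep_{i,k}}$ to form a monotone (order-encoded) chain. This selects exactly one interval per feature.
\item Every point of $\mbb{F}$ induces such a chain, and every chain is realised by a point, namely any value in the selected interval.
\item Constraint~\eqref{eq:leaf}, together with the tree structure (the paths of a tree partition its input space), makes $\bb{R_l}$ true exactly for the unique path $R_{k_{\mbf{x},t}}$ of each tree. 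Hence $\sum_{\bb{R_l}\in T_t}\bb{R_l}=1$ is consistent with, and implied by, the feature assignment.
\item Using the key property that $\kappa$ is constant on products of intervals, the choice of representative point is irrelevant.
\end{itemize}

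\textbf{Step 2: the objective equals the score margin.} Under this correspondence, the objective $\fml{S}_{c,c'}$ evaluates to $\sum_{c'}-\sum_c=W(c',\mbf{x})-W(c,\mbf{x})$. This follows from~\eqref{eq:wght}, because exactly one leaf variable per tree is true, so each tree contributes $w_{k_{\mbf{x},t}}$ to the sum of the class it outputs. Consequently, the optimum of the problem for $c'$ equals $\max\{W(c',\mbf{x})-W(c,\mbf{x})\}$ over the points $\mbf{x}$ consistent with $\fml{X}$.

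\textbf{Step 3: combining the classes.} Suppose all $K-1$ optima are negative. Then every consistent $\mbf{x}$ satisfies $W(c,\mbf{x})>W(c',\mbf{x})$ for every $c'\neq c$, so $\kappa(\mbf{x})=c$ and~\eqref{eq:axp} holds. Conversely, suppose some optimum is non-negative. By \cref{prop:opt}, the optimal solution yields a consistent $\mbf{x}$ with $W(c',\mbf{x})\ge W(c,\mbf{x})$, which is a counterexample to~\eqref{eq:axp}.

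The step I expect to be the main obstacle is the treatment of ties. The $\arg\max$ in the TE model does not fix a tie-breaking rule, so the equivalence of ``non-negative optimum'' with ``misclassification'' requires that ties be resolved against $c$. I would make this convention explicit, conservatively treating a tie as a possible prediction change, exactly as in \cref{prop:opt}. For RFmv the exact tie-breaking can alternatively be recovered through the $\ge n$ versus $\ge n+1$ distinction of~\eqref{eq:be01}--\eqref{eq:be02}.

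Finally, for minimality: $\fml{X}$ is an AXp iff the condition above holds for $\fml{X}$, and for every $i\in\fml{X}$ some $c'$ has a non-negative optimum under $\fml{X}\setminus\{i\}$. This is immediate from monotonicity of~\eqref{eq:axp} in $\fml{X}$.
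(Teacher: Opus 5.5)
Your proposal is correct and follows the paper's route. The paper gives no separate argument and treats the corollary as immediate from Proposition~\ref{prop:opt} applied to each of the $K-1$ opponent classes (your Step~3); your Steps~1--2 make explicit the encoding correctness that the paper takes for granted. Your two caveats are also well taken: read literally, the condition characterises only sufficiency (a weak AXp), with minimality obtained by testing each $\fml{X}\setminus\{i\}$ exactly as Algorithm~\ref{alg:del} does, and ties must be resolved against $c$ for a non-negative optimum to witness a prediction change.
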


\begin{example} \label{ex:opt}
  The objective function to replace inequality $\sum_3\geq\sum_1$ from
  Example~\ref{ex:ineq} is $\sum_3-\sum_1=-0.41645\cdot \bb{R_3} +
  0.16249\cdot \bb{R_4} + 0.72452\cdot \bb{R_5} - 0.72284\cdot \bb{R_1} +
  0.41527\cdot \bb{R_2}$.
  In MaxSAT, it can be written as a set of weighted unary clauses:
  $$
  \fml{S}_{1,3}=\left\{
      \begin{array}{l}
        (\neg\bb{R_1},0.72284), (\bb{R_2},0.41527), \\
        (\neg\bb{R_3},0.41645), (\bb{R_4},0.16249), (\bb{R_5},0.72452) \\
      \end{array}
    \right\}
  $$
  %

  %
  %
\end{example}

\begin{remark}
  To represent a negative summand $-w_l\cdot \bb{R_l}$ of 
  $\sum_{c'}-\sum_{c}$ as a Boolean literal, one has to use a
  constraint $-w_l\cdot \bb{R_l} = w_l\cdot\neg\bb{R_l} - w_l$.
  Observe that constants $-w_l$ should be taken into account when
  determining the sign of the optimal value of objective function
  $\sum_{c'}-\sum_{c}$.
\end{remark}

\subsubsection{Entailment check with MaxSAT} 
As soon as the formulas \eqref{eq:obj}--\eqref{eq:hard} are
constructed for all classes $c'\neq c$, a MaxSAT solver can be
applied to check if feature subset $\fml{X}$ is an AXp for
$\kappa(\mbf{v})=c$.
This entailment check is detailed in Algorithm~\ref{alg:call}.
Concretely, given a candidate explanation $\fml{X}$ for
$\kappa(\mbf{v})=c$ made by a TE $\mfrk{T}$ encoded into the set
$\fml{H}$ of hard clauses and a number of objectives 
$\fml{S}$, the procedure iterates over all the relevant
objective functions (see line~\ref{ln:allobj}), each represented by a
set $\fml{S}_{c,c'}$ of weighted literals, and checks if another class
$c'$ can get a higher total score given the literals enforcing
$\bigwedge_{i\in\fml{X}}{(x_i=v_i)}$.
This is done by making a single MaxSAT call.
If such a misclassification is possible, i.e.\ when the optimal value
of the objective function is non-negative, $\fml{X}$ does not entail
prediction $c$.
Otherwise, it does.

\begin{algorithm}[t]
%
\SetInd{0.4em}{0.4em}
\SetStartEndCondition{ }{}{}%
\SetKwProg{Fn}{def}{\string:}{}
\SetKwFunction{Range}{range}
\SetKw{KwTo}{in}\SetKwFor{For}{for}{\string:}{}%
\SetKwIF{If}{ElseIf}{Else}{if}{:}{elif}{else:}{}%
\SetKwFor{While}{while}{:}{}%
\SetKwFor{ForEach}{foreach}{:}{}%
\AlgoDontDisplayBlockMarkers\SetAlgoNoEnd\SetAlgoNoLine%
\LinesNumbered
\DontPrintSemicolon%
\SetKw{KwNot}{not\xspace}
\SetKw{KwAnd}{and\xspace}
\SetKw{KwOr}{or\xspace}
\SetKw{KwBreak}{break\xspace}
\SetKwData{false}{{\small false}}
\SetKwData{true}{{\small true}}
\SetKwData{st}{{\small \slshape st}}
\SetKwFunction{CNF}{CNF}
\SetKwFunction{SAT}{SAT}
\SetKwInOut{Input}{input}\SetKwInOut{Output}{output}
\newcommand\commfont[1]{\footnotesize\em{#1}}
\SetKwComment{tcpy}{\# }{}
\SetCommentSty{commfont}
\SetKwFunction{tfunc}{{\sc ExtractAXp}} %
\SetKwFunction{efunc}{{\sc EntCheck}} %
\SetKwFunction{cfunc}{{\sc Encode}} %
\SetKwFunction{mfunc}{{\sc MaxSAT}} %
\SetKwFunction{ofunc}{{\sc ObjValue}} %
\Func \efunc{$\langle\fml{H}$, $\fml{S}\rangle$, $\fml{E} = (\fml{M},(\mbf{v},c))$, $\fml{X}$} \;

\Indp
\KwIn{
  $\fml{H}$: Hard clauses, \hspace{0.7em}$\fml{S}$: Objective functions, \\
  \hspace{3.43em}$\mbf{v}$: Input instance, \hspace{0.3em}$c$: Pred., i.e.\ $\kappa(\mbf{v})=c$, \\
  \hspace{3.22em}$\fml{X}$: Candidate explanation
}
\KwOut{
  \true or \false
}
\setcounter{AlgoLine}{0}
\BlankLine
{
  \ForEach(\tcpy*[f]{all relevant obj. functions for $c$}\label{ln:allobj}){$\fml{S}_{c,c'}\in\fml{S}$}{
      $\mu\gets\mfunc(\fml{H}\land\bigwedge\nolimits_{i\in\fml{X}}{\bb{x_i=v_i}}, \fml{S}_{c,c'})$\;

      \uIf(\tcpy*[f]{non-negative objective?}){$\ofunc(\mu)\geq 0$}{
        \Return \false \tcpy*[r]{misclassification reached}
      }
    }
    \BlankLine
    \Return \true \tcpy*[r]{$\fml{X}$ indeed entails prediction $c$}
}
\Indm
%

  \caption{Entailment check with MaxSAT} \label{alg:call}
\end{algorithm}

\subsubsection{Incremental MaxSAT with assumptions.}
\label{sec:assump}
Although any \emph{off-the-shelf} MaxSAT
solver~\cite{morgado-cj13,ansotegui-aij13,bacchus-cp11} can be
exploited in Algorithm~\ref{alg:call} as is (assuming that the chosen
MaxSAT solver can cope with Partial Weighted CNF formulas with real
weights),
computing a single AXp may involve quite a large number of oracle
calls, and invoking a new MaxSAT solver from scratch at each iteration
of~\cref{alg:del} (resp.\  \cref{alg:cxp-del}) may be computationally expensive.
An alternative is to create all the necessary MaxSAT oracles
once, at the initialization stage of the explanation approach, and
reuse them \emph{incrementally} as needed with varying sets
of assumptions 
$\fml{A}\triangleq\{\bb{x_i=v_i}\;|\;i\in\fml{X}\}$ --- in a way 
similar to the well-known MiniSat-like incremental
assumption-based interface used in SAT solvers~\cite{een-entcs03}.

\begin{algorithm}[t]
%

\def\HiLi{\leavevmode\rlap{\hbox to \hsize{\color{tdgray1!20}\leaders\hrule height .8\baselineskip depth .7ex\hfill\color{white}{\hspace{.8em}}}}}
\definecolor{hlcolor}{HTML}{e7e8e7}
\sethlcolor{hlcolor}

\SetInd{0.4em}{0.4em}
\SetStartEndCondition{ }{}{}%
\SetKwProg{Fn}{def}{\string:}{}
\SetKwFunction{Range}{range}
\SetKw{KwTo}{in}\SetKwFor{For}{for}{\string:}{}%
\SetKwIF{If}{ElseIf}{Else}{if}{:}{elif}{else:}{}%
\SetKwFor{While}{while}{:}{}%
\SetKwFor{ForEach}{foreach}{:}{}%
\AlgoDontDisplayBlockMarkers\SetAlgoNoEnd\SetAlgoNoLine%
\LinesNumbered
\DontPrintSemicolon%
\SetKw{KwNot}{not\xspace}
\SetKw{KwAnd}{and\xspace}
\SetKw{KwOr}{or\xspace}
\SetKw{KwBreak}{break\xspace}
\SetKwData{false}{{\small false}}
\SetKwData{true}{{\small true}}
\SetKwData{st}{{\small \slshape st}}
\SetKwData{cost}{{\small \slshape cost}}
\SetKwFunction{CNF}{CNF}
\SetKwFunction{SAT}{SAT}
\SetKwInOut{Input}{input}\SetKwInOut{Output}{output}
\newcommand\commfont[1]{\footnotesize\em{#1}}
\SetKwComment{tcpy}{\# }{}
\SetCommentSty{commfont}
\SetKwFunction{tfunc}{{\sc ExtractAXp}} %
\SetKwFunction{efunc}{{\sc EntCheck}} %
\SetKwFunction{cfunc}{{\sc GetCore}} %
\SetKwFunction{wfunc}{{\sc CoreWt}} %
\SetKwFunction{Mfunc}{{\sc MaxSAT}} %
\SetKwFunction{mfunc}{{\sc GetModel}} %
\SetKwFunction{lfunc}{{\sc Record}} %
\SetKwFunction{sfunc}{{\sc SAT}} %
\SetKwFunction{rfunc}{{\sc Process}} %
\SetKwFunction{dfunc}{{\sc ValidCores}} %
\Func \Mfunc{$\phi$, \hl{\strut$\fml{A}$}} \;

\Indp
\KwIn{
  $\phi$: Partial CNF (hard clauses and literal objectives), \\
  \hspace{3.015em}\hl{\strut $\fml{A}$: Set of assumption literals}
}
\KwOut{
  $\mu$: MaxSAT model
}
\setcounter{AlgoLine}{0}
\BlankLine
{
  $\cost\gets 0$\tcpy*[r]{initially, cost is 0}\label{ln:icinit}
  \BlankLine
  \HiLi $\fml{C}\gets\dfunc(\phi,\fml{A})$\tcpy*[r]{get valid unsat. cores}\label{ln:detect}
  \HiLi \ForEach(\tcpy*[f]{iterate over known cores $\kappa$}\label{ln:eachknown}){$\kappa\in\fml{C}$}{
    \HiLi $\cost \gets \cost + \wfunc(\kappa)$\tcpy*[r]{update cost}\label{ln:incr1}
    \HiLi $\phi \gets \rfunc(\phi ,\kappa)$\tcpy*[r]{process $\kappa$ and update $\phi$}\label{ln:proc1}
  }
  \BlankLine
  \While(\label{ln:iloop}\tcpy*[f]{until $\phi$ gets satisfiable}){$\sfunc(\phi,\;$\hl{$\fml{A}$}$)=\false$}{
    $\kappa\gets\cfunc(\phi)$\tcpy*[r]{new unsatisfiable core}\label{ln:icore}
    $\cost \gets \cost + \wfunc(\kappa)$\tcpy*[r]{update cost}\label{ln:incr2}
    $\phi \gets \rfunc(\phi ,\kappa)$\tcpy*[r]{process $\kappa$ and update $\phi$}\label{ln:iproc2}

    \HiLi $\lfunc(\phi,\fml{A},\kappa)$\tcpy*[r]{record $\kappa$ for the future}\label{ln:record}
  }
  \BlankLine
  \Return $\mfunc(\phi)$\tcpy*[r]{$\phi$ is now satisfiable}\label{ln:imodel}
}
\Indm
%

  \caption{Incremental core-guided MaxSAT} \label{alg:imxs}
\end{algorithm}

A possible setup of a generic incremental \emph{core-guided} MaxSAT
solver is shown in Algorithm~\ref{alg:imxs} where the flow of a
(non-incremental) core-guided solver is \emph{augmented} with several
highlighted parts, which briefly overview the changes needed for
incrementality.
(Please ignore lines~\ref{ln:detect}--\ref{ln:proc1} and
\ref{ln:record} for now.)
In general, a core-guided solver operates as a loop iterating as long
as the formula it is dealing with is unsatisfiable
(line~\ref{ln:iloop}).\footnote{The reader is referred
to~\cite{morgado-cj13,ansotegui-aij13} for details on
core-guided MaxSAT.}
(At the beginning, the solver aims at satisfying all the hard and 
literal objectives of the input formula $\phi$.)
Each iteration of the loop extracts an unsatisfiable core $\kappa$
(line~\ref{ln:icore}) and processes it by relaxing the clauses in
$\kappa$ and adding new constraints allowing some of the clauses in
$\kappa$ to be falsified.
The \emph{weight} of $\kappa$ contributes to the total \textsf{\small
\slshape cost} of the MaxSAT solution.
As soon as the formula gets satisfiable, the solver stops and reports
its satisfying assignment.

Given the above, incrementality in a core-guided solver with the use
of the set $\fml{A}$ of assumptions can be made to work by handing
$\fml{A}$ over to the underlying SAT solver at every iteration of the
MaxSAT algorithm (see the highlighted modification made in
line~\ref{ln:iloop}).
This way all the clauses learnt by the SAT solver during the entire
MaxSAT oracle call can be reused later in the following MaxSAT calls.

Moreover, as each call to Algorithm~\ref{alg:call} ``equates'' to an
entailment query~\eqref{eq:axp}, where a reasoner aims at refuting a
given input formula, one may want to extract a subset of assumptions
in $\fml{A}$ that are deemed responsible for the
entailment~\eqref{eq:axp} to hold.
This can be done by aggregating all the subsets
$\fml{A}'\subseteq\fml{A}$ appearing in individual unsatisfiable cores
$\kappa$ extracted by the underlying SAT solver (see
line~\ref{ln:icore}), i.e.\ $\fml{A}'=\kappa\cap\fml{A}$, such that
the union of all such subsets $\fml{A}'$, i.e.\
$\fml{A}^*=\cup\fml{A}'$, is guaranteed to be responsible for the
entailment \eqref{eq:axp} to hold.
Notice that this is similar to the \emph{core extraction mechanism} of
SAT solvers.
It can be used in practice to avoid iterating over all features in
$\fml{F}$ (see line~\ref{ln:iter} of Algorithm~\ref{alg:del}) and
instead focus on those features that ``matter''.

\subsubsection{Reusing unsatisfiable cores.} \label{sec:cores}
To further improve the MaxSAT oracle, we can
\emph{reuse unsatisfiable cores} detected in the previous MaxSAT
calls, similar to core caching studied in the context of \emph{minimal
correction subset} (MCS) enumeration~\cite{previti-sat17}.
Indeed, Algorithm~\ref{alg:del} provides all the MaxSAT solvers
dealing with objective $\fml{S}_{c,c'}$ with exactly the same
formula $\phi\equiv\fml{H}\land\fml{S}_{c,c'}$ but a varying set of
assumptions $\fml{A}$.
Hence, some of the cores extracted from $\phi$ for assumptions
$\fml{A}_1$ may be reused with assumptions $\fml{A}_2$, s.t.\
$\fml{A}_1\cap\fml{A}_2\neq\emptyset$, thus saving a potentially large
number of SAT oracle calls in Algorithm~\ref{alg:imxs}.

Although implementations of core reuse may vary, a high-level view on
the corresponding changes in the solver are shown in
lines~\ref{ln:detect}--\ref{ln:proc1} and~\ref{ln:record} of
Algorithm~\ref{alg:imxs}.\footnote{The pseudocode and the description
of core reuse omit the low-level details and provide a basic overview
of the general idea.}
Namely, each core $\kappa$ is \emph{recorded} for future use
(line~\ref{ln:record}).
Next time Algorithm~\ref{alg:imxs} is called, it starts by examining
the set of recorded cores and determining, which of them may be reused
in the current call (see~line~\ref{ln:detect}).
All of such cores are processed the standard way
(lines~\ref{ln:eachknown}--\ref{ln:proc1}), each saving a single SAT
call.
Afterwards, the MaxSAT solver proceeds as normal.

Let us recall that each core $\kappa_i$ found at iteration $i$ of a
MaxSAT algorithm comprises a subset $\fml{S}'$ of 
weighted literal objectives of 
$\fml{S}$ that are unsatisfiable together with the hard clauses
$\fml{H}\subseteq\phi$.
In our incremental MaxSAT solver, $\kappa_i$ may also include a subset
$\fml{A}'$ of the assumption literals $\fml{A}$.
Additionally, in the case of the algorithms based on 
\emph{cardinality objective functions}~\cite{mdms-cp14}, $\kappa_i$ may include
literals ``representing'' some other (previously found) unsatisfiable
cores $\kappa_j$, $j<i$.
This way, each core $\kappa_i$ can be seen as \emph{depending} on
(1)~the corresponding subset $\fml{S}'$ of literal objectives,
(2)~assumptions $\fml{A}'$, and (3)~cores $\kappa_j$, $j<i$.
As a result, thanks to the fact that each of these are represented by
Boolean literals,\footnote{Practically, each $(w_i, l_i) \in\fml{S}$
is represented by a unique selector literal $s$, i.e.\ we use clause
$l_i\lor\neg{s}$.} it is this dependency of $\kappa_i$
that can be recorded for future use.
Namely, let $\fml{D}_i$ be the set of all such literals that
$\kappa_i$ depends on.
Then we can use a \emph{separate} SAT solver to store the dependency
in the form of a clause
$\left(\bigwedge_{l\in\fml{D}_i}{l}\right)\rightarrow \zeta_i$, where
$\zeta_i$ is a Boolean variable introduced to represent $\kappa_i$.
This way, detecting which cores can be reused for formula $\phi$ under
assumption literals $\fml{A}$, i.e.\ a call to $\dfunc(\phi,\fml{A})$,
can be done by applying unit propagation in the external SAT solver
given all the literal objectives $\fml{S}\subseteq\phi$ and assumptions
$\fml{A}$.
Note that there is no need to make a full-blown SAT call here as
literals $\zeta_i$ for all reusable cores $\kappa_i$ will be
propagated in the right order, i.e.\ $\zeta_j$ will precede $\zeta_i$
for $j<i$, given $\fml{S}$ and $\fml{A}$.

\subsubsection{Distance-based stratification.} \label{sec:strat}
In practice, weighted formulas are challenging for core-guided MaxSAT
algorithms as they often suffer from making too many
iterations~\ref{ln:iloop}--\ref{ln:record} due to the effect of
\emph{clause weight splitting}.
As a result, a number of heuristics were proposed to address this
problem, namely \emph{Boolean lexicographic optimization}
(BLO)~\cite{msagl-amai11} and \emph{diversity-based
stratification}~\cite{ansotegui-cp13}.
Both techniques aim at splitting the set of 
weighted literal objectives into multiple
levels, based on their weight, and solving a series of MaxSAT
problems, each time adding into consideration the clauses of the next
(smaller weighted) level.
Although powerful in general, both techniques 
often fail in the setting of TE explainability. \footnote{The BLO
condition is too strict while diversity-based stratification fails
because the diversity of the weights of literal objectives is too high,
i.e.\ each literal often has a \emph{unique} real-valued weight.}

As a result, we developed an alternative heuristic to stratify 
weighted literal objectives referred to as \emph{distance-based stratification}.
The idea is to (1) construct a stratum $\fml{L}\subseteq\fml{S}$ by
traversing the weights from highest to lowest and (2) associate weight
$w$ with the current stratum $\fml{L}$ if it is closer to the mean of
clause weights already in $\fml{L}$ than to the mean of clause weights
smaller than $w$, i.e. when
$$
\left|w-\frac{
  \sum\nolimits_{(c_i,w_i)\in \fml{L}}w_i
}
{
  \left|\fml{L}\right|
}\right|
<
\left|w-\frac{
  \sum\nolimits_{(c_j,w_j)\in \fml{S}\land w_j<w}w_j
}
{
  \left|\{(c_j,w_j)\in\fml{S}\land w_j<w \}\right|
}
\right|
$$

\begin{example} \label{ex:strat}
  Consider the set $\fml{S}_{1,3}$ of weighted literal objectives from
  Example~\ref{ex:opt}.
  Diversity-based heuristic~\cite{ansotegui-cp13} fails to stratify
  them since all of them have unique weights.
  Distance-based stratification makes 3 strata:
  $\fml{L}_1=\{(\bb{R_5},0.72452), (\neg\bb{R_1},0.72284)\}$,
  $\fml{L}_2=\{(\neg\bb{R_3},0.41645),$ $(\bb{R_2},0.41527)\}$, and
  $\fml{L}_3=\{(\bb{R_4},0.16249)\}$.
  \qed
\end{example}

\subsubsection{Early termination.} \label{sec:eterm}
Although core-guided MaxSAT solvers are quite effective in practice,
the proposed approach may invoke a MaxSAT engine a significant number
of times.
As solving optimization problems to optimality is often expensive, it
makes approximate solutions to these problems a viable and efficient
alternative.
Motivated by these observations, each MaxSAT call in the proposed
approach can be terminated \emph{before} an exact optimal solution is
computed.

Consider again Algorithm~\ref{alg:imxs} and recall that the objective
function in each MaxSAT call is of the form $\sum_{c'}-\sum_{c}$.
This means that as soon as the \textsf{\small\slshape cost} of the
solution is updated in line~\ref{ln:incr1} or \ref{ln:incr2} and its
new value exceeds the maximum value of $\sum_{c'}$, the optimal value
of the objective function is guaranteed to be negative, i.e.\ the
entailment~\eqref{eq:axp} holds.
In this case, the MaxSAT solver can terminate immediately.
On the other hand, recall that our solver applies stratification to
handle weighed literal objectives efficiently.
Therefore, each stratification level once solved is finished with the
working formula $\phi$ being satisfiable (this corresponds to
line~\ref{ln:imodel} in Algorithm~\ref{alg:imxs}).
Its satisfying assignment can serve to calculate an
under-approximation of the value of the objective function.
If the approximate solution is non-negative, the MaxSAT solver can
terminate immediately because this solution is evidence that the
entailment~\eqref{eq:axp} does not hold.

\subsubsection{Multi-class disjunction entailment check.} \label{sec:vsids}
The entailment check performs a disjunctive test over all opponent classes
$c' \neq c$, that is, over all objective functions $\fml{S}_{c,c'}$.
If any such objective evaluates to a negative value, the entailment holds.
This procedure is implemented in Algorithm~\ref{alg:call} by sequentially
traversing the set of objectives $\fml{S}$.
Consequently, the traversal order is relevant, as detecting a negative
objective $\fml{S}_{c,c'}$ early allows one to reduce the number of calls
to the MaxSAT oracle.

Moreover, when using the MaxSAT incremental mode, detecting
a negative objective $\fml{S}_{c,c'}$ at an early stage --- ideally one that was
already considered in the previous iteration --- enables the solver to reuse
previous decision assignments; 
thereby avoiding the solver restarting search from scratch, 
instead it adjusts the current assignment by flipping a few 
literals $\bb{x_i = v_i}$ for $i \in \fml{X}$ and re-evaluates the objective
value. This behavior is particularly beneficial for CXp generation, where 
the number of positive entailment checks (i.e., cases in which
$\fml{S}_{c,c'} < 0$, so the outcome is true) increases compared to AXp computation, and 
where adversarial points encountered during CXp construction tend to be close 
to each other and induce the same opponent class $c'$.

Based on these observations, we propose: (i) setting preferred variable
polarities, and (ii) defining a heuristic traversal order over adversarial 
classes $c' \in \fml{K}$, $c' \neq c$.
At each iteration, when a new feature $i \in \fml{F}$ is selected for testing,
if the entailment check succeeds, we reuse the Boolean assignment returned by
the MaxSAT solver for the subsequent iteration and set the variable polarities
of the formula $\phi$ accordingly.
Additionally, we propose  a \emph{VSIDS-like} heuristic~\cite{Malik-dac01} for class  
ordering, inspired by the SAT-solving insight that entities involved in recent successful 
or conflicting events are more likely to remain relevant and should therefore be prioritized, 
with decay enabling the strategy to adapt over time.
Concretely, the strategy maintains a binary heap over classes with activity scores 
that are incremented when a class wins and periodically decayed after 
N calls, so that frequently winning classes are promoted while old winners  
are gradually deprioritized.

Our empirical analysis for computing CXps shows that  adversarial points 
encountered during the explanation extraction are concentrated in a small number of classes  
of $\fml{K}$ (e.g., two to three classes) and appear in long contiguous sequences belonging 
to the same class. This observation suggests that a simpler, naïve heuristic --- such as moving 
the most recently winning class to the front of the traversal --- may already be sufficient 
to significantly limit the number of oracle calls. 
By contrast, a VSIDS-like heuristic becomes more advantageous in benchmarks with many potential 
winners whose relative importance changes dynamically over time.

\section{Smallest Explanations}

\subsection{Computing Smallest AXps}
We base our work on the (state of the art) algorithm introduced in~\cite{iplms-cp15} 
for computing smallest MUSes, as well as on the results of~\cite{inams-aiia20} 
establishing about the correspondence  between AXps and MUSes, and 
CXps and MCSes.  
%
%
Subsequently, we exploit this relationship to adapt the approach 
of~\cite{iplms-cp15} for computing minimum-size AXps.
Intuitively, the algorithm for extracting the smallest AXp is grounded 
in the duality between AXps and CXps.  
As aforementioned early, by duality every AXp can be viewed as a minimal 
hitting set of all CXps, and conversely, each CXp corresponds to a hitting set of AXps. 
This relationship allows us to compute the smallest AXp by iteratively constructing 
\emph{minimum} hitting sets (smallest size MHS)~\cite{imms-ecai16} over the collection 
of CXps discovered so far.
Note that a variant of the approach leveraging minimal hitting set (MHS) 
iterative calls allows to enumerate CXps/AXps~\cite{inams-aiia20}.

The procedure begins with no CXps available. At each step, the algorithm 
computes a minimum hitting set over the current set of CXps and 
then considers the subset of hypotheses identified by this hitting set. If this subset forms 
a valid Weak AXp, then it is guaranteed to be the smallest AXp explanation, and 
the process terminates. 
If the subset is not a Weak AXp, the algorithm expands it into a maximal non-explanatory 
subset, whose complement is by definition a new (Weak) CXp candidate. This  
(Weak) CXp candidate is blocked, and the process repeats. Because 
the hitting set is always of minimum size, the first time that a candidate subset 
is found to be a valid explanation, it is necessarily the smallest one.

\subsection{Computing Smallest CXps}
Computing smallest (or minimum-size) CXp can be viewed 
as maximizing/minizing a solution in constrained optimization 
problems (MaxSAT, MaxSMT, etc).
Concretely, given a data input $\mbf{v}\in\mbb{F}$ s.t.\ $\kappa(\mbf{v})=c$, we construct   
an optimization problem for every class $c' \in \fml{K},\, c' \not= c$: 
  \begin{align}
    \text{minimize} & & \sum\nolimits_{i\in\fml{F}}{\neg\bb{x_i=v_i}}  \label{eq:obj2} \\
    \text{subject to} & & \fml{H} \,  \land \, \fml{S}_{c,c'} > 0 \label{eq:hard2} 
  \end{align}
where, as aforementioned earlier, 
$\fml{H}$ encodes TE structure, and 
$\fml{S}_{c,c'}=\sum\nolimits_{c'} - \sum\nolimits_{c} > 0$  enforces 
the opponent class $c'\in\fml{K}$, $c'\not=c$ to be predicted,
and  the objective function~\cref{eq:obj2} targets minimality of   
the set of features to be freed $\fml{Y} \subseteq \fml{F}$ 
(resp.\ features $(\fml{F}\setminus\fml{Y})$ to be fixed).

Observe that for the case $K=2$, one has to deal with one optimization 
problem such that \eqref{eq:hard2} corresponds to $\fml{H} \land (\fml{S}_{1,2} \geq 0)$ or 
$\fml{H} \land (\fml{S}_{2,1} > 0)$. In contrast, for the case of $K > 2$, one has 
to deal with $K - 1$ optimization problems, i.e. 
for each class $c'\in\fml{K}\setminus\set{c}$. Therefore, to find the smallest CXp, 
we can either construct $(K - 1)$ MaxSAT formulations, or combine them to a single 
large formula using auxiliary variables. 
In the former option, we compute for each problem the optimal solution and 
 then smallest one is picked as the final CXp; 
This can be done in parallel, or in sequential. In sequential setting we can add 
an additional \emph{AtMost} constraint to force the solver to generate a solution smaller 
than the minimum encountered so far. 
In the latter option, we create auxiliary variables that picks an opposite class $c'$ 
and activate the corresponding constraint  $\sum\nolimits_{c'} - \sum\nolimits_{c} > 0$.  

\begin{proposition}
    Let $\kappa(\mbf{v}) = c_j$ be the predicted class of a multi-class model $\fml{M}$ with 
    class set  $\mathcal{K}$.
    For each opponent class $c_\iota \in \mathcal{K}, \, \iota\not=j$, let
    $\fml{Y}_{\iota}$ be a CXp solution of the optimization problem defined 
    by~\eqref{eq:obj2}–\eqref{eq:hard2}, i.e., minimizing the number of (negative) literal
    objectives $\bigwedge\nolimits_{i\in\fml{Y}} \neg\bb{x_i = v_i}$  subject to 
    the constraint $\fml{H} \land \fml{S}_{j,\iota} > 0$.
    Then the smallest such 
\[
    \fml{Y}^* = \arg\min_{\iota \in \fml{K} \setminus \{j\}}
    \, |\fml{Y}_{\iota}|
\]
    is the smallest (minimum-size) CXp  
    of the prediction $\kappa(\mbf{v}) = c_j$. 
\end{proposition}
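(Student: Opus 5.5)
The argument is a decomposition of the disjunction defining a CXp into one disjunct per opponent class. Minimizing over the union is then the same as taking the minimum of the per-class minima. Throughout, a feature set $\fml{Y}$ is identified with the assignment in which $\bb{x_i=v_i}$ is false exactly for $i\in\fml{Y}$. The soft literals $\bb{x_i=v_i}$, $i\in\fml{F}\setminus\fml{Y}$, are then satisfied. Because the interval variables $\bb{x_i\in I^i_k}$ abstract feature values exactly (the key interval property of \cref{sec:mxenc}), the constraint $\fml{H}\land\bigwedge_{i\notin\fml{Y}}\bb{x_i=v_i}$ describes precisely the points $\mbf{x}$ that agree with $\mbf{v}$ outside $\fml{Y}$. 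On such points, $\fml{S}_{j,\iota}>0$ holds iff $W(c_\iota,\mbf{x})>W(c_j,\mbf{x})$, following \cref{prop:opt} and the remark on constant offsets of negated summands. For $K=2$ the non-strict variant discussed above is used instead.

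\textbf{Step 1 (per-class weak CXps).} Call $\fml{Y}$ a weak CXp for $c_\iota$ if some $\mbf{x}$ agreeing with $\mbf{v}$ on $\fml{F}\setminus\fml{Y}$ satisfies $\fml{S}_{j,\iota}>0$. I will show that $\fml{Y}$ satisfies the existential condition in~\eqref{eq:cxp} iff it is a weak CXp for at least one $\iota\neq j$. The reason is that $\kappa(\mbf{x})\neq c_j$ holds iff some opponent class beats $c_j$ under the argmax with its tie-breaking. So the family of weak CXps is the union over $\iota$ of the per-class families. Each family is upward closed, since freeing more features only enlarges the set of admissible $\mbf{x}$.

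\textbf{Step 2 (optimal solutions are CXps).} Let $\fml{Y}_\iota$ be an optimal solution of~\eqref{eq:obj2}--\eqref{eq:hard2}. It has minimum cardinality among weak CXps for $c_\iota$, and cardinality-minimality implies subset-minimality within that family. Now take $\fml{Y}^*$, the smallest of the $\fml{Y}_\iota$. By Step 1 it is a weak CXp. Any proper subset $\fml{Y}'\subsetneq\fml{Y}^*$ that were a weak CXp would be a weak CXp for some $\iota'$. It would then have size below $|\fml{Y}_{\iota'}|$, contradicting the optimality of $\fml{Y}_{\iota'}$. Hence $\fml{Y}^*$ is subset-minimal, i.e.\ a CXp.

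\textbf{Step 3 (global minimality).} Let $\fml{Z}$ be any CXp. By Step 1 it is a weak CXp for some $\iota$, so $|\fml{Z}|\geq|\fml{Y}_\iota|\geq|\fml{Y}^*|$.

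The main obstacle is Step 2's use of the per-class optima, specifically justifying that $\fml{S}_{j,\iota}>0$ exactly captures ``$c_\iota$ is predicted instead of $c_j$''. This needs care with ties: the strict versus non-strict inequality depends on the index ordering of $\iota$ relative to $j$, as in~\eqref{eq:be01}--\eqref{eq:be02} and in the $K=2$ case. I would handle this by fixing the argmax tie-breaking convention and using $\geq$ or $>$ accordingly in each per-class problem. The rest is the set-theoretic argument above.
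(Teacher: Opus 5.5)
Your proposal is correct and follows the paper's argument. Your Step 3 is exactly the paper's proof, which argues by contradiction that a smaller CXp $\fml{Y}'$ would have a witness $\mbf{w}$ with $\kappa(\mbf{w})=c_\iota$ for some $\iota\neq j$, giving a better solution to that class's optimization problem; your Step 2 (that $\fml{Y}^*$ is subset-minimal even though an individual $\fml{Y}_\iota$ need not be a CXp) and your strict versus non-strict handling of ties fill in points the paper's proof leaves implicit.
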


\begin{proof}
    Let $\mbb{O}$ be the set of optimal solutions found, $|\mbb{O}| = K-1$. Assume 
    by contradiction that $\fml{Y}^* = \arg\min_{\mbb{O}} |\fml{Y}|$ IS NOT a smallest 
    size CXp, i.e. there is a $\fml{Y}'$ smaller than $\fml{Y}^*$.  
    Then, there is an adversarial point $\mbf{w}$ for  $\fml{Y}'$ such that one 
    of the opponent classes $c_\iota$ is predicted, i.e. $\kappa(\mbf{w}) = c_\iota$.
    This implies that there must be a solution for the MaxSAT problem for class $c_\iota$ 
    better than that found in $\mbb{O}$. Contradiction.
\end{proof}

Another key observation about \cref{eq:hard2} is that the inequality $\sum\nolimits_{c'} - \sum\nolimits_{c} > 0$  
can be expressed as Pseudo-Boolean (PB) constraints, therefore we can apply 
a PB solver (e.g.\  RoundingSAT~\cite{Nordstrom-ijcai18}, OR-Tools~\cite{or-tools}), or 
in a clausal-form and apply 
an off-the-shelf  MaxSAT solver (e.g.\ RC2~\cite{imms-jsat19}).
We emphasize that for the case of RFmv, we are dealing with cardinality 
constraints rather than Pseudo-Boolean constraints for the case 
of RFwv and BT. 
%
(Also note that extending the SAT encoding of RFmv 
outlined in~\cref{sec:satrf} MaxSAT --- to compute minimum 
solution --- is straightforward.)


\section{Beyond Explanation: Fairness \& Robustness Verification}
\label{sec:fair}
This section advances beyond explanation by presenting a logical encoding 
framework for verifying robustness and individual fairness in RFmv classifiers.

\subsection{Individual fairness}
We follow recent accounts of formal fairness verification~\cite{icshms-cp20}, we 
model classifiers as functions over protected and non-protected attributes. 
While~\cite{icshms-cp20} considers TEs (XGboost) and decision sets, and focuses 
on fairness through unawareness (FTU), we adopt the robust notion of individual
fairness (IF).
 
We consider a classification model 
$\mbf{M}=(\fml{F},\mbb{F},\fml{K},\kappa)$, and  $\fml{P} \subset \fml{F}$ denote 
the set of protected attributes (e.g., gender, race), and 
$\fml{F} \setminus \fml{P}$ the set of non-protected attributes. 

The model $\mbf{M}$ is said to satisfy \emph{individual fairness} (IF) w.r.t. protected 
attributes  if for all pairs 
of arbitrary instances $\mbf{x}, \mbf{y} \in \mbb{F}$, the following holds:
\begin{equation}\label{eq:IF}
    \forall (i \in \fml{F} \setminus P).\, (x_i = y_i)
    \ \limply\ 
    (\kappa(\mbf{x}) = \kappa(\mbf{y}))
\end{equation}

Namely, a model is individually fair if two individuals who are identical in all non-protected 
attributes, but may differ in protected ones, receive the same prediction. 
A violation occurs when such a pair produces different outputs, indicating potential individual bias.

\subsection{Adversarial robustness}
%
We consider a classification problem
$\mbf{M}=(\fml{F},\mbb{F},\fml{K},\kappa)$, an instance $(\mbf{v},c)$,
with $\mbf{v}\in\mbb{F}$ and $c=\kappa(\mbf{v})$, and a distance
value $\delta>0$.

We say that there exists an adversarial example (AEx) (or
alternatively that the classifier is not $\delta$-robust on
$\mbf{v}$) if the following logic statement holds true,
\begin{equation} \label{eq:locrob}
\exists(\mbf{x}\in\mbb{F}).\left(||\mbf{x}-\mbf{v}||_p\le\delta\right)\land\left(\kappa(\mbf{x})\not=\kappa(\mbf{v})\right)
\end{equation}
The logic statement asks for a point $\mbf{x}$ which is less than
$\delta$ ($p$-norm) distant from $\mbf{v}$, and such that the
prediction changes. Evidently, we could target for a specific class
change, or even consider changes within a subset of the possible
classes.
(The distance measure $l_p$  refers to as the Minkowski 
distance. Most popular $p$-norm studied in robustness verification  
are the Hamming distance $l_0$, 
 the Manhattan distance $l_1$, the Euclidean
distance $l_2$ and the Chebyshev distance $l_{\infty}$.)

\subsection{Verification Framework}
We leverage  the SAT-based encoding for explaining RFmv 
to build a formal verification framework of individual fairness and 
adversarial robustness properties in RFmv. 
Concretely, we re-use the tree ensemble encoding (i.e. trees and 
feature-domain encodings) outlined in~\cref{sec:satrf} and 
we add further constraints to express the condition 
$\kappa(\mbf{x}) \neq \kappa(\mbf{y})$ for fairness/robustness 
verification.
The current encoding requires two encoding copies of 
the ensemble $\mfrk{T}$, 
one defined on $\kappa(\mbf{x})$ and the other on $\kappa(\mbf{y})$.
%
We tag
the variables in the first replica with a $a$ subscript, and the
variables in the second replica with a $b$ subscript.
Next,  we detail the encoding of majority vote for $\kappa(\mbf{x})$ and 
then enforcing $\kappa(\mbf{x}) \ne \kappa(\mbf{y})$. Clearly, encoding 
a majority class for  $\kappa(\mbf{y})$ is not necessary since the picked 
class in $\kappa(\mbf{x})$ is known and it suffices not to have the same 
class picked in  $\kappa(\mbf{y})$ to verify the non-robust condition.
Afterwards, we present the $l_0$ norm constraining the points 
$\mbf{x}$ and $\mbf{y}$. 
Note that in adversarial robustness query $\mbf{x}$ is a fixed point and 
$\mbf{y}$ is the output, whereas in individual fairness query the tuple 
$(\mbf{x}, \mbf{y})$ is the outcome.

\paragraph{Encode majority class for $\kappa(\mbf{x})$.} 
Let $z_{i}^a$ and $b^a_i$ be auxiliary boolean variables 
serving to compare  a candidate class $c_j$ against 
all $c_k$,  $c_k\in\fml{K}$. Then,  the picked class $c_j$ must have 
the largest score in $\kappa(\mbf{x})$:
      \begin{align}\label{eq:k(x)}
	\bigwedge\nolimits_{k\in [K]} 
	\left(\sum\nolimits_{i=1}^{M}z_{i}^a+\sum\nolimits_{i=1}^{M}\neg{p^a_{ik}} + b^a_i\ge{M+1}\right) &
	\nonumber \\
	\bigwedge\nolimits_{j\in [K]} \left(\bigwedge\nolimits_{k\in[j]} s^a_j\limply\neg{b^a_{j-k}} \right), &
	~~~~ c_j \prec c_{j-k}
	\nonumber \\
	\bigwedge\nolimits_{j\in [K]} \left( s^a_j\limply{b^a_{K-j}} \right), &
	~~~~  c_j \succ c_{K-j}
	\nonumber \\
	s^a_j\limply(z_i^a\leftrightarrow{p^a_{ij}})  
	\wedge
	 s^a_j\limply(w_i^b\leftrightarrow{p^b_{ij}}) 
	 & 
	\nonumber \\
	\bigwedge\nolimits_{j\in [K]} s^a_j\limply(\bigvee\nolimits_{k\ne j} s^b_k) 
	\wedge
	\left( \bigwedge\nolimits_{j\in [K]} s^a_j\limply\neg{ s^b_j} \right) 
	\wedge		
	\left( \sum\nolimits_{n=1}^{K} s^a_n=1 \right)  &
      \end{align}
Observe that $b^a_j = 1$ iff $c_k \prec c_j$ and $ s^a_j = 1$. Thus, for the case 
where $c_k \succ c_j$ the cardinality constraint enforces the inequality
$\left( \sum\nolimits_{i=1}^{M}p_{i,k}^a > \sum\nolimits_{i=1}^{M}\neg{p^a_{ij}} \right)$, 
then RHS equals to $M+1$,
while for the case of  $c_k \prec c_j$ it enforces the inequality
$\left( \sum\nolimits_{i=1}^{M}p_{i,k}^a \ge \sum\nolimits_{i=1}^{M}\neg{p^a_{ij}} \right)$
and the RHS is equals to $M$.
Moreover,  $s^a_j\limply(z_i^a\leftrightarrow{p^a_{ij}}) $ serves to pick a class $c_j$ 
in $\kappa(\mbf{x})$ and checks if it is the majority vote.
In that case, the same class must not be predicted in the second 
copy  $\kappa(\mbf{y})$ of $\mfrk{T}$.

\paragraph{Encode picking a different class in $\kappa(\mbf{y})$.} 
We encode  $\kappa(\mbf{y})$ such that it picks a different prediction
from  $\kappa(\mbf{x})$.
Let    $z_{i}^b$ and $w_{i}^b$ be auxiliary boolean variables serving 
to associate, resp.,  a number of votes for $c_k$ of a candidate class $c_k$ and 
number of votes for $c_j$ such that $c_j$ is  the class majority in  $\kappa(\mbf{x})$. 
Hence, the main idea is  to enforce the number of votes 
for $c_j$ not to be the largest one for $\kappa(\mbf{y})$:
      \begin{align}\label{eq:k(y)}
	\sum\nolimits_{i=1}^{M}z_{i}^b+\sum\nolimits_{i=1}^{M}\neg{w^b_i} + b^b\ge{M+1} 
	\nonumber \\
	\bigwedge\nolimits_{j\in [K]} \left(\bigwedge\nolimits_{k\in[j]} s^b_j\wedge s^a_{j-k} \limply\neg{b^b} \right)
	\nonumber \\
	\bigwedge\nolimits_{j\in [K]} \left( s^b_j\wedge s^a_{K-j}\limply{b^b} \right)
	\wedge	
	 s^b_k\limply(z_i^b\leftrightarrow{p^b_{ik}}) 
	\nonumber \\
	\bigwedge\nolimits_{j\in [K]} s^b_j\limply(\bigvee\nolimits_{k\ne j} s^a_k)
	\wedge		
	\left( \sum\nolimits_{n=1}^{K} s^b_n=1 \right) 
      \end{align}
Observe that  $s^b_k\limply(z_i^{b}\leftrightarrow{p^b_{ik}})$, $k\in[K]$ serves 
to pick a candidate class $c_k$ in $\kappa(\mbf{y})$ to compare its score with  
the picked class in $\kappa(\mbf{x})$.
Furthermore,  $s^a_j\limply(w_i^{b}\leftrightarrow{p^b_{ij}})$ serves to activate 
the associated variables $p^b_{ij}$ of the predicted class $c_j$ by $\kappa(\mbf{x})$ 
in~\cref{eq:k(x)} using the auxiliary variables $w_i^b$.
(Notice that in~\cref{eq:k(x)}, we use $M$ auxiliary variables $b^a_i$, whilst 
for  $\kappa(\mbf{y})$ (i.e.\ \cref{eq:k(y)}) we only need 
a single variable $b^b$ as we have one cardinality constraint to satisfy.)

Note that in case of binary classification, one can optimize the outlined 
encoding and obtain a simplified encoding that applies only 
two cardinality constraints.
\paragraph{Special case for binary class ($|\fml{K}| = 2$).}
 Encoding $\kappa(\mbf{x}) \neq \kappa(\mbf{y})$ for binary classifier 
($\fml{K} = \set{c_1, c_2}$), can be achieved using two inequalities:
 \[
     \left(\sum\nolimits_{i=1}^{M} p_{i}^a  \ge\left\lfloor\frac{M}{2}\right\rfloor \right) 
     \wedge
     \left(\sum\nolimits_{i=1}^{M} \neg{p_{i}^b}   \ge\left\lfloor\frac{M}{2}\right\rfloor+1 \right)
 \]
	   Observe that for $\kappa(\mbf{x})$ class $c_1$ is enforced to be picked 
	   and for $\kappa(\mbf{y})$ class $c_2$ is picked. Also, note that for binary class 
	   it is sufficient to use one Boolean variable $p_i$ to represent the predicted 
	   class in tree $\fml{T}_i$. Thus $p_i = 1$ when $c_1$ is predicted otherwise 
	   $c_2$ is picked and $p_i = 0$.

\paragraph{Encode $l_0$-distance.}
 Next we show how we encode  $||\mbf{x}-\mbf{y}||_0\le\delta$. 
%
	\begin{align*}
	    \bigwedge\nolimits_{i\in\fml{F}} \left( \bigwedge\nolimits_{I^i_k\in \mbb{D}_i} 
	     (\bb{x_i\in I^i_k}^a \wedge \bb{y_i\in I^i_k}^b) \limply \bb{x_i = y_i} \right)
	   \nonumber \\[3pt]
	   \wedge
	   \left( (\bb{x_i\in I^i_k}^a \wedge \neg \bb{y_i\in I^i_k}^b) \limply \neg \bb{x_i = y_i} \right)
	   \nonumber \\[3pt] 
	   \wedge
	   \left( \sum\nolimits_{i\in\fml{F}} \bb{x_i = y_i} \ge (m - \delta) \right)
	\end{align*}
Observe that it suffices to have $\bb{x_i\in I^i_k}^a=1$ and $\bb{y_i\in I^i_k}^b=0$ to propagate  
$\neg \bb{x_i = y_i}$, since that exactly one  $\bb{x_i\in I^i_k}^a$ can be true for each 
feature $i$ and there can not be another pair $(\bb{x_i\in I^i_l}^a \wedge \bb{y_i\in I^i_l}^b)$ true.
Moreover, for fairness verification we add the condition on unprotected 
features: $ \bigwedge\nolimits_{i\in\fml{F}\setminus\fml{P}}  \bb{x_i = y_i}.$	   	 
	 

\section{Experiments} \label{sec:res}
This section presents a summary of empirical assessment
of the different encodings: SAT, MaxSAT and SMT that we proposed
in this work to compute abductive and contrastive
explanations for TEs
(i.e.\ RFmv, RFwv and BT) 
trained on some of the widely studied datasets.

\subsection{Experimental setup}
The experiments are conducted on a MacBook Pro with a Dual-Core Intel
Core~i5 2.3GHz CPU with 8GByte RAM running macOS Montery.
The time limit for each test (instance data) is fixed to 300 seconds for computing  
one AXp/CXp per sample (resp.\  600 seconds for minimum CXp) across all 
tree ensemble benchmarks.

\paragraph{Benchmarks.}
The assessment is performed on a selection of 31 publicly available
datasets, which originate from UCI Machine Learning Repository
\cite{Dua2019} and Penn Machine Learning Benchmarks~\cite{Olson2017PMLB}.
Benchmarks comprise  binary and multidimensional classification datasets.
The number of classes in the benchmark suite varies from 2 to 11.
The number of features (resp.~data samples)  varies from 4 to 64
(106 to 58000, resp.) with the average being 21.9 (resp.~5131.09).
When training TEs for the selected datasets, we used 80\% of
the dataset instances (20\% used for test data). For assessing
explanation tools, we randomly picked fractions of the dataset,
depending on the dataset size.
%
The number of trees in each learned model is set to 100 for RFmv
and $50\times K$ for RFwv and BTs, while tree depth
varies between 3 and 6.
Concretely, BTs are trained with  XGBoost~\cite{guestrin-kdd16a} having
50 trees per class, each of depth at most 3--4; RFs  are
trained with scikit-learn~\cite{scikitlearn}  having 100 trees in the case of
RFmv and 50 tree per class in the case of RFwv, each tree of depth
at most 3--6.
(Note that we have tested other values for
the number of trees ranging from 30 to 200, and we fixed it to 100
for RFmv (resp. 50 for RFwv and BTs) since with 100 (resp. 50) trees
RFmv (resp. RFwv and BT) models achieve the best
train/test accuracies.)
As a result, the accuracy of the trained models varies between 74\%
to 100\% .
Besides, our formal explainers are set to compute a single or all  AXp/CXp
per data instance from the selected set of instances.
Note that they deal with the same data instances, and compute the
same explanation given an instance.
%
%
%

\paragraph{Prototypes implementation.}
%
%
We developed a Python prototype for reasoning about TEs, which makes 
extensive use of the latest versions
of the PySMT and PySAT toolkits~\cite{gm-smt15,itk-sat24}.
The source code implements a MaxSAT-based explainer\footnote{Available at
\url{https://github.com/alexeyignatiev/xreason/}.} for BT and 
RFwv, as well as a SAT-based explainer\footnote{Available at
\url{https://github.com/izzayacine/RFxpl/}.} tailored for RFmv.
In addition, an SMT-based explainer~\cite{inms-corr19} is provided as a baseline 
for comparison.
In order to learn RFmv models,
we have overridden the implemented RF learner
in scikit-learn so that it reflects  
the original algorithm described in \cite{Breiman01}.\footnote{%
The RF model implemented by scikit-learn uses probability estimates
to predict a class, whereas in the original proposal for
RFs~\cite{Breiman01}, the prediction is based on majority vote.}
Furthermore, PySAT \cite{imms-sat18,itk-sat24} is used to instrument incremental
SAT oracle calls.
As a result, for the case of computing minimal CXp (resp.\ smallest CXp), 
we directly invoke  
the provided algorithm LBX~\cite{mpms-ijcai15} for MCS 
(resp.\ RC2~\cite{imms-jsat19} for smallest MCS/MaxSAT) 
 in PySAT. 
The prototype of the MaxSAT-based explainer build on RC2 and 
augments it with all the proposed heuristics. In turns, RC2 relies on 
state of the art SAT engines, including  
Glucose~3~\cite{audemard-sat13} and MiniSAT~\cite{een-entcs03,minisat-gh} as 
underlying oracle solvers.
The SMT-based counterpart taken directly from~\cite{inms-corr19} and
uses Z3~\cite{bjorner-tacas08} as the underlying SMT engine.
(Note that we additionally tested CPLEX~\cite{cplex2009v12} as an oracle for the original
SMT encoding. As it was significantly outperformed by the Z3-based
solution, MIP is excluded from consideration below.)
Both competitors support the computation of one AXp and also their
enumeration~\cite{inams-aiia20}.
Although the explainers support \emph{QuickXplain-like} AXp
extraction~\cite{junker-aaai04}, it did not prove helpful in our
initial results, and so the standard deletion-based
Algorithm~\ref{alg:del} was applied instead.
Regarding the CXp extraction algorithm in MaxSAT/SMT setting, we build 
on PySAT implementation 
of a mixture algorithm of deletion-based (illustrated in~\cref{alg:cxp-del}) and 
well-known CLD algorithm~\cite{mshjpb-ijcai13} to include MaxSAT and SMT 
reasoners (originally designed to support solely  SAT solvers). 
More concretely, the implementation follows the basic linear search 
for MCS extraction augmented with \emph{clause D} (CLD) oracle calls.
%
(Note that we do not include LBX algorithm in the MaxSAT-based explainer as 
its adaptation is 
not straightforward and would require substantial modifications to the internal 
mechanisms of the MaxSAT solver, which also might slow 
down the solver.)

Additionally to the aforementioned SAT-/SMT- and MaxSAT-based AXp/CXp
computation, we compared these formal rigorous explanation approaches
to the computation of a heuristic explanation with the use of
Anchor~\cite{guestrin-aaai18}.
Although heuristic explanations are known to suffer from a number of
issues~\cite{inms-corr19,nsmims-sat19,lakkaraju-aies20a,lakkaraju-aies20b},
it is still of interest to see how the proposed approach stands
against one of the best heuristic explainers from the scalability
perspective.
(Note that all Anchor's parameters are set to their default values.)
The comparison results with Anchor are summarized in the appendix.
On the other hand, another formal explainer: 
PyXAI~\cite{pyxai-ijcai24b} toolkit for explaining TEs is not included 
in the baseline comparison.  
As shown in a recent study on validating formal explainers~\cite{hiims-corr25}, PyXAI 
may deliver incorrect or redundant explanations (AXp/CXp) for RFmv and BT and 
does not support RFwv, making it unsuitable for a meaningful comparison. 
(Further details on the related works with PyXAI implementations are discussed 
in~\cref{ssec:relw})


\subsection{Results}
When comparing the runtimes of our approach against 
the formal method baseline,\footnote{%
We exclude in this section heuristic method baselines (e.g. Anchor), however 
comparison results are reported in the appendix.} we present 
most of the results 
using scatter and cactus plots instead of tables. 
These visualizations convey relative performance and 
scalability trends more clearly, allowing per-instance and 
cumulative comparisons to be easily interpreted. 
In addition, a table summarizing the characteristics of the datasets,
including the number of features $m$, the number of classes $k$, and the
tree ensemble depth, is also provided.
Tables containing the detailed numerical results 
corresponding to these plots are provided in the appendix. 
Since these tables  are large and would hinder readability, visual 
representations offer a more compact and 
accessible overview of the experimental outcomes.

Furthermore, some of the results presented below were previously reported in our earlier 
conference papers~\cite{inms-corr19,ims-ijcai21,iism-aaai22}. The present journal version substantially extends these results 
by providing new experiments (in particular AXps/CXps in RFwv and smallest CXps in TEs), additional datasets, and a unified 
analysis MaxSAT framework.

\setlength{\tabcolsep}{5pt}

\begin{table}[ht]
\centering
\resizebox{0.875\textwidth}{!}{
\begin{tabular}{
l
>{\text{(}}
S[table-format=2.0, table-space-text-pre=(]
S[table-format=2.0, table-space-text-post=)]
<{\text{)}}
S[table-format=1]
S[table-format=4.0]
S[table-format=2.0]
>{\text{(}}
S[table-format=5.0, table-space-text-pre=(]
S[table-format=5.0, table-space-text-post=)]
<{\text{)}}
S[table-format=2.1]
S[table-format=1.2]
S[table-format=2.1]
S[table-format=1.2]
c
}
\toprule[1.2pt]
\multirow{2}{*}{\bf Dataset} & 
\multicolumn{2}{c}{ }  &  
\multicolumn{3}{c}{\bf RFmv} & 
\multicolumn{2}{c}{\bf SAT Enc}   & 
\multicolumn{2}{c}{\bf AXp }  &  
\multicolumn{2}{c}{\bf CXp } &
\multirow{2}{*}{\bf AEx }  \\
  \cmidrule[0.8pt](lr{.75em}){4-6}
  \cmidrule[0.8pt](lr{.75em}){7-8}
  \cmidrule[0.8pt](lr{.75em}){9-10}
  \cmidrule[0.8pt](lr{.75em}){11-12}
& 
{\bf m} & {\bf K} & 
{\bf D}  & {\bf \#N} & {\bf \%A} & 
{\bf \#var} & {\bf \#cl} & 
{\bf Len }  & {\bf Time } & 
{\bf Len }  & {\bf Time } &  \\
\toprule[1.2pt]

ann-thyroid  &  21  &  3  &  4  &  2192  &  98  &  17854  &  29230  &  2.1  &  0.13  &  1.3  &  0.18  &  0.277 \\
appendicitis  &  7  &  2  &  6  &  1920  &  90  &  5181  &  10085  &  4.0  &  0.03  &  2.4  &  0.06  &  0.094 \\
banknote  &  4  &  2  &  5  &  2772  &  97  &  8068  &  16776  &  2.0  &  0.02  &  1.0  &  0.08  &  0.190 \\
biodegradation  &  41  &  2  &  5  &  4420  &  88  &  11007  &  23842  &  16.8  &  0.29  &  4.5  &  0.19  &  0.278 \\
ecoli  &  7  &  5  &  5  &  3860  &  90  &  20081  &  34335  &  4.0  &  0.10  &  1.2  &  0.20  &  0.384 \\
glass2  &  9  &  2  &  6  &  2966  &  90  &  7303  &  15194  &  5.0  &  0.03  &  1.7  &  0.09  &  0.187 \\
heart-c  &  13  &  2  &  5  &  3910  &  85  &  5594  &  11963  &  6.0  &  0.04  &  2.1  &  0.08  &  0.159 \\
ionosphere  &  34  &  2  &  5  &  2096  &  87  &  7174  &  14406  &  22.1  &  0.03  &  6.0  &  0.10  &  0.122 \\
iris  &  4  &  3  &  6  &  1446  &  93  &  16346  &  25603  &  2.0  &  0.03  &  1.3  &  0.13  &  0.212 \\
karhunen  &  64  &  10  &  5  &  6198  &  91  &  36708  &  70224  &  35.2  &  2.78  &  10.6  &  0.87  &  1.152 \\
magic  &  10  &  2  &  6  &  9840  &  84  &  29530  &  66776  &  6.0  &  0.14  &  1.6  &  0.41  &  0.654 \\
mofn-3-7-10  &  10  &  2  &  6  &  8776  &  92  &  2926  &  8646  &  3.0  &  0.01  &  1.9  &  0.07  &  0.259 \\
new-thyroid  &  5  &  3  &  5  &  1766  &  100  &  17443  &  28134  &  3.0  &  0.05  &  1.5  &  0.17  &  0.262 \\
pendigits  &  16  &  10  &  6  &  12004  &  95  &  30522  &  59922  &  9.9  &  0.94  &  1.9  &  0.53  &  1.093 \\
phoneme  &  5  &  2  &  6  &  8962  &  82  &  21899  &  49840  &  3.0  &  0.09  &  1.7  &  0.29  &  0.550 \\
ring  &  20  &  2  &  6  &  6188  &  89  &  19114  &  42362  &  11.0  &  0.25  &  2.2  &  0.28  &  0.392 \\
segmentation  &  19  &  7  &  4  &  1966  &  90  &  21288  &  35381  &  8.0  &  0.31  &  3.3  &  0.31  &  0.725 \\
shuttle  &  9  &  7  &  3  &  1460  &  99  &  18669  &  29478  &  2.0  &  0.14  &  2.1  &  0.21  &  0.385 \\
sonar  &  60  &  2  &  5  &  2614  &  88  &  9938  &  20537  &  36.0  &  0.09  &  7.0  &  0.15  &  0.187 \\
spambase  &  57  &  2  &  5  &  4614  &  92  &  13055  &  28284  &  18.2  &  0.11  &  3.8  &  0.21  &  0.301 \\
spectf  &  44  &  2  &  5  &  2306  &  88  &  6707  &  13449  &  19.8  &  0.07  &  6.4  &  0.10  &  0.145 \\
texture  &  40  &  11  &  5  &  5724  &  87  &  34293  &  64187  &  23.2  &  0.93  &  3.0  &  0.66  &  2.697 \\
threeOf9  &  9  &  2  &  3  &  920  &  100  &  2922  &  4710  &  1.0  &  0.01  &  1.0  &  0.03  &  0.047 \\
twonorm  &  20  &  2  &  5  &  6266  &  94  &  21198  &  46901  &  12.0  &  0.10  &  3.2  &  0.08  &  0.135 \\
vowel  &  13  &  11  &  6  &  10176  &  90  &  44523  &  88696  &  8.1  &  1.15  &  2.1  &  0.87  &  1.693 \\
waveform-21  &  21  &  3  &  5  &  6238  &  84  &  29991  &  57515  &  10.1  &  0.34  &  2.2  &  0.39  &  0.533 \\
waveform-40  &  40  &  3  &  5  &  6232  &  83  &  30438  &  58380  &  15.2  &  0.88  &  3.6  &  0.46  &  0.683 \\
wdbc  &  30  &  2  &  4  &  2028  &  96  &  7813  &  15742  &  12.0  &  0.05  &  4.6  &  0.10  &  0.156 \\
wine-recognition  &  13  &  3  &  3  &  1188  &  97  &  17718  &  28421  &  4.9  &  0.07  &  2.1  &  0.16  &  0.261 \\
wpbc  &  33  &  2  &  5  &  2432  &  76  &  9078  &  18675  &  20.1  &  0.65  &  8.2  &  0.14  &  0.190 \\
xd6  &  9  &  2  &  6  &  8288  &  100  &  2922  &  8394  &  3.0  &  0.01  &  1.5  &  0.06  &  0.274 \\

\bottomrule[1.2pt]
\end{tabular}
}
\caption{%
\footnotesize{%
  Detailed performance evaluation of the SAT encoding for computing 
  AXp's and CXp's for RFmv. 
The table shows results for 31 datasets, i.e.\ those for which test
data accuracy is no less than 76\%.
Columns {\bf m} and {\bf K} report, respectively, the
number of features and classes in the dataset.
Columns {\bf D}, {\bf\#N} and {\bf\%A} show, respectively, each tree's
max.~depth, total number of nodes and test accuracy of an RF classifier.
Columns  {\bf \#var} and {\bf \#cl} show the number of variables and 
clauses of a CNF formula encoding an RFmv classifier along with any
instance to analyze.
Column {\bf Time}   shows the average  runtime  for extracting 
an explanation.
Column {\bf Len} reports the average explanation length
(i.e. average number of features contained in the explanations). 
%
 {\bf AEx} reports the average runtime for robustness verification 
 to adversarial examples.    
} }
\label{tab:RFs-sat}
\end{table}

\paragraph{SAT explanations for RFmv.}
\autoref{tab:RFs-sat} summarizes the results of assessing the performance 
of our SAT-based explainer (\texttt{RFxpl}) for RFmv on the selected datasets. 
(Note that the AXp results in~\autoref{tab:RFs-sat} were previously reported 
in~\cite{ims-ijcai21},  whereas the CXp results are newly reported in this extended 
work.)
As can be observed, with three exceptions, the average running time for 
computing AXps is below 1~sec.\ per instance, indicating that the approach 
scales well in practice across a variety of datasets. 
In terms of worst-case behavior, a small number of outliers can be observed, 
which is expected given that AXp computation corresponds to a 
$\ddp$-hard problem. These outliers predominantly arise on datasets with 
a larger number of classes, where the underlying entailment checks become 
more challenging. 
Moreover, we observe consistently strong performance for CXp computation: 
for all considered datasets, the average runtime remains below 1~sec.\ per 
input instance. This suggests that, despite requiring the exploration of 
opponent classes, CXp generation remains efficient in the RFmv setting. 
Overall, the results demonstrate that the SAT-based explainer is effective 
and scalable for explaining RFmv on realistic datasets.

\paragraph{SMT \& MaxSAT explanations for BTs.}
Figure~\ref{fig:perf}~(reported in~\cite{iism-aaai22}) shows the results of 
AXps assessment.
As can be observed in the cactus plot of Figure~\ref{fig:cact}, the
MaxSAT-based explainer outperforms the SMT approach by a large
margin for producing AXps.
In particular, it is able to explain each of the considered instances
with the runtime per instance varying from 0.001 to 41.843 sec.\ and
the average runtime being 3.183 sec.
The runtime of the SMT-based approach varies from 0.01 to 1481.708
sec.\ with the average runtime being 16.672 sec.
The detailed instance-by-instance performance comparison of the two
competitors is provided in the scatter plot shown in
Figure~\ref{fig:det}.
We should note that on average the MaxSAT approach is 5--10 times
faster than its SMT-based rival, which makes it a viable and scalable
alternative to the state of the art.
Another observation we made is that in general the harder the
benchmarks get, the larger the gap becomes between the performance of
the two approaches, e.g.  texture dataset has 11 classes and we observe that
the SMT method may take 24 min.\ to deliver an explanation while MaxSAT
terminates in less than 1 min.\ in all tested instances.
This may be found somewhat surprising given that the MaxSAT reasoner
has to deal with much larger encoding (in terms of the number of
clauses and variables) and has to make a much larger number of
\emph{decision} oracle calls.
Finally, the MaxSAT approach proves to be more robust in terms of the
average performance whilst the performance of the SMT-based explainer
is somewhat unstable, even when explaining the predictions made by the
same classifier for different instances of the same dataset.

\begin{figure}[ht]
  \begin{subfigure}[b]{0.6\textwidth}
    \centering
    \includegraphics[width=0.94\textwidth]{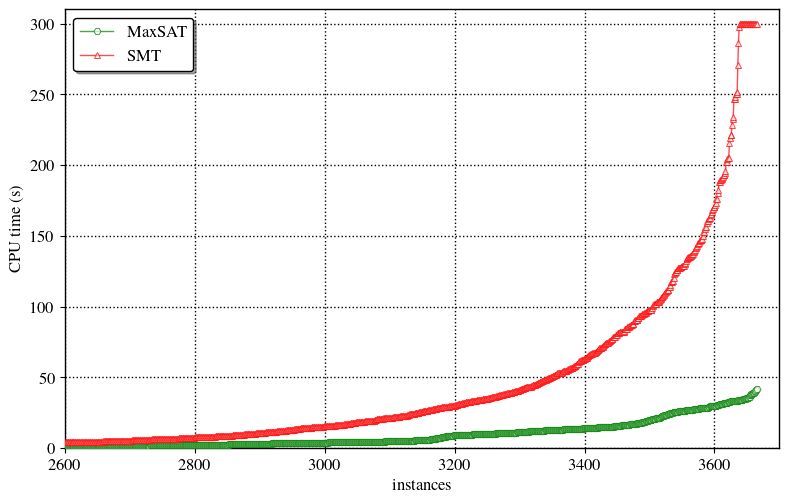}
    \caption{Raw performance}
    \label{fig:cact}
  \end{subfigure}%
  \begin{subfigure}[b]{0.4\textwidth}
    \centering
    \includegraphics[width=0.91\textwidth]{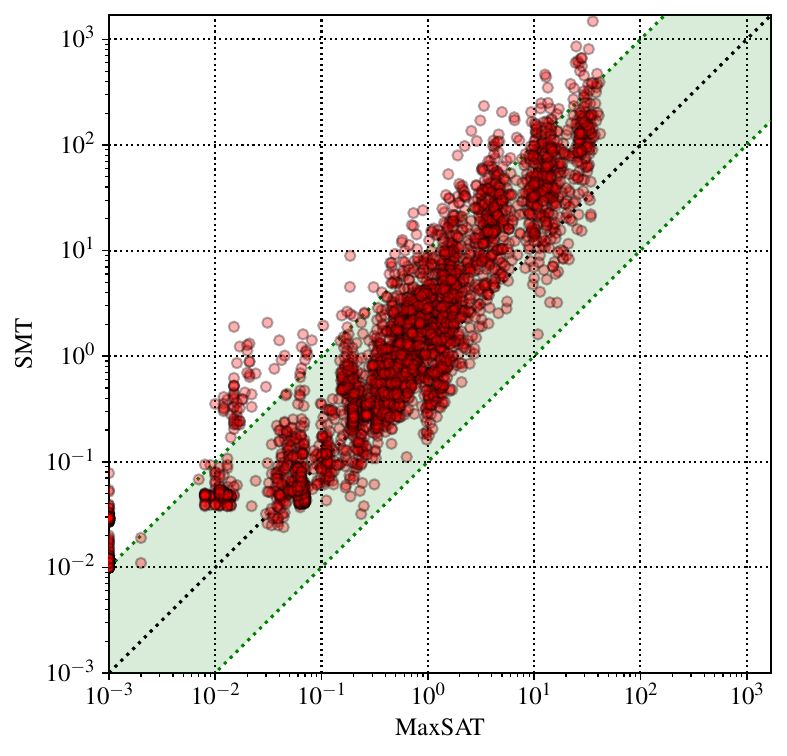}
    \caption{Detailed runtime comparison}
    \label{fig:det}
  \end{subfigure}
  \caption{Assessing  MaxSAT and SMT encodings of BTs for computing AXps.}
  \label{fig:perf}
\end{figure}

The results for CXp, summarized in~\cref{fig:BT-cxp}, reveal a more 
nuanced comparison between the MaxSAT- and SMT-based approaches.
For smaller and less complex datasets, in particular binary-class
benchmarks where CXps are often of size one, the SMT-based explainer
can be competitive or even faster.
However, as the datasets become larger and more challenging --- particularly in
multi-class settings --- the MaxSAT-based explainer demonstrates markedly better
scalability.

Such timeouts are prevalent on larger benchmarks with higher feature counts
and class cardinalities, where the MaxSAT approach consistently completes.
A key factor of this improved scalability is largely due to our combined 
multi-class-ordering and
variable-polarity heuristics (see~Sec.\ref{sec:vsids}), which allow the
incremental MaxSAT solver to reuse solver state and identify counterfactual
points more efficiently. This is especially beneficial for CXp generation,
where adversarial samples arise more frequently than in AXp computation,
reducing the cost of repeated oracle calls, i.e.\ satisfied Boolean assignments 
over formula $\phi$, s.t. the optimal values are positive. 
(Note that RC2 employs the MiniSat solver~\cite{minisat-gh} in order 
to preserve the variable-polarity heuristic; Glucose~3 is not used in this setting, as 
it interferes with this heuristic.)

Overall, while SMT performs well on simpler cases, the MaxSAT-based
approach offers superior scalability for computing CXps on complex
benchmarks.

\begin{figure}[ht]
  \begin{subfigure}[b]{0.6\textwidth}
    \centering
    \includegraphics[width=0.94\textwidth]{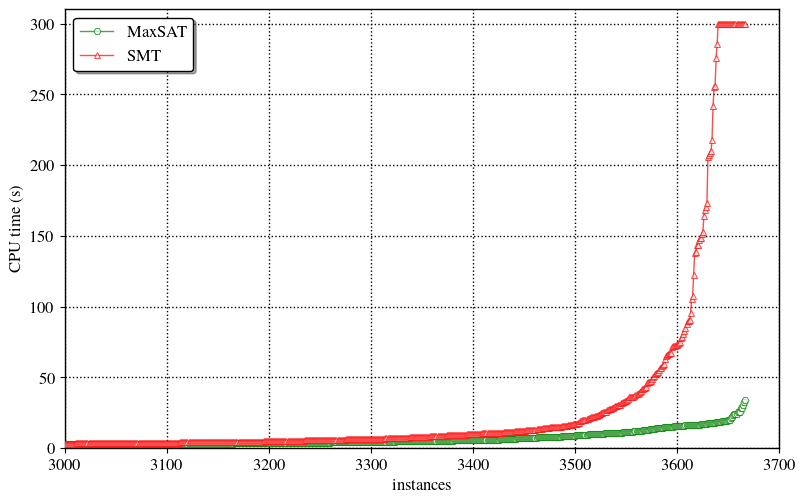}
    \caption{Raw performance} \label{fig:cactus-BT-CXp}
  \end{subfigure}%
  \begin{subfigure}[b]{0.4\textwidth}
    \centering
    \includegraphics[width=0.91\textwidth]{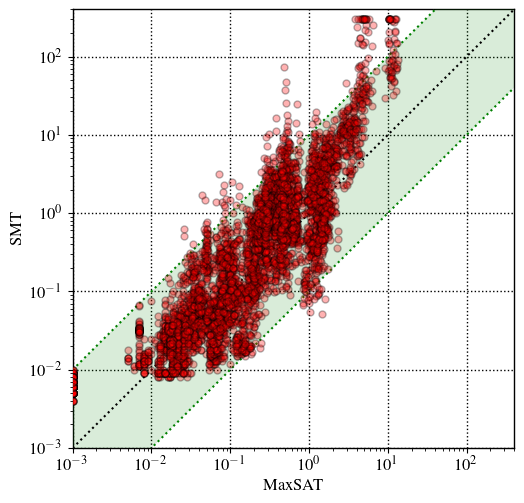}
    \caption{Detailed runtime comparison}  \label{fig:scatter-BT-CXp}
  \end{subfigure}
  \caption{Assessing  MaxSAT and SMT encodings of BTs for computing CXps.}
  \label{fig:BT-cxp}
\end{figure}


\paragraph{MaxSAT explanations for RFwv.}
Figure~\ref{fig:RFwv-mx} compares the SMT- and MaxSAT-based approaches for
computing AXps for RFwv.
The results show a clear advantage of the MaxSAT-based explainer, which
outperforms SMT on most datasets.
While SMT remains competitive on simpler instances, its performance
degrades significantly as model complexity increases.
For datasets with many classes or larger ensembles, the SMT-based approach
exhibits substantially higher runtimes and several timeout cases, whereas
MaxSAT consistently produces AXps within a minute 
(e.g.\ \textit{karhunen}, \textit{pendigits}, and \textit{texture}, runtimes
remain within practical limits i.e. below $30$~sec.).
The instance-level comparison further highlights the stability of MaxSAT,
which achieves lower and more predictable runtimes across inputs.
On average, MaxSAT is several times faster than SMT, with the performance
gap widening on the most challenging datasets.


\begin{figure}[ht]
  \begin{subfigure}[b]{0.5\textwidth}
    \centering
    \includegraphics[width=0.88\textwidth]{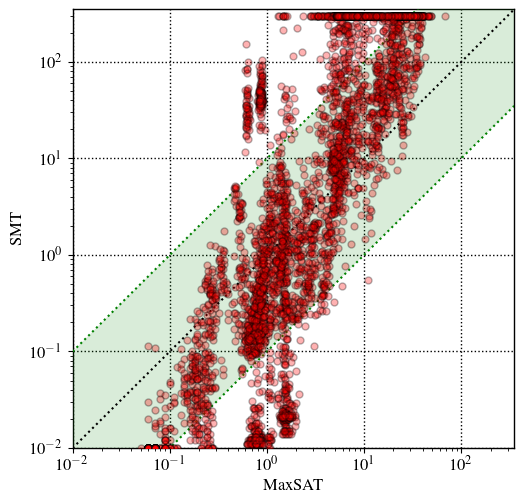}
    \caption{AXp runtimes for RFwv}
    %
    %
  \end{subfigure}%
  \begin{subfigure}[b]{0.5\textwidth}
    \centering
    \includegraphics[width=0.88\textwidth]{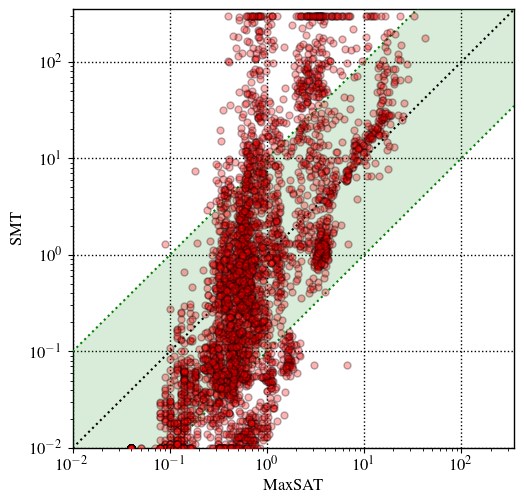}
    \caption{CXp runtimes for RFwv}
    %
    %
  \end{subfigure}
  \caption{Assessing  MaxSAT and SMT encodings of RFwv for computing AXps and CXps.}
  \label{fig:RFwv-mx}
\end{figure}

The CXp results for RFwv exhibit the same overall pattern observed earlier
for BTs.
Somewhat surprisingly, SMT performs better on CXp than AXp, as incremental 
solving allows reuse of previous satisfying assignments, whereas frequent 
unsatisfiable cores in AXp largely negate incrementality and force repeated 
solver restarts.
On smaller and less complex datasets --- especially in binary-class settings
where CXps are often of size one --- the SMT-based explainer remains competitive
and can be faster.
As the datasets become larger and more challenging, in particular in
multi-class configurations, the MaxSAT-based approach scales noticeably
better.
In these cases, SMT increasingly encounters timeout instances, whereas
MaxSAT consistently computes CXps within the allotted time.
This recurring trend across both RFwv and BTs highlights the effectiveness
of incremental MaxSAT solving combined with class-ordering and
variable-polarity heuristics.
These mechanisms allow solver state to be reused efficiently and enable
rapid identification of adversarial points, which is particularly
beneficial for CXp generation.

%
\paragraph{Ablation Study of MaxSAT Approach.}
We conduct an ablation study of our MaxSAT approach to evaluation the gain of the proposed 
incremental MaxSAT algorithm with assumptions~\cref{sec:assump} and  (depicted in~\cref{sec:cores}) and 
Reusing unsatisfiable cores~\cref{alg:imxs}, namely turning off  the incremental mode. 

We conduct an ablation study to evaluate the impact of the proposed 
incremental MaxSAT algorithm~\cref{alg:imxs} with assumptions~\cref{sec:assump} 
and the reuse of unsatisfiable cores~\cref{sec:cores}. 
To this end, we compare the full incremental approach (\cref{alg:imxs}) with 
a non-incremental variant where the incremental mode is disabled.
\cref{fig:ablation-mxs} shows the comparison of computing AXps for BTs
when ablating the incremental mode. Overall, the incremental approach 
consistently improves performance. 
Across all evaluated instances, incremental MaxSAT is faster on 98.6\% 
instances. 
In terms of runtime, incremental mode achieves an average finite speedup 
of $2.6\times$ compared to non-incremental mode. Moreover, 75\% of the instances obtain 
at least a $2.6\times$ improvement, and 90\% of the instances 
achieve more than $3.5\times$ speedup.
These results confirm that enabling incremental MaxSAT substantially improves efficiency, as 
it benefits from reusing previously extracted unsatisfiable cores and learnt clauses.

Furthermore, we ablate the heuristic described in \cref{sec:vsids} on multi-class
datasets to assess its impact on CXp computation. 
As can be see from \cref{fig:ablation-vsids}, overall, the heuristic significantly 
improves performance.
The largest improvements are observed on the \emph{waveform-40} and 
\emph{waveform-21} datasets, where our heuristic reduces the average runtime 
from 24.36~sec to 3.54~sec ($6.9\times$ speedup) and from 19.44~sec to 2.59~sec 
($7.5\times$ speedup), respectively. Substantial gains are also observed 
on \emph{karhunen}, s.t. the average runtime decreases from 12.71~sec 
to 4.40~sec (about $2.9\times$ speedup).
For simpler datasets, absolute differences are smaller, yet the heuristic 
still yields consistent relative improvements (often around $2\times$).
Overall, these results indicate that prioritizing recently successful opponent 
classes helps detect negative objectives earlier and reduces the number 
of MaxSAT calls during CXp generation.

\begin{figure}[ht]
  \centering
  \begin{subfigure}[b]{0.45\textwidth}
    \centering
    \includegraphics[width=0.96\textwidth]{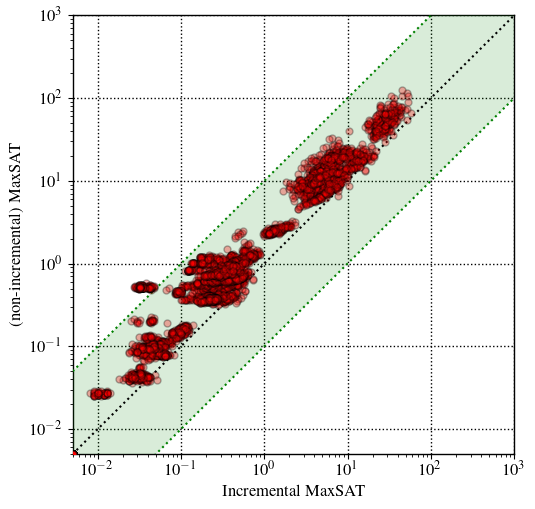}
    \caption{Ablation of incremental MaxSAT}
    \label{fig:ablation-mxs}
  \end{subfigure}
  \hfill
  \begin{subfigure}[b]{0.45\textwidth}
    \centering
    \includegraphics[width=\textwidth]{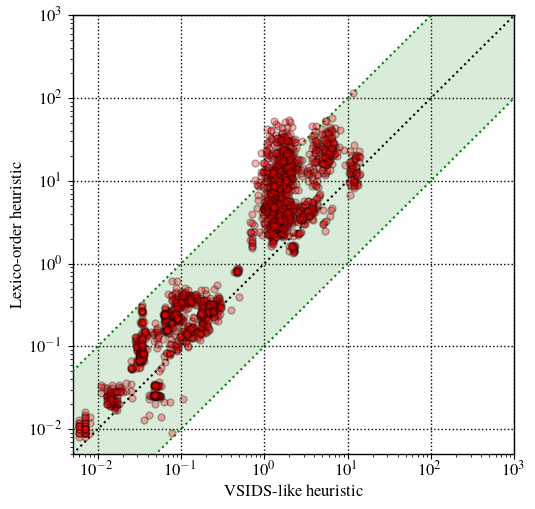}
    \caption{Ablation of Multi-class disjunction heuristic (for CXp)}
      \label{fig:ablation-vsids}
  \end{subfigure}
  \caption{Ablation study of MaxSAT Approach for BTs.}
\end{figure}

%
\paragraph{Minimum Contrastive Explanations.}
We conduct a qualitative assessment of smallest CXps to examine 
succinctness and relevance of searching optimal explanations. 
Additionally,  we evaluate the scalability of the proposed 
approach --- MaxSAT encoding and (SOTA) MaxSAT solvers 
performances.   
The comparison between CXps and minimum-CXps  
highlights a clear trade-off between runtime efficiency and 
explanation succinctness. 
(Note that we also carried out preliminary experiments on SMT encoding 
and our results clearly show that (Max)SMT-based approach does not scale 
for smallest CXp computation.)


%
\begin{figure}[ht]
  \centering
  \begin{subfigure}[b]{0.45\textwidth}
    \centering
    \includegraphics[width=0.96\textwidth]{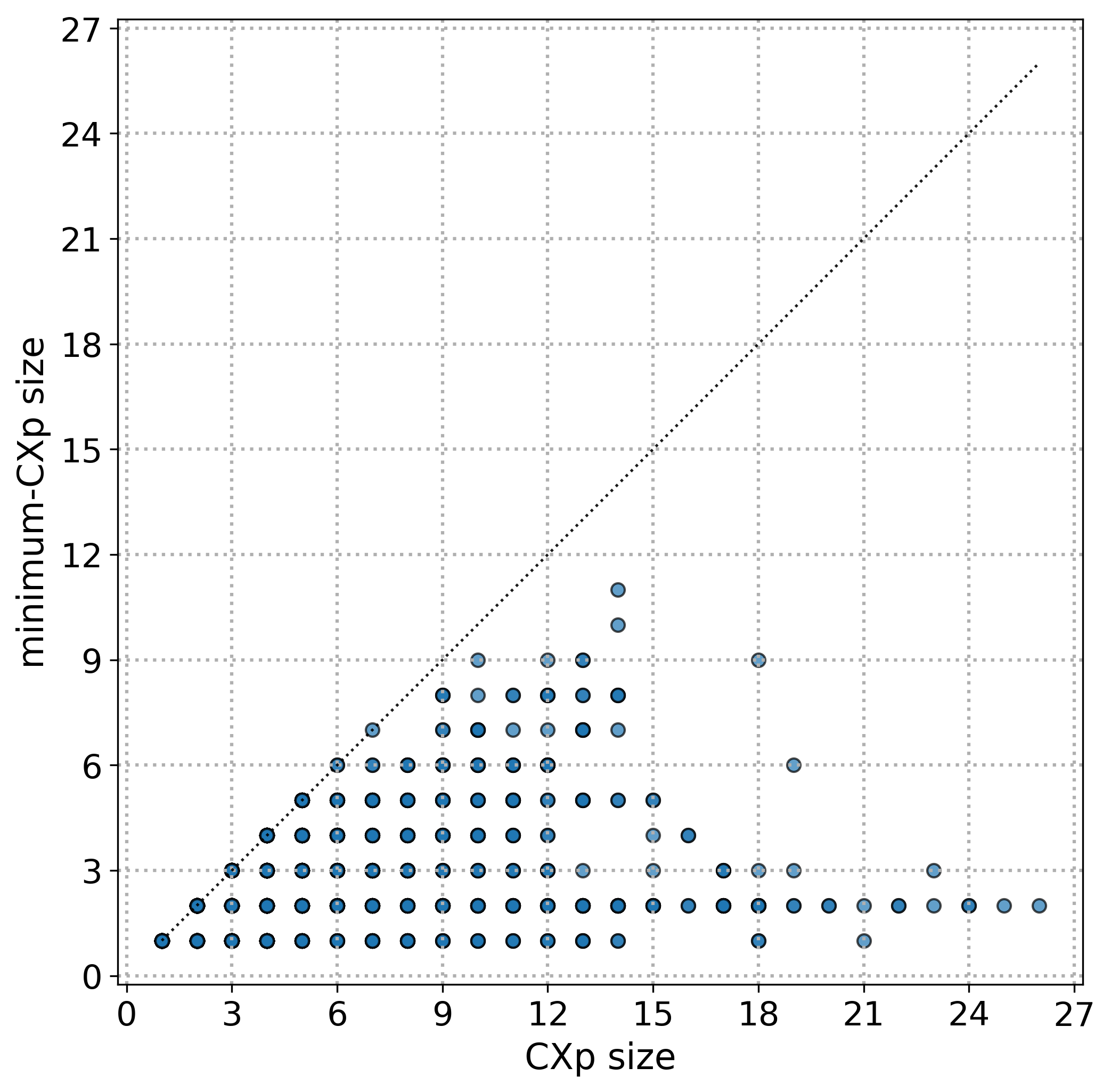}
    \caption{Detailed CXps vs. minimum-CXps length for RFmv}
    %
    %
  \end{subfigure}
  \hfill
  \begin{subfigure}[b]{0.45\textwidth}
    \centering
    \includegraphics[width=\textwidth]{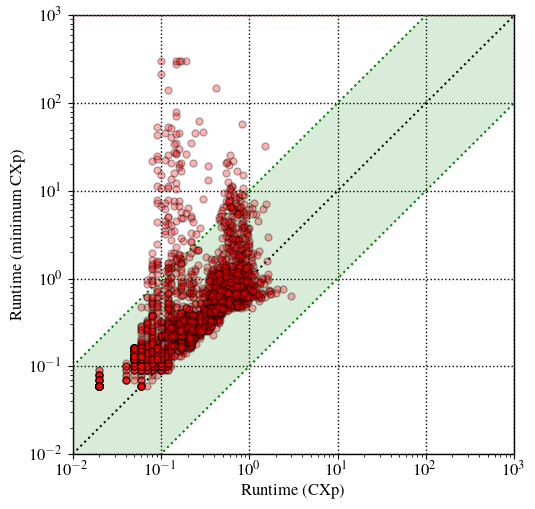}
    \caption{Detailed runtime CXps vs. minimum-CXps for RFmv}
    %
    %
  \end{subfigure}
  \caption{Assessing  minimum CXps for RFmv.}
  \label{fig:RFmv-mCXp}
\end{figure}

We begin with RFmv and report the results in~\cref{fig:RFmv-mCXp}. 
On average, computing a minimum-CXp is about $5\times$ slower than 
computing a CXp ($1.55$~sec.\ vs.\ $0.31$~sec.), reflecting the additional 
oracle calls required by the minimization process. 
This extra cost yields substantially more compact explanations: minimum-CXps 
are on average about $40\%$ smaller (mean size $1.79$ vs.\ $3$).  
Overall, the results confirm that the runtime overhead of minimum-CXp is 
compensated by improved succinctness and interpretability.

%

%
\begin{figure}[ht]
  \centering
  \begin{subfigure}[b]{0.4\textwidth}
    \centering
    \includegraphics[width=0.96\textwidth]{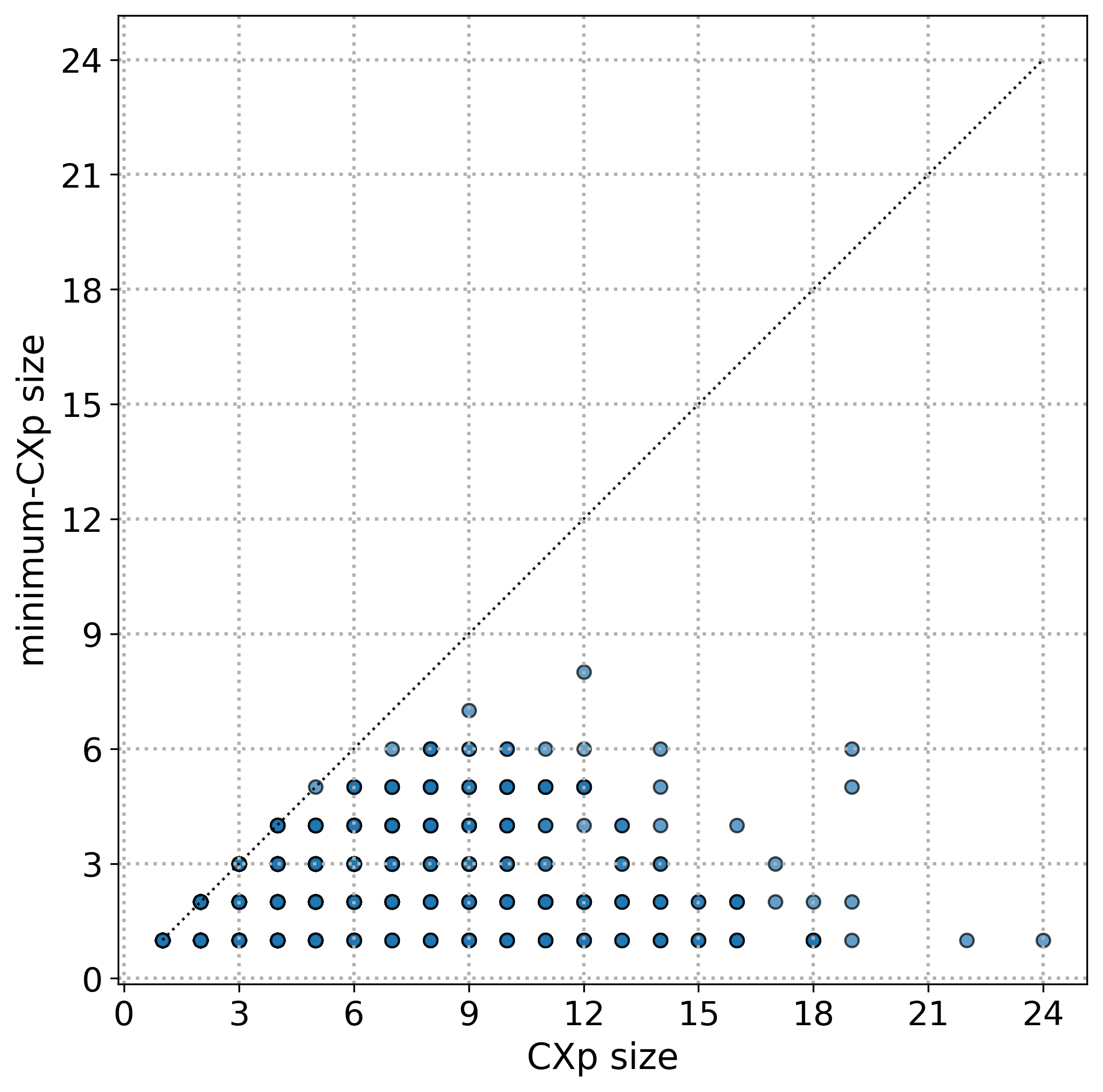}
    \caption{Detailed CXps vs. minimum-CXps length for RFwv}
  \end{subfigure}
  \hfill
  \begin{subfigure}[b]{0.4\textwidth}
    \centering
    \includegraphics[width=\textwidth]{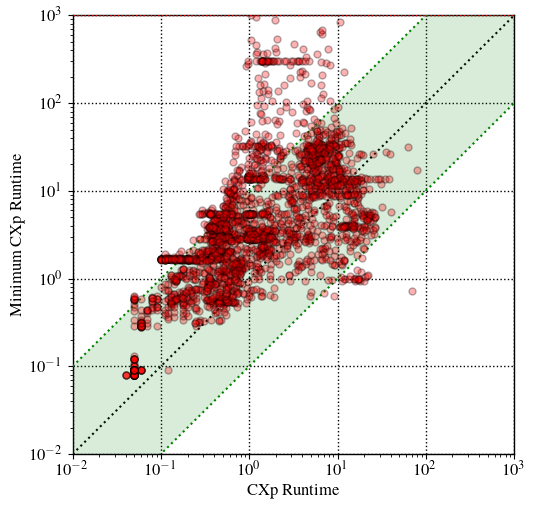}
    \caption{Detailed runtime CXps vs. minimum-CXps for RFwv}
  \end{subfigure}
  \caption{Assessing  minimum CXps for RFwv.}
  \label{fig:RFwv-mCXp}
\end{figure}

The results of smallest CXps for RFwv are shown
in~\cref{fig:RFwv-mCXp}.
As expected, enforcing minimum cardinality incurs a significant
computational overhead.
Across all datasets, CXps are computed in $3.51$~sec.\ on average
(median $1.0$~sec.), whereas minimum-CXps require $18.10$~sec.\ on average
(median $3.5$~sec.), corresponding to an overhead of roughly $5\times$ in
mean runtime.
In about $85\%$ of the instances, minimum-CXp is slower than CXp, with the
gap being particularly pronounced on challenging datasets such as
\emph{spectf} and \emph{wpbc}, where runtimes increase by almost two orders
of magnitude.
Despite this cost, minimum-CXp provides a clear and consistent benefit in
terms of explanation size.
On average, CXps contain $3.31$ features (median $2$), whereas minimum-CXps
reduce this to $1.66$ features (median $1$), i.e., roughly a $50\%$
reduction.
Over all instances, minimum-CXp is strictly smaller than CXp in $51.1\%$ of
the cases and never larger.
Moreover, minimum-CXp completely eliminates very large explanations:
while $6.3\%$ of CXps contain at least $10$ features, no minimum-CXp exceeds
this length.
The effect is particularly striking on difficult datasets such as
\emph{karhunen}, where average explanation size drops from $8.14$ to $1.46$
features and long explanations are entirely removed.
Overall, for RFwv, minimum-CXp trades additional computation time for
substantially more compact explanations, especially on the hardest
benchmarks.

\begin{figure}[ht]
  \centering
  \begin{subfigure}[b]{0.4\textwidth}
    \centering
    \includegraphics[width=0.95\textwidth]{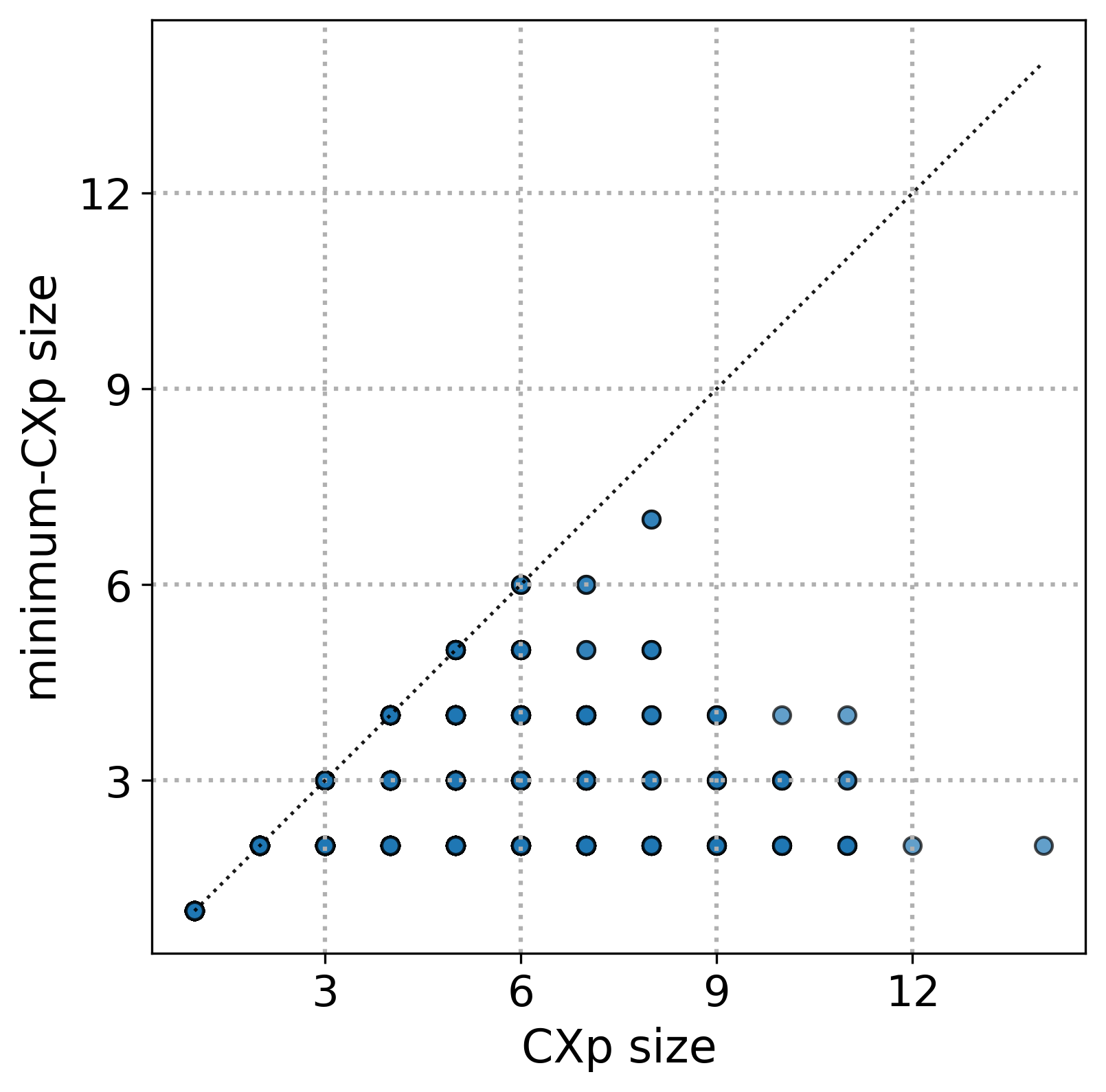}
    \caption{CXps vs. minimum-CXps length}
  \end{subfigure}
  \hfill
  \begin{subfigure}[b]{0.4\textwidth}
    \centering
    \includegraphics[width=\textwidth]{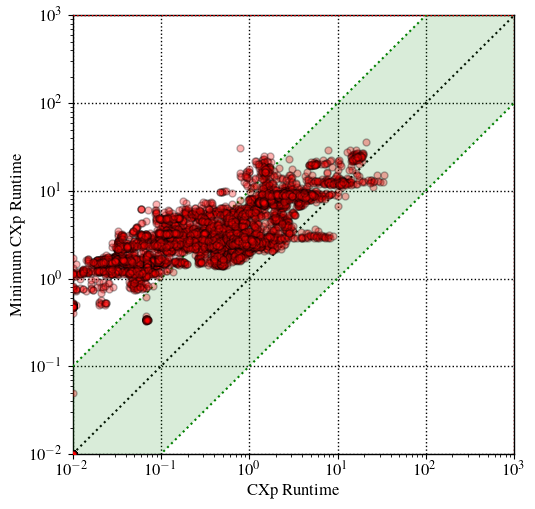}
    \caption{CXps vs. minimum-CXps runtimes}
  \end{subfigure}
  \caption{Assessing  minimum CXps in BTs.}
  \label{fig:BT-mCXp}
\end{figure}

Results for BTs are summarized in~\cref{fig:BT-mCXp}.
As with RFwv, enforcing minimality increases runtime: 
CXp requires $1.30$~sec.\ on average (median $0.40$~sec.),
whereas minimum-CXp takes $4.30$~sec.\ on average (median $2.62$~sec.),
corresponding to a $\sim3\times$ overhead.
Minimum-CXp is slower in nearly all instances, although absolute runtimes
remain modest on easier datasets.
The impact on explanation size is moderate on average but substantial for
large explanations.
Globally, the mean CXp size decreases from $1.88$ to $1.52$ features, while
CXp and minimum-CXp coincide in $82.6\%$ of the instances.
However, minimum-CXp almost entirely removes the long tail: explanations of
size $\ge 5$ drop from $7.5\%$ of instances for CXp to $0.7\%$ for
minimum-CXp, and no minimum-CXp exceeds $7$ features.
This effect is particularly visible on datasets such as \emph{karhunen},
\emph{sonar}, and \emph{wpbc}, where minimum-CXp sharply reduces both the
average and maximum explanation size.

In summary, for all TEs, minimum-CXp consistently produces much
more compact explanations at the expense of higher runtime.
While the overhead can be substantial on some benchmarks, the elimination
of long and potentially hard-to-interpret explanations highlights the
practical value of enforcing minimal cardinality, especially on challenging models.

\paragraph{Comparison of MaxSAT solvers (OR-Tools vs.\ RC2).}
We compare the performance of two MaxSAT (Pseudo-Boolean backends) solvers --- OR-Tools
and RC2 --- for computing minimum-CXps on BT and RFwv.
The corrected results are summarized in~\cref{fig:MxS-mCXp}.

\begin{figure}[ht]
  \centering
  \begin{subfigure}[b]{0.4\textwidth}
    \centering
    \includegraphics[width=\textwidth]{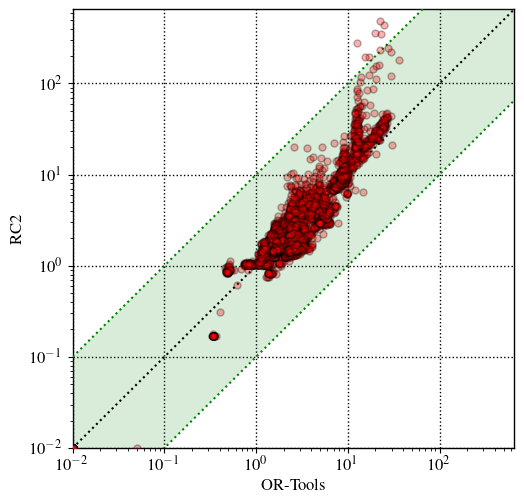}
    \caption{Detailed runtime RC2 vs. OR-Tools for BTs}
    \label{fig:BT-cxp0}
  \end{subfigure}
  \hfill
  \begin{subfigure}[b]{0.4\textwidth}
    \centering
    \includegraphics[width=\textwidth]{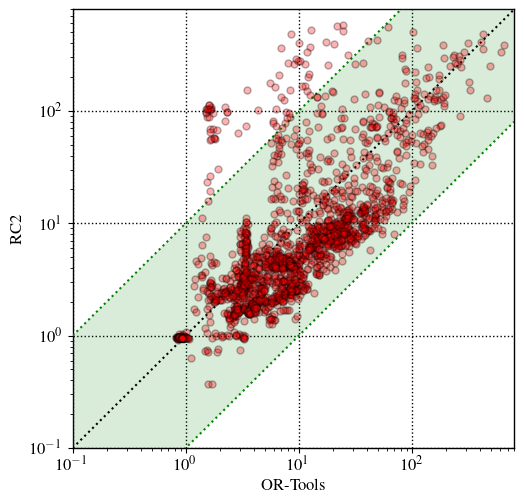}
    \caption{Detailed runtime RC2 vs. OR-Tools for RFwv}
    \label{fig:BT-cxp1}
  \end{subfigure}
  \caption{Scalability of RC2 and OR-Tools oracles for computing minimum CXp.}
  \label{fig:MxS-mCXp}
\end{figure}

For BTs, OR-Tools consistently outperforms RC2 across nearly all datasets.
Both average and median runtimes are lower for OR-Tools, and the solver
exhibits more stable behavior, particularly on harder benchmarks.
In contrast, RC2 incurs higher runtimes and greater variance, with several
instances approaching the timeout limit (600 sec.).
This performance gap becomes more pronounced as the size of the encoding
grows, indicating that OR-Tools handles the Pseudo-Boolean constraints
$\fml{S}_{c,c'}$ in~\ref{eq:hard2} more efficiently.
A similar trend is observed for RFwv.
While both solvers perform comparably on small and simple datasets,
OR-Tools scales better on larger and multi-class RFwv models.
RC2 shows a noticeable slowdown as tree depth and class cardinality
increase, whereas OR-Tools maintains consistently lower runtimes.
This suggests that OR-Tools is better suited to get the optimal solutions 
the optimization problems formulated by minimum-CXp computation.
Overall, these results indicate that OR-Tools provides superior performance
and scalability for minimum-CXp computation on both BTs and RFwv.
While RC2 remains efficient for incremental feasible solution checks (entailment   
check if the objective value is positive, namely checking the existence 
of an adversarial point $\mbf{x}$ that is not necessarily the closest to $\mbf{v}$.) 
when finding minimal explanation,  
OR-Tools emerges as the preferred backend for optimal solution search to find smallest 
CXp explanations.

\paragraph{Fairness \& robustness check.}
We evaluated individual fairness verification on four standard tabular benchmarks 
from the fairness literature --- Adult, COMPAS, German Credit, and Ricci --- spanning 
domains such as income prediction, criminal recidivism, credit scoring, and 
public-sector employment. 
Each dataset is associated with well-known protected attributes: race and gender for 
Adult and COMPAS, age and gender for German Credit, and race for Ricci.
For each dataset, we trained RFmv classifiers composed of 100 trees, varying the maximum
 tree depth to achieve the best test accuracy (see~\cref{tab:fairness-RF}). 
 %
Despite these accuracy levels, fairness outcomes vary significantly across 
datasets. The German Credit model satisfies individual fairness under 
the considered protected attributes, whereas Adult, COMPAS, and Ricci all 
exhibit individual fairness violations. Notably, Ricci remains \emph{unfair} despite 
achieving perfect predictive accuracy, underscoring that high accuracy alone does 
not guarantee fairness. 
Conversely, German Credit demonstrates that moderate accuracy 
does not preclude individual fairness.

From a computational perspective, the SAT-based verification 
approach is highly efficient. For all datasets and model configurations, individual 
fairness verification completes in negligible time --- on the order of a few 
milliseconds --- despite  feature dimensionality increase after apply 
one hot encoding on categorical data. 

The runtimes for robustness verification, reported in~\cref{tab:RFs-sat} 
(column {\bf AEx}), using the SAT-based approach (see~\cref{sec:fair})
on RFmv are consistently very low, ranging from a few tens of milliseconds 
up to roughly 2.7 sec.\ for the largest ensembles. 
As with the fairness verification experiments, this demonstrates that formal 
SAT-based reasoning scales efficiently with model size and complexity, making
 both robustness and individual fairness checks practical in the same pipeline.

\begin{table}[ht]
\centering
\resizebox{0.725\textwidth}{!}{
\begin{tabular}{
l
>{\text{(}}
S[table-format=2.0, table-space-text-pre=(]
S[table-format=1.0, table-space-text-post=)]
<{\text{)}}
l
S[table-format=1.0]
S[table-format=2.1]
>{\text{(}}
S[table-format=5.0, table-space-text-pre=(]
S[table-format=5.0, table-space-text-post=)]
<{\text{)}}
S[table-format=1.3]
l
}
\toprule[1.2pt]
\multirow{2}{*}{\bf Dataset} &
\multicolumn{3}{c}{ } &
\multicolumn{2}{c}{\bf RFmv} &
\multicolumn{2}{c}{\bf SAT Enc} &
\multicolumn{2}{c}{\bf IF} \\
\cmidrule[0.8pt](lr{.75em}){5-6}
\cmidrule[0.8pt](lr{.75em}){7-8}
\cmidrule[0.8pt](lr{.75em}){9-10}
& {\bf m} & {\bf K} & \multicolumn{1}{c}{\bf $\fml{P}$} &
{\bf D} & {\bf \%A} & {\bf \#var} & {\bf \#cl} &
{\bf Time} & {\bf Result}
\\
\toprule[1.2pt]
%
%
adult   & 12 & 2 & race, gender & 6 & 78.4 &  6374  & 18650 & 0.128 & unfair \\
compas  & 11 & 2 & race, gender & 5 & 58.8 &  6169  & 12968 & 0.073 & unfair \\
german  & 21 & 2 & age, gender  & 3 & 70.0 & 16383  & 24942 & 0.421 & fair   \\
ricci   &  5 & 2 & race         & 3 & 100 &  6233  & 11382 & 0.052 & unfair \\
\bottomrule[1.2pt]
\end{tabular}
}
\caption{%
\footnotesize{%
Fairness verification benchmarks.
Columns {\bf m} and {\bf K} report the number of features and classes.
Column {\bf $\fml{P}$} lists the protected attributes considered for
individual fairness.
Columns {\bf \#var} and {\bf \#cl} show the size of the CNF encoding.
Column {\bf Time} reports total runtime (encoding and verification).
Column {\bf Result} indicates whether the classifier satisfies individual
fairness (IF).
} }
\label{tab:fairness-RF}
\end{table}

\section{Discussion}  \label{sec:conc}

\subsection{Strengths and Limitations of (Max)SAT-based TE reasoning}

\paragraph{Reflection on Boolean Encoding.}
A notable property of the proposed propositional encoding of trees is that, while 
the number of clauses scales linearly with the size of trees in the ensemble, 
the number of literals grows exponentially. 
The MaxSAT encoding exhibits the same behaviour. This exponential growth 
in the encoding size is an inherent consequence of enumerating feature combinations 
across tree branches. Nevertheless, for common sizes of TEs in practical 
applications --- restricted to a maximum depth of 2 to 6 --- the resulting encodings 
remain of a manageable size, and the exponential blowup does not constitute 
a practical limitation.

\paragraph{Observed Scalability.}
Despite the encoding limitations noted above,
the empirical results demonstrate that
the SAT/MaxSAT framework scales effectively
across a broad range of TE configurations.
For single explanation computation (AXp/CXp),
average runtimes remain well below $30$~sec.\
across all 31 benchmarks and all three TE variants,
with the vast majority of instances resolved in under $5$~sec.
For minimum-CXp computation,
the MaxSAT optimization layer
introduces an overhead of $3$--$5\times$ over standard CXp,
yet absolute runtimes remain tractable
for the ensemble sizes and tree depths considered.
For formal verification (fairness, robustness),
the SAT-based approach completes in milliseconds,
confirming that verification admits simpler encodings
than explanation and scales accordingly.
Taken together, these results
establish MaxSAT-based reasoning as
the method of choice for rigorous TE analysis
at practical scales,
while motivating further work on
encoding schemes that mitigate the literal blowup
for deeper or larger ensemble configurations.

\paragraph{Practical Guidelines for Practitioners.}
Based on the empirical evidence,
we offer the following guidance
on when and how to apply
logic-based TE explanation.

\begin{description}
 \item [Recommended settings.]
    The proposed approach is well-suited for TEs
    trained with depths $d \leq 6$
    and up to $T = 100$ trees per class,
    which is the standard configuration
    for most off-the-shelf TE learners.
    Under these conditions,
    both AXp and CXp computation are tractable.
    Formal verification tasks
    (individual fairness, robustness)
    are significantly cheaper than explanation
    and remain practical even for models
    at the upper end of the complexity range considered.
    When explanation succinctness is a priority,
    minimum-CXp via MaxSAT is recommended:
    it reduces average explanation size by $40$--$50\%$
    at an overhead of $3$--$5\times$ in runtime,
    which remains acceptable in most deployments.

    \item [Limits of the current approach.]
    The approach faces two compounding challenges
    as TE size grows beyond the configurations studied.
    First, the exponential growth in the number of literals
    with tree depth renders the encoding
    intractable for $d \geq 8$--$10$,
    independent of the solver used.
    Second, increasing the number of trees $n$
    inflates the number of clauses proportionally,
    and increasing the number of classes $K$
    multiplies the number of MaxSAT soft constraints,
    leading to super-linear growth in solver time.

\end{description}

\subsection{Related Work} \label{ssec:relw}

\paragraph{(Weak)AXp for BTs (classifiers/regressors)~\cite{Audemard-aistats23,Audemard-ijcai23}}
Audemard et al.~\cite{Audemard-aistats23,Audemard-ijcai23} revisit the computation 
of weak AXps for BT models.
In~\cite{Audemard-aistats23}, Audemard et al. focus on BTs for classification, 
whereas their subsequent work~\cite{Audemard-ijcai23} extends this method to BTs for regression.
The underlying idea is to compute a weak AXp $\fml{X}'$ as a polynomial-time 
over-approximation of a genuine AXp $\fml{X}$.
The heuristic avoids NP-oracle calls but provides no minimality guarantee and 
may return only an approximate explanation, i.e., the full feature set $\fml{F}$.
Accordingly, the proposed procedure is best viewed as a preprocessing step prior 
to exact AXp extraction, applicable to any explainer for efficiently generating a weak AXp.
%

%
The heuristic of Audemard et al.~\cite{Audemard-aistats23} tests feature 
removal by aggregating per-tree score bounds and checking whether 
the prediction remains stable, extending this idea to multi-class settings 
via class-wise score comparisons. 
This results in an $m \times n$ or $m \times n \times K$ procedure.
Empirical evaluations show that augmenting XReason~\cite{iism-aaai22} 
with this heuristic~\cite{Audemard-aistats23} 
yields modest speedups --- typically a few seconds --- with more noticeable 
effects on large binary datasets containing thousands of binary features.

Audemard et al.~\cite{Audemard-ijcai23} adapt the over-approximation 
heuristic to regression BTs.
In fact, regression problem can be naturally reduced to a classification problem 
by discretizing the output range of regressors into disjoint intervals,
which enables the same approach for explaining classification BTs to be applied to regression BTs.
As a result, computing an explanation for a regressor BT focuses 
on ensuring the output value falls within (or outside) a desired target interval.
%
The main distinction between the contribution of~\cite{Audemard-aistats23} 
and~\cite{Audemard-ijcai23}  lies in the second phase of the methodology 
to minimize the final explanation. 
Specifically, \cite{Audemard-ijcai23} adopt a dichotomic search 
algorithm~\cite{sais-ecai06,Karimi-aistats20} 
leveraging a MILP-based encoding of the BT model.    
Notably, the paper does not assess the scalability of the MILP encoding against existing 
SOTA SMT/MaxSAT encodings. However, prior work~\cite{inms-corr19,ignatiev-ijcai20} 
has already highlighted --- aligning with our findings --- SMT typically outperforms MILP on 
tree ensembles, which inherently give rise to large numbers of Boolean variables, unlike 
models such as neural networks where MILP is more apt.
Additionally, it is worth noting that  PyXAI tool of Audemard et al.~\cite{pyxai-ijcai24b} does not 
currently implement algorithms 
for directly computing AXps for BT classifiers and regressors.

\paragraph{CXp for RFmv \& BT (classifiers/regressors)~\cite{Audemard-ecai23, Audemard-ecai24}}
The proposed work in~\cite{Audemard-ecai23} targets  minimum CXps for binary RFmv classifiers. 
We recall that computing CXps for (binary and multi-class) RFmv was first
addressed 
in~\cite{ims-ijcai21}, leveraging 
a SAT encoding.  
Given this encoding, any off-the-shelf MaxSAT solver can be directly instantiated 
to compute minimum (optimal) CXps, since the optimal MaxSAT solution corresponds 
to minimum cardinality MCS, which is well known to characterize a CXp~\cite{inams-aiia20}.
Although~\cite{ims-ijcai21} does not explicitly report minimum-size CXp, the connection 
between MCS and MaxSAT is standard in the literature.
The contribution of~\cite{Audemard-ecai23} is essentially a re-casting of this observation: the paper 
describes how MaxSAT is applied to obtain minimum CXps from the underlying SAT encoding.
In addition, the authors sketch a Marco-like enumeration~\cite{lpmms-cj16,inams-aiia20} scheme for generating 
(minimum) CXps, though the enumeration procedure itself is not implemented 
in PyXAI~\cite{pyxai-ijcai24b}.
The paper also presents an encoding of both categorical and numerical 
feature domains analogous to the one devised earlier in~\cite{iism-aaai22}.
(Note that the paper claims to employ an L$_0$ norm for categorical features and L$_1$ for 
numerical ones, nevertheless, the encoding outlined in the paper 
(as well as the implementation found in PyXAI) 
applies an L$_0$ nom (hamming distance) treatment uniformly to all feature types.)

Crucially, recent validation work~\cite{hiims-corr25} demonstrates that PyXAI~\cite{pyxai-ijcai24b}
produces incorrect AXp and CXp explanations for RFmv classifiers.
These findings suggest that the PyXAI explainers for both RFmv are affected by bugs.
Consequently, we do not use PyXAI as a baseline in our empirical evaluation.

In~\cite{Audemard-ecai24}, Audemard et al. revisit the MILP encoding 
of AXp for BTs~\cite{Audemard-ijcai23} to adapt it to CXp for regression BTs.
Unlike their prior work~\cite{Audemard-ijcai23} paper, where a dichotomic 
algorithm is implemented  to compute a minimal feature set (AXp) that falsifies 
the MILP formula, in this work the goal is to find 
the minimum solution (smallest CXp) that restore consistency in the formula.  
Hence, this is achieved by instrumenting an simple oracle call to a (any off-the-shelf) 
MILP solver to resolve the formula.
The main novelty in this work is the defined objective function of the MILP problem.
Basically, the applied objective function maximizes a weighted sum of feature-valued: 
\,$ \sum w_k \,\bb{x_i\in I^i_k}$, where $w_k \ge 1$ and represent the L$_1$ norm 
distance of numerical domain interval $I^i_k$ (and L$_0$ norm distance 
to the rest of categorical features, i.e. $w_k = 1$),  rather than a sum of literals 
such that all weights $w_k = 1$.  
In PyXAI~\cite{pyxai-ijcai24b}, users can specify custom weights for numerical 
features (L$_1$ norm); by default, all features -- whether 
interval-based or categorical -- are assigned a weight of 1 under the L$_0$ norm.

\cite{pyxai-ijcai24b} apply a MARCO-like algorithm to enumerate solutions produced 
by the MIP solver in an attempt to find lower-cost candidates. 
This procedure appears redundant, as the solver is already configured 
to compute the globally optimal solution in the initial call. Furthermore, while the paper 
defines minimal CXps, it does not explicitly discuss minimum-size explanations, despite 
the solver being instrumented to return them.

Again~\cite{hiims-corr25} reported several cases of CXps produced by PyXAI \footnote{ %
The implemented scripts responsible of generating CXps for BTs are found  
in the supplementary material of the  IJCAI-23~\cite{Audemard-ijcai23} paper.} 
for BT classifiers are either  incorrect (not WCXp) or redundant explanations 
(not minimal).
These findings suggests that the PyXAI explainer for BTs is affected by bugs 
and therefore it cannot be included in our empirical assessment baseline.

\paragraph{(Preferred) WAXp for RFmv~\cite{Audemard-aaai22, Audemard-ijcai22}}
In a related but orthogonal line of work on explaining RFmv models, 
Audemard et al.~\cite{Audemard-aaai22} tailor the SAT-based framework of~\cite{ims-ijcai21} 
to compute weak AXps for (fully) binary RFmv classifier.
Their notion of a \emph{majority reason} corresponds to a union of weak implicants 
(weak AXps) drawn from individual decision trees (DTs) in the forest, constructed 
so that the overall majority vote of the RF is preserved.
The paper also proposes to compute \emph{minimum majority reason}\footnote{%
A minimum majority reason should not be confused with a standard AXp.
It does not necessarily satisfy the strict minimality condition of an AXp.} by 
encoding the problem into a partial MaxSAT formula.   
Audemard et al.~\cite{Audemard-ijcai22} extend their work on \emph{majority reason}~\cite{Audemard-aaai22} 
adding different logical constraints expressing user preferences on weak AXp 
(or majority reason) for binary RFmv. 

\subsection{Conclusions \& Future Research}
This paper investigates the computation of rigorous, logic-based
explanations for tree ensembles, concretely for random forests and for
boosted trees. The paper considers both abductive and contrastive
explanations.
For random forests, the paper details three different approaches,
depending on how voting is instrumented. One approach is based on an
SMT encoding, another is based on a pure propositional (SAT) encoding,
and a third one is based on exploiting MaxSAT.
For boosted trees, the paper details one approach based on encoding to
SMT and another one based on MaxSAT.

The experimental results, obtained on a number of well-known datasets,
demonstrate the practical scalability of computing explanations for
fairly realistic tree ensembles. The tree ensembles considered are
such that larger ensembles would not yield improvements in prediction
accuracy. For such tree ensembles, the experiments demonstrate that
rigorous explanations are computed more efficiently than (possibly
unsound) explanations obtained with Anchor~\cite{guestrin-aaai18}.

Given that the algorithms proposed for explaining tree ensembles
involve calls to an (F)NP oracle (e.g.\ a SAT, SMT or MaxSAT oracle),
scalability could be a concern for significantly larger ensembles.
This could happen for datasets for which accuracy would require tree
ensemble sizes larger than what the methods proposed in this paper can
explain. Further optimizations, e.g.\ more optimized encodings,
breaking of symmetries, etc. is the subject of future research.

\newpage
\section*{Acknowledgments}
This work was supported in part by the National Research Foundation, 
Prime Minister’s Office, Singapore under its Campus for Research 
Excellence and Technological Enterprise (CREATE) programme,
by the Australian Research Council through the OPTIMA ITTC IC200100009, 
by the Spanish Government under
grant PID 2023-152814OB-I00, and ICREA starting funds.
The last author (JMS) also acknowledges the extra incentive provided
by the ERC in not funding this research.

\bibliographystyle{ACM-Reference-Format}

\newtoggle{mkbbl}

\settoggle{mkbbl}{false}

\iftoggle{mkbbl}{
    \bibliography{refs}

@inproceedings{Audemard-ecai24,
  author       = {Gilles Audemard and
                  Jean{-}Marie Lagniez and
                  Pierre Marquis},
  title        = {On the Computation of Contrastive Explanations for Boosted Regression
                  Trees},
  OPTbooktitle    = {{ECAI} 2024 - 27th European Conference on Artificial Intelligence,
                  19-24 October 2024, Santiago de Compostela, Spain - Including 13th
                  Conference on Prestigious Applications of Intelligent Systems {(PAIS}
                  2024)},
  booktitle = {ECAI},                
  volume       = {392},
  pages        = {1083--1091},
  OPTpublisher    = {{IOS} Press},
  year         = {2024}
}

@inproceedings{Audemard-ecai23,
  author       = {Gilles Audemard and
                  Jean{-}Marie Lagniez and
                  Pierre Marquis and
                  Nicolas Szczepanski},
  title        = {On Contrastive Explanations for Tree-Based Classifiers},
  OPTbooktitle    = {{ECAI} 2023 - 26th European Conference on Artificial Intelligence,
                  September 30 - October 4, 2023, Krak{\'{o}}w, Poland - Including
                  12th Conference on Prestigious Applications of Intelligent Systems
                  {(PAIS} 2023)},
  booktitle = {ECAI}, 
  volume       = {372},
  pages        = {117--124},
  OPTpublisher    = {{IOS} Press},
  year         = {2023}
}

@inproceedings{pyxai-ijcai24b,
  author       = {Gilles Audemard and
                  Jean{-}Marie Lagniez and
                  Pierre Marquis and
                  Nicolas Szczepanski},
  title        = {PyXAI: An {XAI} Library for Tree-Based Models},
  OPTbooktitle    = {Proceedings of the Thirty-Third International Joint Conference on
                  Artificial Intelligence, {IJCAI} 2024, Jeju, South Korea, August 3-9,
                  2024},
  booktitle    = {IJCAI},                
  pages        = {8601--8605},
  OPTpublisher    = {ijcai.org},
  year         = {2024}
}

@inproceedings{Audemard-aistats23,
  title={Computing abductive explanations for boosted trees},
  author={Audemard, Gilles and Lagniez, Jean-Marie and Marquis, Pierre and Szczepanski, Nicolas},
  OPTbooktitle={International Conference on Artificial Intelligence and Statistics},
  booktitle={AISTATS},
  pages={4699--4711},
  year={2023},
  OPTpublisher={PMLR}
}

@inproceedings{Audemard-ijcai23,
  title={Computing abductive explanations for boosted regression trees},
  author={Audemard, Gilles and Bellart, Steve and Lagniez, Jean-Marie and Marquis, Pierre},
  OPTbooktitle={Thirty-Second International Joint Conference on Artificial Intelligence},
  booktitle={IJCAI},
  pages={3432--3441},
  year={2023}
}

@inproceedings{Audemard-ijcai22,
  author       = {Gilles Audemard and
                  Steve Bellart and
                  Louenas Bounia and
                  Fr{\'{e}}d{\'{e}}ric Koriche and
                  Jean{-}Marie Lagniez and
                  Pierre Marquis},
  title        = {On Preferred Abductive Explanations for Decision Trees and Random Forests},
  booktitle    =  {IJCAI},
  pages        = {643--650},
  year         = {2022}
}

@inproceedings{Audemard-aaai22,
  author       = {Gilles Audemard and
                  Steve Bellart and
                  Louenas Bounia and
                  Fr{\'{e}}d{\'{e}}ric Koriche and
                  Jean{-}Marie Lagniez and
                  Pierre Marquis},
  title        = {Trading Complexity for Sparsity in Random Forest Explanations},
  booktitle    =  {AAAI},
  pages        = {5461--5469},
  year         = {2022}
}

@inproceedings{sais-ecai06,
  author       = {Fred Hemery and
                  Christophe Lecoutre and
                  Lakhdar Sais and
                  Fr{\'{e}}d{\'{e}}ric Boussemart},
  OPTeditor       = {Gerhard Brewka and
                  Silvia Coradeschi and
                  Anna Perini and
                  Paolo Traverso},
  title        = {Extracting {MUCs} from Constraint Networks},
  OPTbooktitle    = {{ECAI} 2006, 17th European Conference on Artificial Intelligence,
                  August 29 - September 1, 2006, Riva del Garda, Italy, Including Prestigious
                  Applications of Intelligent Systems {(PAIS} 2006), Proceedings},
  booktitle  = {ECAI},
  OPTseries       = {Frontiers in Artificial Intelligence and Applications},
  volume       = {141},
  pages        = {113--117},
  OPTpublisher    = {{IOS} Press},
  year         = {2006}
}

@inproceedings{Karimi-aistats20,
  author       = {Amir{-}Hossein Karimi and
                  Gilles Barthe and
                  Borja Balle and
                  Isabel Valera},
  OPTeditor       = {Silvia Chiappa and
                  Roberto Calandra},
  title        = {Model-Agnostic Counterfactual Explanations for Consequential Decisions},
  OPTbooktitle    = {The 23rd International Conference on Artificial Intelligence and Statistics,
                  {AISTATS} 2020, 26-28 August 2020, Online [Palermo, Sicily, Italy]},
  booktitle    =   {AISTATS},              
  OPTseries       = {Proceedings of Machine Learning Research},
  volume       = {108},
  pages        = {895--905},
  publisher    = {{PMLR}},
  year         = {2020}
}

@article{hiims-corr25,
      title={Uncovering Bugs in Formal Explainers: A Case Study with PyXAI}, 
      author={Xuanxiang Huang and Yacine Izza and Alexey Ignatiev and Joao Marques-Silva},
      year={2025},
      journal      = {CoRR},      
      volume     = {abs/2511.03169},
      archivePrefix={arXiv},
      primaryClass={cs.AI},
      url={https://arxiv.org/abs/2511.03169}, 
}

@inproceedings{iirmss-ijcai25,
  title = {Most General Explanations of Tree Ensembles},
  author = {Izza, Yacine and Ignatiev, Akexey and Rubin, Sasha and Marques-Silva, Joao and Stuckey, Peter J.},
  OPTbooktitle = {Proceedings of the Thirty-Fourth International Joint Conference on Artificial Intelligence, {IJCAI-25}},
  booktitle = {IJCAI-25},
  OPTpublisher = {International Joint Conferences on Artificial Intelligence Organization},
  OPTeditor = {James Kwok},
  pages = {5463--5471},
  year = {2025},
  OPTdoi = {10.24963/ijcai.2025/608},
  OPTurl = {https://doi.org/10.24963/ijcai.2025/608}
}

@article{iirmss-corr25,
      title={Most General Explanations of Tree Ensembles (Extended Version)}, 
      author={Yacine Izza and Alexey Ignatiev and Sasha Rubin and Joao Marques-Silva and Peter J. Stuckey},
      year={2025},
      OPTeprint={2505.10991},
      archivePrefix={arXiv},
      journal      = {CoRR},
      volume={abs/2505.10991}
}

@inproceedings{itk-sat24,
  author    = {Alexey Ignatiev and Zi Li Tan and Christos Karamanos},
  title     = {Towards Universally Accessible {SAT} Technology},
  booktitle = {SAT},
  pages     = {4:1--4:11},
  year      = {2024},
  OPTurl       = {https://doi.org/10.4230/LIPIcs.SAT.2024.16},
  OPTdoi       = {10.4230/LIPICS.SAT.2024.16},
}

@inproceedings{iisms-aaai24,
  author       = {Yacine Izza and
                  Alexey Ignatiev and
                  Peter J. Stuckey and
                  Jo{\~{a}}o Marques{-}Silva},
  OPTeditor       = {Michael J. Wooldridge and
                  Jennifer G. Dy and
                  Sriraam Natarajan},
  title        = {Delivering Inflated Explanations},
  OPTbooktitle    = {Thirty-Eighth {AAAI} Conference on Artificial Intelligence, {AAAI}
                  2024, Thirty-Sixth Conference on Innovative Applications of Artificial
                  Intelligence, {IAAI} 2024, Fourteenth Symposium on Educational Advances
                  in Artificial Intelligence, {EAAI} 2014, February 20-27, 2024, Vancouver,
                  Canada},
  booktitle    = {AAAI},
  pages        = {12744--12753},
  OPTpublisher    = {{AAAI} Press},
  year         = {2024},
  OPTurl          = {https://doi.org/10.1609/aaai.v38i11.29170},
  OPTdoi          = {10.1609/AAAI.V38I11.29170}
}

@inproceedings{hims-aaai23,
  author       = {Xuanxiang Huang and
                  Yacine Izza and
                  Jo{\~{a}}o Marques{-}Silva},
  OPTeditor       = {Brian Williams and
                  Yiling Chen and
                  Jennifer Neville},
  title        = {Solving Explainability Queries with Quantification: The Case of Feature
                  Relevancy},
  OPTbooktitle    = {Thirty-Seventh {AAAI} Conference on Artificial Intelligence, {AAAI}
                  2023, Thirty-Fifth Conference on Innovative Applications of Artificial
                  Intelligence, {IAAI} 2023, Thirteenth Symposium on Educational Advances
                  in Artificial Intelligence, {EAAI} 2023, Washington, DC, USA, February
                  7-14, 2023},
  booktitle    = {AAAI},
  pages        = {3996--4006},
  OPTpublisher    = {{AAAI} Press},
  year         = {2023},
  OPTurl          = {https://doi.org/10.1609/aaai.v37i4.25514},
  OPTdoi          = {10.1609/AAAI.V37I4.25514}
}

@inproceedings{hcmpms-tacas23,
  author       = {Xuanxiang Huang and
                  Martin C. Cooper and
                  Ant{\'{o}}nio Morgado and
                  Jordi Planes and
                  Jo{\~{a}}o Marques{-}Silva},
  OPTeditor       = {Sriram Sankaranarayanan and
                  Natasha Sharygina},
  title        = {Feature Necessity {\&} Relevancy in {ML} Classifier Explanations},
  OPTbooktitle    = {Tools and Algorithms for the Construction and Analysis of Systems
                  - 29th International Conference, {TACAS} 2023, Held as Part of the
                  European Joint Conferences on Theory and Practice of Software, {ETAPS}
                  2022, Paris, France, April 22-27, 2023, Proceedings, Part {I}},
  booktitle    = {TACAS },                
  OPTseries       = {Lecture Notes in Computer Science},
  volume       = {13993},
  pages        = {167--186},
  OPTpublisher    = {Springer},
  year         = {2023}
}

@article{msh-cacm24,
author = {Marques-Silva, Joao and Huang, Xuanxiang},
title = {Explainability Is Not a Game},
year = {2024},
issue_date = {July 2024},
publisher = {Association for Computing Machinery},
address = {New York, NY, USA},
volume = {67},
number = {7},
issn = {0001-0782},
url = {https://doi.org/10.1145/3635301},
doi = {10.1145/3635301},
journal = {Commun. ACM},
month = {jul},
pages = {66–75},
numpages = {10}
}

@article{Grad-CAM-ijcv20,
  author       = {Ramprasaath R. Selvaraju and
                  Michael Cogswell and
                  Abhishek Das and
                  Ramakrishna Vedantam and
                  Devi Parikh and
                  Dhruv Batra},
  title        = {Grad-CAM: Visual Explanations from Deep Networks via Gradient-Based
                  Localization},
  journal      = {Int. J. Comput. Vis.},
  volume       = {128},
  number       = {2},
  pages        = {336--359},
  year         = {2020},
  OPTurl          = {https://doi.org/10.1007/s11263-019-01228-7},
  OPTdoi          = {10.1007/S11263-019-01228-7}
}

@inproceedings{Zisserman-iclr24,
  author       = {Karen Simonyan and
                  Andrea Vedaldi and
                  Andrew Zisserman},
  OPTeditor       = {Yoshua Bengio and
                  Yann LeCun},
  title        = {Deep Inside Convolutional Networks: Visualising Image Classification
                  Models and Saliency Maps},
  OPTbooktitle    = {2nd International Conference on Learning Representations, {ICLR} 2014,
                  Banff, AB, Canada, April 14-16, 2014, Workshop Track Proceedings},
  booktitle    = {ICLR},                
  year         = {2014},
  OPTurl          = {http://arxiv.org/abs/1312.6034}
}

@inproceedings{ms-isola24,
  author       = {Jo{\~{a}}o Marques{-}Silva},
  OPTeditor       = {Tiziana Margaria and
                  Bernhard Steffen},
  title        = {Logic-Based Explainability: Past, Present and Future},
  OPTbooktitle    = {Leveraging Applications of Formal Methods, Verification and Validation.
                  Software Engineering Methodologies - 12th International Symposium,
                  ISoLA 2024, Crete, Greece, October 27-31, 2024, Proceedings, Part
                  {IV}},
   booktitle    = {ISoLA},               
  OPTseries       = {Lecture Notes in Computer Science},
  volume       = {15222},
  pages        = {181--204},
  OPTpublisher    = {Springer},
  year         = {2024}
}

@misc{or-tools,
  key          = {OR-Tools},
  author = {Laurent Perron and Frédéric Didier},
  howpublished = {\url{https://developers.google.com/optimization/cp/cp_solver/}},
  title = {Google Optimization Tools (a.k.a., OR-Tools)},
  note = { v9.12 },
  year = 2025
}

@article{cplex2009v12,
  title={V12. 1: User’s Manual for CPLEX},
  author={Cplex, IBM ILOG},
  journal={International Business Machines Corporation},
  volume={46},
  number={53},
  pages={157},
  year={2009}
}

@misc{minisat-gh,
  key          = {minisat},
  howpublished = {\url{https://github.com/niklasso/minisat}},
  title = {Minisat:  Github version},
  year = 2005
}

@inproceedings{Malik-dac01,
  author       = {Matthew W. Moskewicz and
                  Conor F. Madigan and
                  Ying Zhao and
                  Lintao Zhang and
                  Sharad Malik},
  title        = {Chaff: Engineering an Efficient {SAT} Solver},
  OPTbooktitle    = {Proceedings of the 38th Design Automation Conference, {DAC} 2001,
                  Las Vegas, NV, USA, June 18-22, 2001},
  booktittle = {DAC},                
  pages        = {530--535},
  publisher    = {{ACM}},
  year         = {2001},
  OPTurl          = {https://doi.org/10.1145/378239.379017},
  OPTdoi          = {10.1145/378239.379017}
}

@inproceedings{Nordstrom-ijcai18,
  author       = {Jan Elffers and
                  Jakob Nordstr{\"{o}}m},
  OPTeditor       = {J{\'{e}}r{\^{o}}me Lang},
  title        = {Divide and Conquer: Towards Faster Pseudo-Boolean Solving},
  OPTbooktitle    = {Proceedings of the Twenty-Seventh International Joint Conference on
                  Artificial Intelligence, {IJCAI} 2018, July 13-19, 2018, Stockholm,
                  Sweden},
  booktitle =  {IJCAI},                
  pages        = {1291--1299},
  publisher    = {ijcai.org},
  year         = {2018}
}

@inproceedings{mshjpb-ijcai13,
  author    = {Joao Marques{-}Silva and
               Federico Heras and
               Mikol{\'{a}}s Janota and
               Alessandro Previti and
               Anton Belov},
  title     = {On Computing Minimal Correction Subsets},
  OPTbooktitle = {{IJCAI} 2013, Proceedings of the 23rd International Joint Conference
               on Artificial Intelligence, Beijing, China, August 3-9, 2013},
  booktitle = {IJCAI},
  pages     = {615--622},
  year      = {2013},
  OPTcrossref  = {DBLP:conf/ijcai/2013},
  OPTurl       = {http://www.aaai.org/ocs/index.php/IJCAI/IJCAI13/paper/view/6922},
  bibsource = {dblp computer science bibliography, http://dblp.org}
}

@article{asin2011,
  author    = {Roberto As{\'{\i}}n and
               Robert Nieuwenhuis and
               Albert Oliveras and
               Enric Rodr{\'{\i}}guez{-}Carbonell},
  title     = {Cardinality Networks: a theoretical and empirical study},
  journal   = {Constraints},
  volume    = {16},
  number    = {2},
  pages     = {195--221},
  year      = {2011},
  OPTurl       = {http://dx.doi.org/10.1007/s10601-010-9105-0},
  OPTdoi       = {10.1007/s10601-010-9105-0},
  OPTtimestamp = {Mon, 18 Apr 2011 10:46:20 +0200},
  OPTbiburl    = {http://dblp.uni-trier.de/rec/bib/journals/constraints/AsinNOR11},
  OPTbibsource = {dblp computer science bibliography, http://dblp.org}
}

@article{quinlan-ml86,
  author    = {J. Ross Quinlan},
  title     = {Induction of Decision Trees},
  journal   = {Machine Learning},
  volume    = {1},
  number    = {1},
  pages     = {81--106},
  year      = {1986},
  bibsource = {dblp computer science bibliography, http://dblp.org}
}

@book{breiman-bk84,
  author    = {Leo Breiman and
               J. H. Friedman and
               R. A. Olshen and
               C. J. Stone},
  title     = {Classification and Regression Trees},
  publisher = {Wadsworth},
  year      = {1984},
  isbn      = {0-534-98053-8},
  bibsource = {dblp computer science bibliography, http://dblp.org}
}

@article{rivest-ipl76,
  author    = {Laurent Hyafil and
               Ronald L. Rivest},
  title     = {Constructing Optimal Binary Decision Trees is {NP}-Complete},
  journal   = {Inf. Process. Lett.},
  volume    = {5},
  number    = {1},
  pages     = {15--17},
  year      = {1976}
}

@book{quinlan-book93,
  author    = {J. Ross Quinlan},
  title     = {{C4.5:} Programs for Machine Learning},
  publisher = {Morgan Kaufmann},
  year      = {1993}
}

@article{scikitlearn,
 title={Scikit-learn: Machine Learning in {P}ython},
 author={Fabian Pedregosa and  et al.},
 journal={Journal of Machine Learning Research},
 volume={12},
 pages={2825--2830},
 year={2011}
}

@article{msagl-amai11,
  author    = {Joao Marques{-}Silva and
               Josep Argelich and
               Ana Gra{\c{c}}a and
               In{\^{e}}s Lynce},
  title     = {Boolean lexicographic optimization: algorithms {\&} applications},
  journal   = {Ann. Math. Artif. Intell.},
  volume    = {62},
  number    = {3-4},
  pages     = {317--343},
  year      = {2011}
}

@inproceedings{mpms-ijcai15,
  author    = {Carlos Menc{\'{\i}}a and
               Alessandro Previti and
               Joao Marques{-}Silva},
  title     = {Literal-Based {MCS} Extraction},
  booktitle = {IJCAI},
  pages     = {1973--1979},
  year      = {2015},
  url       = {http://ijcai.org/Abstract/15/280}
}

@inproceedings{IgnatievS21,
  author    = {Alexey Ignatiev and
               Jo{\~{a}}o P. Marques Silva},
  editor    = {Chu{-}Min Li and
               Felip Many{\`{a}}},
  title     = {SAT-Based Rigorous Explanations for Decision Lists},
  booktitle = {SAT},
 OPTbooktitle = {Theory and Applications of Satisfiability Testing - {SAT} 2021 - 24th
               International Conference, Barcelona, Spain, July 5-9, 2021, Proceedings},
  OPTseries    = {Lecture Notes in Computer Science},
  volume    = {12831},
  pages     = {251--269},
  publisher = {Springer},
  year      = {2021},
  OPTurl       = {https://doi.org/10.1007/978-3-030-80223-3\_18},
  OPTdoi       = {10.1007/978-3-030-80223-3\_18},
  OPTtimestamp = {Wed, 21 Jul 2021 15:31:30 +0200},
  OPTbiburl    = {https://dblp.org/rec/conf/sat/IgnatievS21.bib},
  OPTbibsource = {dblp computer science bibliography, https://dblp.org}
}

@inproceedings{audemard-sat13,
  author    = {Gilles Audemard and
               Jean{-}Marie Lagniez and
               Laurent Simon},
  title     = {Improving {Glucose} for Incremental {SAT} Solving with Assumptions:
               Application to {MUS} Extraction},
  booktitle = {SAT},
  pages     = {309--317},
  year      = {2013}
}

@inproceedings{ignatiev-ijcai20,
  author    = {Alexey Ignatiev},
  title     = {Towards Trustable Explainable {AI}},
  booktitle = {{IJCAI}},
  pages     = {5154--5158},
  year      = {2020}
}

@article{lipton-cacm18,
  author    = {Zachary C. Lipton},
  title     = {The mythos of model interpretability},
  journal   = {Commun. {ACM}},
  volume    = {61},
  number    = {10},
  pages     = {36--43},
  year      = {2018}
}

@inproceedings{inms-nips19,
  author    = {Alexey Ignatiev and
               Nina Narodytska and
               Joao Marques{-}Silva},
  title     = {On Relating Explanations and Adversarial Examples},
  booktitle = {NeurIPS},
  year      = {2019},
  pages     = {15857--15867},
  url       = {http://papers.nips.cc/paper/9717-on-relating-explanations-and-adversarial-examples}
}

@inproceedings{guestrin-kdd16b,
  author    = {Marco T{\'{u}}lio Ribeiro and
               Sameer Singh and
               Carlos Guestrin},
  title     = {``{Why} Should {I} Trust You?'': Explaining the Predictions of Any Classifier},
  booktitle = {KDD},
  pages     = {1135--1144},
  year      = {2016},
}

@inproceedings{lundberg-nips17,
  author    = {Scott M. Lundberg and
               Su{-}In Lee},
  title     = {A Unified Approach to Interpreting Model Predictions},
  booktitle = {{NIPS}},
  pages     = {4765--4774},
  year      = {2017}
}

@article{msjm-aij17,
  author       = {Jo{\~{a}}o Marques{-}Silva and
                  Mikol{\'{a}}s Janota and
                  Carlos Menc{\'{\i}}a},
  title        = {Minimal sets on propositional formulae. Problems and reductions},
  journal      = {Artif. Intell.},
  volume       = {252},
  pages        = {22--50},
  year         = {2017},
  OPTurl          = {https://doi.org/10.1016/j.artint.2017.07.005},
  OPTdoi          = {10.1016/J.ARTINT.2017.07.005}
}

@article{inms-corr19,
  author    = {Alexey Ignatiev and
               Nina Narodytska and
               Joao Marques{-}Silva},
  title     = {On Validating, Repairing and Refining Heuristic {ML} Explanations},
  journal   = {CoRR},
  volume    = {abs/1907.02509},
  year      = {2019}
}

@inproceedings{darwiche-ijcai18,
  author    = {Andy Shih and
               Arthur Choi and
               Adnan Darwiche},
  title     = {A Symbolic Approach to Explaining Bayesian Network Classifiers},
  booktitle = {IJCAI},
  pages     = {5103--5111},
  year      = {2018}
}

@inproceedings{inms-aaai19,
  author    = {Alexey Ignatiev and
               Nina Narodytska and
               Joao Marques{-}Silva},
  title     = {Abduction-Based Explanations for Machine Learning Models},
  booktitle = {{AAAI}},
  pages     = {1511--1519},
  year      = {2019}
}

@inproceedings{darwiche-ecai20,
  author    = {Adnan Darwiche and
               Auguste Hirth},
  title     = {On the Reasons Behind Decisions},
  booktitle = {ECAI},
  pages     = {712--720},
  year      = {2020},
  doi       = {10.3233/FAIA200158}
}

@inproceedings{imms-sat18,
  author    = {Alexey Ignatiev and
               Antonio Morgado and
               Joao Marques{-}Silva},
  title     = {{PySAT}: {A} {Python} Toolkit for Prototyping with {SAT} Oracles},
  booktitle = {{SAT}},
  pages     = {428--437},
  year      = {2018}
}

@article{lpmms-cj16,
  author    = {Mark H. Liffiton and
               Alessandro Previti and
               Ammar Malik and
               Joao Marques{-}Silva},
  title     = {Fast, flexible {MUS} enumeration},
  journal   = {Constraints An Int. J.},
  volume    = {21},
  number    = {2},
  pages     = {223--250},
  year      = {2016}
}

@article{imms-jsat19,
  author    = {Alexey Ignatiev and
               Antonio Morgado and
               Joao Marques{-}Silva},
  title     = {{RC2:} an Efficient {MaxSAT} Solver},
  journal   = {J. Satisf. Boolean Model. Comput.},
  volume    = {11},
  number    = {1},
  pages     = {53--64},
  year      = {2019}
}

@inproceedings{junker-aaai04,
  author    = {Ulrich Junker},
  title     = {{QUICKXPLAIN:} Preferred Explanations and Relaxations for Over-Constrained
               Problems},
  booktitle = {{AAAI}},
  pages     = {167--172},
  year      = {2004}
}

@article{Olson2017PMLB,
    author={Olson, Randal S. and La Cava, William and Orzechowski, Patryk and Urbanowicz, Ryan J. and Moore, Jason H.},
    title={PMLB: a large benchmark suite for machine learning evaluation and comparison},
    journal={BioData Mining},
    year=2017,
    volume=10,
    number=1,
    pages=36
}

@misc{Dua2019 ,
   author = "Dua, Dheeru and Graff, Casey",
   year = "2017",
   title = "{UCI} Machine Learning Repository",
   url = "http://archive.ics.uci.edu/ml",
   institution = "University of California, Irvine, School of Information and Computer Sciences" 
}

@inproceedings{guestrin-aaai18,
  author    = {Marco T{\'{u}}lio Ribeiro and
               Sameer Singh and
               Carlos Guestrin},
  title     = {Anchors: High-Precision Model-Agnostic Explanations},
  booktitle = {{AAAI}},
  pages     = {1527--1535},
  year      = {2018}
}

@inproceedings{guestrin-kdd16a,
  author    = {Tianqi Chen and
               Carlos Guestrin},
  title     = {{XGBoost}: {A} Scalable Tree Boosting System},
  booktitle = {{KDD}},
  pages     = {785--794},
  year      = {2016}
}

@inproceedings{bjorner-tacas08,
  author    = {Leonardo Mendon{\c{c}}a de Moura and
               Nikolaj Bj{\o}rner},
  title     = {{Z3:} An Efficient {SMT} Solver},
  booktitle = {{TACAS}},
  pages     = {337--340},
  year      = {2008}
}

@proceedings{sat-handbook,
  editor    = {Armin Biere and
               Marijn Heule and
               Hans van Maaren and
               Toby Walsh},
  booktitle = {Handbook of Satisfiability: Second Edition},
  publisher = {IOS Press},
  series    = {Frontiers in Artificial Intelligence and Applications},
  volume    = {336},
  publisher = {{IOS} Press},
  year      = {2021}
}

@article{reiter-aij87,
  author    = {Raymond Reiter},
  title     = {A Theory of Diagnosis from First Principles},
  journal   = {Artif. Intell.},
  volume    = {32},
  number    = {1},
  year      = {1987},
  pages     = {57-95}
}

@article{lozinskii-jetai03,
  author    = {Elazar Birnbaum and
               Eliezer L. Lozinskii},
  title     = {Consistent subsets of inconsistent systems: structure and
               behaviour},
  journal   = {J. Exp. Theor. Artif. Intell.},
  volume    = {15},
  number    = {1},
  year      = {2003},
  pages     = {25-46}
}

@inproceedings{marquis-kr20,
  author    = {Gilles Audemard and
               Fr{\'{e}}d{\'{e}}ric Koriche and
               Pierre Marquis},
  title     = {On Tractable {XAI} Queries based on Compiled Representations},
  booktitle = {{KR}},
  pages     = {838--849},
  year      = {2020}
}

@article{miller-aij19,
  author    = {Tim Miller},
  title     = {Explanation in artificial intelligence: Insights from the social sciences},
  journal   = {Artif. Intell.},
  volume    = {267},
  pages     = {1--38},
  year      = {2019}
}

@article{iims-corr20,
  author    = {Yacine Izza and
               Alexey Ignatiev and
               Joao Marques{-}Silva},
  title     = {On Explaining Decision Trees},
  journal   = {CoRR},
  volume    = {abs/2010.11034},
  year      = {2020},
  url       = {},
  archivePrefix = {arXiv},
  eprint    = {2010.11034},
  bibsource = {dblp computer science bibliography, https://dblp.org}
}

@inproceedings{icshms-cp20,
  author    = {Alexey Ignatiev and
               Martin C. Cooper and
               Mohamed Siala and
               Emmanuel Hebrard and
               Joao Marques{-}Silva},
  OPTeditor    = {Helmut Simonis},
  title     = {Towards Formal Fairness in Machine Learning},
  OPTbooktitle = {Principles and Practice of Constraint Programming - 26th International
               Conference, {CP} 2020, Louvain-la-Neuve, Belgium, September 7-11,
               2020, Proceedings},
  booktitle = {CP},
  OPTseries    = {Lecture Notes in Computer Science},
  OPTvolume    = {12333},
  pages     = {846--867},
  OPTpublisher = {Springer},
  year      = {2020},
  OPTurl       = {https://doi.org/10.1007/978-3-030-58475-7\_49},
  OPTdoi       = {10.1007/978-3-030-58475-7\_49}
}

@inproceedings{msgcin-nips20,
  author    = {Joao Marques{-}Silva and
               Thomas Gerspacher and
               Martin C. Cooper and
               Alexey Ignatiev and
               Nina Narodytska},
  OPTeditor    = {Hugo Larochelle and
               Marc'Aurelio Ranzato and
               Raia Hadsell and
               Maria{-}Florina Balcan and
               Hsuan{-}Tien Lin},
  title     = {Explaining Naive Bayes and Other Linear Classifiers with Polynomial
               Time and Delay},
  OPTbooktitle = {Advances in Neural Information Processing Systems 33: Annual Conference
               on Neural Information Processing Systems 2020, NeurIPS 2020, December
               6-12, 2020, virtual},
  booktitle = {NeurIPS},
  year      = {2020},
  OPTurl       = {https://proceedings.neurips.cc/paper/2020/hash/eccd2a86bae4728b38627162ba297828-Abstract.html},
  bibsource = {dblp computer science bibliography, https://dblp.org}
}

@inproceedings{inams-aiia20,
  author    = {Alexey Ignatiev and
               Nina Narodytska and
               Nicholas Asher and
               Joao Marques{-}Silva},
  title     = {From Contrastive to Abductive Explanations and Back Again},
  booktitle = {AI*IA},
  pages     = {335--355},
  year      = {2020}
}

@inproceedings{lakkaraju-aies20a,
  author    = {Dylan Slack and
               Sophie Hilgard and
               Emily Jia and
               Sameer Singh and
               Himabindu Lakkaraju},
  OPTeditor    = {Annette N. Markham and
               Julia Powles and
               Toby Walsh and
               Anne L. Washington},
  title     = {Fooling {LIME} and {SHAP:} Adversarial Attacks on Post hoc Explanation
               Methods},
  OPTbooktitle = {{AIES} '20: {AAAI/ACM} Conference on AI, Ethics, and Society, New
               York, NY, USA, February 7-8, 2020},
  booktitle = {AIES},
  pages     = {180--186},
  OPTpublisher = {{ACM}},
  year      = {2020},
  OPTurl       = {https://doi.org/10.1145/3375627.3375830},
  OPTdoi       = {10.1145/3375627.3375830},
  bibsource = {dblp computer science bibliography, https://dblp.org}
}

@inproceedings{lakkaraju-aies20b,
  author    = {Himabindu Lakkaraju and
               Osbert Bastani},
  OPTeditor    = {Annette N. Markham and
               Julia Powles and
               Toby Walsh and
               Anne L. Washington},
  title     = {"How do {I} fool you?": Manipulating User Trust via Misleading Black
               Box Explanations},
  OPTbooktitle = {{AIES} '20: {AAAI/ACM} Conference on AI, Ethics, and Society, New
               York, NY, USA, February 7-8, 2020},
  booktitle = {AIES},
  pages     = {79--85},
  OPTpublisher = {{ACM}},
  year      = {2020},
  OPTurl       = {https://doi.org/10.1145/3375627.3375833},
  OPTdoi       = {10.1145/3375627.3375833},
  bibsource = {dblp computer science bibliography, https://dblp.org}
}

@inproceedings{nsmims-sat19,
  author    = {Nina Narodytska and
               Aditya A. Shrotri and
               Kuldeep S. Meel and
               Alexey Ignatiev and
               Joao Marques{-}Silva},
  OPTeditor    = {Mikol{\'{a}}s Janota and
               In{\^{e}}s Lynce},
  title     = {Assessing Heuristic Machine Learning Explanations with Model Counting},
  OPTbooktitle = {Theory and Applications of Satisfiability Testing - {SAT} 2019 - 22nd
               International Conference, {SAT} 2019, Lisbon, Portugal, July 9-12,
               2019, Proceedings},
  booktitle = {SAT},
  OPTseries    = {Lecture Notes in Computer Science},
  OPTvolume    = {11628},
  pages     = {267--278},
  OPTpublisher = {Springer},
  year      = {2019},
  OPTurl       = {https://doi.org/10.1007/978-3-030-24258-9\_19},
  OPTdoi       = {10.1007/978-3-030-24258-9\_19},
  bibsource = {dblp computer science bibliography, https://dblp.org}
}

@inproceedings{iplms-cp15,
  author    = {Alexey Ignatiev and
               Alessandro Previti and
               Mark H. Liffiton and
               Joao Marques{-}Silva},
  title     = {Smallest {MUS} Extraction with Minimal Hitting Set Dualization},
  booktitle = {{CP}},
  pages     = {173--182},
  year      = {2015}
}

@inproceedings{imms-ecai16,
  author    = {Alexey Ignatiev and
               Antonio Morgado and
               Joao Marques{-}Silva},
  title     = {Propositional Abduction with Implicit Hitting Sets},
  booktitle = {{ECAI}},
  pages     = {1327--1335},
  year      = {2016}
}

@inproceedings{bacchus-cp11,
  author    = {Jessica Davies and
               Fahiem Bacchus},
  title     = {Solving {MAXSAT} by Solving a Sequence of Simpler {SAT} Instances},
  booktitle = {{CP}},
  pages     = {225--239},
  year      = {2011}
}

@article{darwiche-jair02,
  author    = {Adnan Darwiche and
               Pierre Marquis},
  title     = {A Knowledge Compilation Map},
  journal   = {J. Artif. Intell. Res.},
  volume    = {17},
  pages     = {229--264},
  year      = {2002}
}

@inproceedings{ansotegui-sat04,
  author    = {Carlos Ans{\'{o}}tegui and
               Felip Many{\`{a}}},
  title     = {Mapping Problems with Finite-Domain Variables into Problems with Boolean
               Variables},
  booktitle = {{SAT}},
  year      = {2004},
  pages     = {1--15},
}

@article{morgado-cj13,
  author    = {Antonio Morgado and
               Federico Heras and
               Mark H. Liffiton and
               Jordi Planes and
               Joao Marques{-}Silva},
  title     = {Iterative and core-guided {MaxSAT} solving: {A} survey and assessment},
  journal   = {Constraints An Int. J.},
  volume    = {18},
  number    = {4},
  pages     = {478--534},
  year      = {2013}
}

@article{een-entcs03,
  author    = {Niklas E{\'{e}}n and
               Niklas S{\"{o}}rensson},
  title     = {Temporal induction by incremental {SAT} solving},
  journal   = {Electron. Notes Theor. Comput. Sci.},
  volume    = {89},
  number    = {4},
  pages     = {543--560},
  year      = {2003}
}

@inproceedings{mdms-cp14,
  author    = {Antonio Morgado and
               Carmine Dodaro and
               Joao Marques{-}Silva},
  title     = {Core-Guided {MaxSAT} with Soft Cardinality Constraints},
  booktitle = {{CP}},
  pages     = {564--573},
  year      = {2014}
}

@inproceedings{ims-ijcai21,
  author    = {Yacine Izza and
               Joao Marques{-}Silva},
  title     = {On Explaining Random Forests with {SAT}},
  booktitle = {{IJCAI}},
  pages     = {2584--2591},
  year      = {2021}
}

@inproceedings{iism-aaai22,
  author    = {Alexey Ignatiev and
               Yacine Izza and
               Peter J. Stuckey and
               Jo{\~{a}}o Marques{-}Silva},
  title     = {Using MaxSAT for Efficient Explanations of Tree Ensembles},
  booktitle = {AAAI} ,  
  OPTbooktitle = {Thirty-Sixth {AAAI} Conference on Artificial Intelligence, {AAAI}
               2022, Thirty-Fourth Conference on Innovative Applications of Artificial
               Intelligence, {IAAI} 2022, The Twelveth Symposium on Educational Advances
               in Artificial Intelligence, {EAAI} 2022 Virtual Event, February 22
               - March 1, 2022},
  pages     = {3776--3785},
  publisher = {{AAAI} Press},
  year      = {2022},
  OPTurl       = {https://ojs.aaai.org/index.php/AAAI/article/view/20292},
  OPTtimestamp = {Mon, 11 Jul 2022 16:09:32 +0200},
  OPTbiburl    = {https://dblp.org/rec/conf/aaai/IgnatievIS022.bib}
}

@inproceedings{ansotegui-cp13,
  author    = {Carlos Ans{\'{o}}tegui and
               Maria Luisa Bonet and
               Joel Gab{\`{a}}s and
               Jordi Levy},
  title     = {Improving {WPM2} for (Weighted) Partial {MaxSAT}},
  booktitle = {{CP}},
  pages     = {117--132},
  year      = {2013}
}

@article{friedman-tas01,
  author    = {Jerome H. Friedman},
  title     = {Greedy Function Approximation: A Gradient Boosting Machine},
  journal   = {The Annals of Statistics},
  number    = {5},
  pages     = {1189--1232},
  publisher = {Institute of Mathematical Statistics},
  volume    = {29},
  year      = {2001}
}

@book{Papadimitriou94,
  author    = {Christos H. Papadimitriou},
  title     = {Computational complexity},
  publisher = {Addison-Wesley},
  year      = {1994}
}

@article{Breiman01,
  author    = {Leo Breiman},
  title     = {Random Forests},
  journal   = {Mach. Learn.},
  volume    = {45},
  number    = {1},
  pages     = {5--32},
  year      = {2001}
}

@inproceedings{gm-smt15,
  author    = {Marco Gario and Andrea Micheli},
  title     = {{PySMT}: a Solver-Agnostic Library for Fast Prototyping of
               {SMT}-Based Algorithms},
  booktitle = {SMT Workshop},
  year      = {2015}
}

@inproceedings{previti-sat17,
  author    = {Alessandro Previti and
               Carlos Menc{\'{\i}}a and
               Matti J{\"{a}}rvisalo and
               Joao Marques{-}Silva},
  title     = {Improving {MCS} Enumeration via Caching},
  booktitle = {{SAT}},
  pages     = {184--194},
  year      = {2017}
}

@article{ansotegui-aij13,
  author    = {Carlos Ans{\'{o}}tegui and
               Maria Luisa Bonet and
               Jordi Levy},
  title     = {SAT-based MaxSAT algorithms},
  journal   = {Artif. Intell.},
  volume    = {196},
  pages     = {77--105},
  year      = {2013}
}

@inproceedings{msjb-cav13,
  author    = {Joao Marques{-}Silva and
               Mikol{\'{a}}s Janota and
               Anton Belov},
  title     = {Minimal Sets over Monotone Predicates in Boolean Formulae},
  booktitle = {CAV},
  pages     = {592--607},
  year      = {2013},
}

@inproceedings{msi-aaai22,
  author    = {Joao Marques-Silva and
               Alexey Ignatiev},
  title     = {Delivering Trustworthy {AI} through formal {XAI}},
  OPTbooktitle = {The Thirty-Sixth {AAAI} Conference on Artificial Intelligence, {AAAI}
               2022, ...},
  booktitle = {AAAI},
  OPTpages     = {},
  OPTpublisher = {{AAAI} Press},
  year      = {2022},
  bibsource = {dblp computer science bibliography, https://dblp.org}
}

@inproceedings{hiicams-aaai22,
  author =       {Xuanxiang Huang and Yacine Izza and Alexey Ignatiev and Martin C. Cooper and Nicholas Asher and Joao Marques-Silva},
  title =        {Tractable Explanations for {d-DNNF} Classifiers},
  booktitle = {{AAAI}},
  year =      {2022},
  OPTpages =     {},
  month =     {February},
  OPTnote =      {In Press},
  OPTannote =    {}
}

@inproceedings{hiims-kr21,
  author    = {Xuanxiang Huang and
               Yacine Izza and
               Alexey Ignatiev and
               Jo{\~{a}}o Marques{-}Silva},
  OPTeditor    = {Meghyn Bienvenu and
               Gerhard Lakemeyer and
               Esra Erdem},
  title     = {On Efficiently Explaining Graph-Based Classifiers},
  OPTbooktitle = {Proceedings of the 18th International Conference on Principles of
               Knowledge Representation and Reasoning, {KR} 2021, Online event, November
               3-12, 2021},
  booktitle = {KR},
  pages     = {356--367},
  year      = {2021},
  OPTurl       = {https://doi.org/10.24963/kr.2021/34},
  OPTdoi       = {10.24963/kr.2021/34},
  bibsource = {dblp computer science bibliography, https://dblp.org}
}

@inproceedings{msgcin-icml21,
  author    = {Joao Marques{-}Silva and
               Thomas Gerspacher and
               Martin C. Cooper and
               Alexey Ignatiev and
               Nina Narodytska},
  OPTeditor    = {Marina Meila and
               Tong Zhang},
  title     = {Explanations for Monotonic Classifiers},
  OPTbooktitle = {Proceedings of the 38th International Conference on Machine Learning,
               {ICML} 2021, 18-24 July 2021, Virtual Event},
  booktitle = {ICML},
  OPTseries    = {Proceedings of Machine Learning Research},
  OPTvolume    = {139},
  pages     = {7469--7479},
  OPTpublisher = {{PMLR}},
  year      = {2021},
  OPTurl       = {http://proceedings.mlr.press/v139/marques-silva21a.html},
  bibsource = {dblp computer science bibliography, https://dblp.org}
}

@inproceedings{barcelo-nips20,
  author    = {Pablo Barcel{\'{o}} and
               Mika{\"{e}}l Monet and
               Jorge P{\'{e}}rez and
               Bernardo Subercaseaux},
  OPTeditor    = {Hugo Larochelle and
               Marc'Aurelio Ranzato and
               Raia Hadsell and
               Maria{-}Florina Balcan and
               Hsuan{-}Tien Lin},
  title     = {Model Interpretability through the lens of Computational Complexity},
  OPTbooktitle = {Advances in Neural Information Processing Systems 33: Annual Conference
               on Neural Information Processing Systems 2020, NeurIPS 2020, December
               6-12, 2020, virtual},
  booktitle = {NeurIPS},
  year      = {2020},
  OPTurl       = {https://proceedings.neurips.cc/paper/2020/hash/b1adda14824f50ef24ff1c05bb66faf3-Abstract.html},
  bibsource = {dblp computer science bibliography, https://dblp.org}
}

@incollection{BarrettSST09,
  author    = {Clark W. Barrett and
               Roberto Sebastiani and
               Sanjit A. Seshia and
               Cesare Tinelli},
  OPTeditor    = {Armin Biere and
               Marijn Heule and
               Hans van Maaren and
               Toby Walsh},
  title     = {Satisfiability Modulo Theories},
  booktitle = {Handbook of Satisfiability},
  series    = {Frontiers in Artificial Intelligence and Applications},
  volume    = {185},
  pages     = {825--885},
  OPTpublisher = {{IOS} Press},
  year      = {2009}
}
}{
  \input{paper.bibl}
}


\end{document}